\newcommand{\Rmnum}[1]{\expandafter\@slowromancap\romannumeral #1@}
\newtheorem{lemma}{Lemma}
\newtheorem{remark}{Remark}
\newtheorem{assumption}{Assumption}
\newtheorem{proposition}{Proposition}
\newenvironment{shrinkfix}
{ \bgroup
\addtolength\abovedisplayshortskip{-0.5ex}
\addtolength\abovedisplayskip{-0.5ex}
\addtolength\belowdisplayshortskip{-0.5ex}
\addtolength\belowdisplayskip{-0.5ex}}
{\egroup\ignorespacesafterend}
 \let\MYoriglatexcaption\caption
 \renewcommand{\caption}[2][\relax]{\MYoriglatexcaption[#2]{#2}}
\begin{document}

    \title{A Unified Framework for Fair Spectral Clustering With Effective Graph Learning 
}
\author{
Xiang~Zhang,~\IEEEmembership{Student~Member,~IEEE,} ~Qiao~Wang,~\IEEEmembership{Senior~Member,~IEEE}

\thanks{The authors are with the School of Information Science and Engineering, Southeast University, Nanjing 210096, China (e-mail: xiangzhang369@seu.edu.cn; qiaowang@seu.edu.cn).
 
}

}

\maketitle

\begin{abstract}
We consider the problem of spectral clustering under group fairness constraints, where samples from each sensitive group are approximately proportionally represented in each cluster. Traditional fair spectral clustering (FSC) methods consist of two consecutive stages, i.e.,  performing fair spectral embedding on a \textit{given} graph and conducting $k$means to obtain discrete cluster labels. However, in practice, the graph is usually unknown, and we need to construct the underlying graph from potentially noisy data, the quality of which inevitably affects subsequent fair clustering performance. Furthermore, performing FSC through separate steps breaks the connections among these steps, leading to suboptimal results. To this end, we first theoretically analyze the effect of the constructed graph on FSC. Motivated by the analysis, we propose a novel graph construction method with a node-adaptive graph filter to learn graphs from noisy data. Then, all independent stages of conventional FSC are integrated into a single objective function, forming an end-to-end framework that inputs raw data and outputs discrete cluster labels. An algorithm is developed to jointly and alternately update the variables in each stage. Finally, we conduct extensive experiments on synthetic, benchmark, and real data, which show that our model is superior to state-of-the-art fair clustering methods.

\end{abstract}

\begin{IEEEkeywords}
Spectral clustering, graph learning, joint optimization, fairness constraints, spectral embedding.

\end{IEEEkeywords}

\section{Introduction}
\label{sec:introduction}
\IEEEPARstart{C}{lustering} is an unsupervised task that aims to group samples with common attributes and separate dissimilar samples. It has numerous practical applications, e.g., image processing  \cite{lei2018superpixel}, remote sensing \cite{xie2018unsupervised}, and bioinformatics \cite{kiselev2019challenges}. Existing clustering methods include $k$means \cite{likas2003global}, spectral clustering (SC) \cite{von2007tutorial}, hierarchical clustering \cite{xie2020hierarchical}. Among these methods, SC is a graph-based method utilizing the topological information of data and usually obtains better performance when handling complex high-dimensional datasets \cite{von2007tutorial}.

Recently, many concerns have arisen regarding fairness when performing clustering algorithms. For example, in loan applications, applicants are grouped into several clusters to support cluster-specific loan policies. However, clustering results could be affected by sensitive factors such as race and gender \cite{chouldechova2018frontiers}, even if the clustering algorithms do not consider sensitive attributes. Unfair clustering can lead to discriminatory outcomes, such as a specific group being more likely to be denied a loan. Therefore, there is a growing need for fair clustering methods unbiased by sensitive attributes. In the literature,  \cite{chierichetti2017fair} first introduces the notion of group fairness into clustering.  As illustrated in Fig.\ref{Fig-illustration}, given data with sensitive attributes, fair clustering aims to partition the data into clusters, where samples in every sensitive group are approximately proportionally represented in each cluster \cite{chierichetti2017fair}. In this way, every sensitive group is treated fairly. Following \cite{chierichetti2017fair}, \cite{bera2019fair} generalizes the definition of fair clustering, \cite{backurs2019scalable} proposes a scalable fair clustering algorithm, and \cite{ziko2021variational} applies the variational method to fair clustering. Furthermore, fairness constraints are also incorporated into deep clustering methods that leverage deep neural networks to partition data \cite{zeng2023deep,li2020deep}.

\begin{figure}[t] 
    \centering
       \includegraphics[width=0.9\linewidth]{./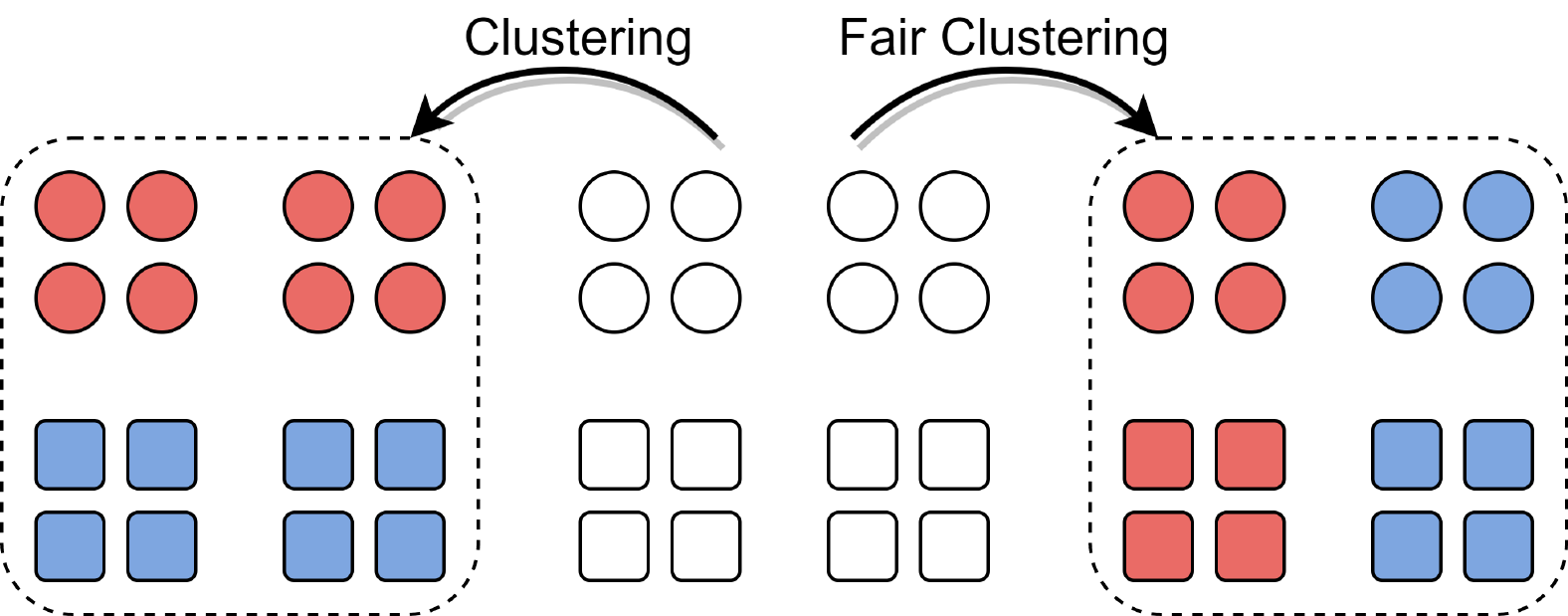}
    	\caption{The illustration of fair clustering. Given data points of two sensitive groups (squares and circles),  fair clustering partitions them into two clusters (blue and red), where samples of each group are proportionally represented in each cluster.
    	}
    	\label{Fig-illustration}
\end{figure}


Here, we consider the problem of fair spectral clustering (FSC). The first work discussing FSC is \cite{kleindessner2019guarantees}, which designs a fairness constraint for SC according to the definition of group fairness in  \cite{chierichetti2017fair}. A scalable algorithm is proposed in \cite{wang2023scalable} to solve the model in \cite{kleindessner2019guarantees}, and \cite{li2023spectral} considers group fairness of normalized-cut graph partitioning. In \cite{gupta2021protecting}, individual fairness is considered in SC, which utilizes a representation graph to encode sensitive attributes and requires the neighbors of a node in the graph to be approximately proportionally represented in the clusters. Recently, \cite{wang2022ifig} proposes a fair multi-view SC method. However, existing FSC models are built on a \textit{given} similarity graph, which may not be available in practice. Thus, before proceeding with FSC algorithms, it is necessary to construct a graph from raw data. That is, a complete FSC method typically consists of three subtasks. First, a similarity graph is constructed from raw data. Second, spectral embedding under fairness constraints is performed on the graph to obtain a continuous cluster indicator matrix. Third, conducting $k$means on the continuous matrix to obtain discrete cluster labels.

Although feasible, the traditional FSC paradigm still has the following problems to be addressed.
(\romannumeral1) The quality of the constructed graph inevitably affects subsequent fair clustering performance, but this has not been explored theoretically. Additionally, noisy observations make it more difficult to construct accurate graphs. (\romannumeral2) The post-processing discretization $k$means is sensitive to the initial cluster centers and could cause far deviation from the true discrete results \cite{huang2013spectral}. (\romannumeral3) Performing the subtasks separately breaks the connections among graph construction, fair spectral embedding, and discretization, leading to suboptimal fair clustering results. For example, independent graph construction may fail to find the optimal graph for fair clustering \cite{kang2018unified}. Furthermore, independent spectral embedding is inferior to joint optimization of graph construction and spectral embedding \cite{kang2017twin}.

To address the above issues, we propose a unified FSC model based on group fairness, which is an end-to-end framework that inputs observed data and outputs discrete cluster labels. Specifically, we first theoretically analyze how the estimated graphs affect FSC, demonstrating that accurate graphs are crucial to improve fair clustering performance. Motivated by the analysis, we propose a novel graph construction method to learn graphs from observed data under the smoothness assumption. Our approach incorporates a node-adaptive graph filter to denoise and produce smooth signals from potentially noisy data. Second, we introduce the group fairness constraint into traditional spectral embedding to guarantee fair clustering results. Third, we utilize spectral rotation instead of $k$means as the discretization operation since it can produce discrete results with smaller discrepancies from the true labels. Finally, all subtasks are integrated into a single objective function to avoid the sub-optimality caused by separate optimization.


In summary, the contributions of this study are as follows.

\begin{enumerate}
\item[$\bullet$]
We theoretically analyze the impact of the estimated graph on fair clustering errors, justifying the necessity of an accurate graph to improve FSC performance. Motivated by the analysis, we propose a graph construction method to learn accurate graphs as inputs to FSC.

\item[$\bullet$]
We propose a  unified FSC model integrating graph construction, fair spectral embedding, and discretization into a single objective function. Our model is an end-to-end framework that inputs observed data and outputs discrete fair clustering results and a similarity graph.

\item[$\bullet$]
We develop an algorithm to solve the objective function of our model.  Compared with separate optimization, our algorithm updates all variables jointly and alternately, leading to an overall optimal solution for all subtasks.

\item[$\bullet$]
We conduct extensive experiments on synthetic, benchmark, and real data to test the proposed FSC model. Experimental results demonstrate that our model outperforms state-of-the-art fair clustering models. 

\end{enumerate}

\textbf{Organization:} The rest of this paper is organized as follows. Section \ref{sec:related work} presents some related works. Background information is introduced in Section \ref{sec:background}.  We propose our unified FSC framework in Section \ref{sec:formulation}. Then, the proposed algorithm is provided in Section \ref{sec:algorithm}. We conduct experiments to test the proposed FSC method in Section \ref{sec:Experiments}. Finally, concluding remarks are presented in Section \ref{sec:Conclusion}.

\textbf{Notations:}
Throughout this paper, vectors, matrices, and sets are written in bold lowercase, bold uppercase letters, and calligraphic uppercase letters,  respectively.  Given a matrix $\mathbf{B}$, $\mathbf{B}_{[i,:]}, \mathbf{B}_{[:,j]}$, and $  \mathbf{B}_{[ij]}$ denote the $i$-th row, the $i$-th column, and the $(i,j)$ entry of $\mathbf{B}$, respectively. $\mathbf{B}\geq0$ means all elements of $\mathbf{B}$ are non-negative. Furthermore,  $\mathrm{diag}(\mathbf{B})$  and $\mathrm{diag}_{\mathrm{0}}(\mathbf{B})$ mean converting the diagonal elements of $\mathbf{B}$ to a vector and  setting the diagonal entries of  $\mathbf{B}$ to zeros. The vectors $\mathbf{1}$, $\mathbf{0}$, and matrix $\mathbf{I}$ represent all-one vectors, all-zero vectors, and identity matrices, respectively. Moreover, $\lVert \cdot \rVert_{\mathrm{F}}$, $\lVert \cdot \rVert_{1,1}$, and $\lVert \cdot \rVert_{q}$ are the Frobenius norm, element-wise $\ell_1$ norm, and $\ell_q$ norm of a vector (matrix), respectively.  The notations $\dag$, $\circ$, and  $\mathrm{Tr}(\cdot)$ are pseudo inverse, Hadamard product, and trace operator, respectively. Given a set $\mathcal{B}$, $|\mathcal{B}|$ is the number of elements in $\mathcal{B}$. Finally, $\mathbb{R}$ and $\mathbb{S}$ are the domain of real values and symmetric matrices whose dimensions depend on the context.

\section{Related Work}
\label{sec:related work}
\subsection{Graph Learning Methods For (Fair) SC}
Graph learning (GL) aims to infer the graph topology behind observed data, a prerequisite step for (fair) SC when similarity graphs are unavailable. Traditionally, graphs are constructed via some direct rules, such as $k$-nearest-neighborhood ($k$-NN), $\varepsilon$-nearest-neighborhood ($\varepsilon$-NN) \cite{huang2015new}, and sample correlation methods like Pearson correlation (PC). These methods may be limited in capturing similarity relationships between data pairs \cite{peng2023jgsed}. Thus, many works attempt to learn graphs from data adaptively, including the sparse representation (SR) method \cite{elhamifar2013sparse} and the low-rank representation method \cite{liu2012robust}. The emergence of adaptive neighbourhood graph learning (ANGL) \cite{nie2014clustering} provides a new way that uses the probability of two samples being adjacent to measure the similarity between them. In \cite{gao2022possibilistic},  a possibilistic neighbourhood graph is proposed, an improved version of \cite{nie2014clustering}. Recently, with the rise of graph signal processing (GSP) \cite{shuman2013emerging}, many works attempt to learn graphs from the perspective of signal processing. One of the widely-used GSP-based GL methods postulates that signals are smooth over the corresponding graphs \cite{dong2016learning}. Intuitively, a smooth graph signal means the signal values of two connected nodes are similar\cite{kalofolias2016learn}, which is also a fundamental principle of SC \cite{von2007tutorial}. Many methods are dedicated to learning graphs from smooth signals \cite{dong2019learning}. However, limited to our understanding, applying smoothness-based GL to SC has yet to be thoroughly explored, let alone FSC.

\subsection{Unified SC Models}
Many works focus on establishing a unified model for SC, which can be roughly divided into three categories. The first one integrates graph construction and spectral embedding \cite{kang2017twin, nie2020self, nie2014clustering}. They use an independent discretization step as post-processing. The second one is based on a given similarity graph. They integrate spectral embedding and discretization \cite{pang2018spectral, yang2016unified, huang2021new}. The third category integrates all three stages into a single objective function \cite{kang2018unified, peng2023jgsed, han2018discrete, tang2022unified, zhang2022one}. Our model differs from these models in two main ways. (\romannumeral1) Our framework utilizes a new graph construction method. (\romannumeral2) We further consider fairness issues in clustering tasks.


\section{Background}
\label{sec:background}
This section presents background information, including SC under group fairness constraints and spectral rotation.

\subsection{SC Under Group Fairness Constraints}
Given an undirected graph  $\mathcal{G}=\{\mathcal{V},\mathcal{E}\}$ of $D$ vertices, where $\mathcal{V}$ and $\mathcal{E}$ are the sets of vertices and edges of $\mathcal{G}$, respectively, its adjacency matrix $\mathbf{W} \in \mathbb{S}^{D\times D}$ is a symmetric matrix with zero diagonal entries and non-negative off-diagonal elements if the graph has non-negative edge weights and no self-loops. The Laplacian matrix  of $\mathcal{G}$ is defined as  $\mathbf{L} = \mathbf{D} - \mathbf{W}$, where $\mathbf{D}\in\mathbb{S}^{D\times D}$ is a diagonal matrix satisfying $\mathbf{D}_{[ii]} = \sum_{j=1}^D \mathbf{W}_{[ij]}$. Unnormalized SC aims to partition $D$ nodes into $K$ disjoint clusters $\mathcal{C}_1,...,\mathcal{C}_K$, where  $\mathcal{V} = \mathcal{C}_1 \cup ... \cup \mathcal{C}_K $, and $\mathcal{C}_k$ is the set containing nodes in the $k$–th cluster. The problem of unnormalized SC  is equivalent to minimizing the $\mathrm{RatioCut}$ objective function \cite{von2007tutorial}, i.e., 
\begin{shrinkfix}
\begin{align}
\mathrm{RatioCut} (\mathcal{C}_1,...,\mathcal{C}_K)= \sum_{k=1}^K\frac{\mathrm{Cut}(\mathcal{C}_k, \mathcal{V} \setminus \mathcal{C}_k)}{|\mathcal{C}_k|},
    \label{eq-prelim-ratiocut}
\end{align}
\end{shrinkfix}
where $ \mathcal{V} \setminus \mathcal{C}_k$ contains all nodes in $\mathcal{V}$ except those in $\mathcal{C}_k$, and 
\begin{shrinkfix}
\begin{align}
\mathrm{Cut}(\mathcal{C}_k, \mathcal{V} \setminus \mathcal{C}_k) = \sum_{i\in\mathcal{C}_k,j\in\mathcal{V} \setminus \mathcal{C}_k } \mathbf{W}_{[ij]}.
    \label{eq-prelim-cut}
\end{align}
\end{shrinkfix}
Let $\widetilde{\mathbf{U}} \in \mathbb{R}^{D\times K}$ be 
\begin{shrinkfix}
\begin{align}
\widetilde{\mathbf{U}}_{[ik]}= 
\begin{cases}
\frac{1}{\sqrt{|\mathcal{C}_k|}} & i\in\mathcal{C}_k\\
0 & i\notin\mathcal{C}_k
\end{cases}.
    \label{eq-prelim-U}
\end{align}
\end{shrinkfix}
Then, minimizing the $\mathrm{RatioCut}$ objective function \eqref{eq-prelim-ratiocut} is equivalent to solving the following problem \cite{von2007tutorial} 
\begin{shrinkfix}
\begin{align}
 \underset{\widetilde{\mathbf{U}}}{\min}\; \mathrm{Tr}(\widetilde{\mathbf{U}}^{\top}\mathbf{L}\widetilde{\mathbf{U}}), \;\;\mathrm{s.t.}\; \widetilde{\mathbf{U}} \;\text{is of form \eqref{eq-prelim-U}}.
    \label{eq-prelim-5}
\end{align}
\end{shrinkfix}
 Due to the discrete constraint of \eqref{eq-prelim-U}, problem \eqref{eq-prelim-5} is NP-hard. In practice, problem 
 \eqref{eq-prelim-5} is usually relaxed to
\begin{shrinkfix}
\begin{align}
\underset{\mathbf{U}}{\min}\; \mathrm{Tr}(\mathbf{U}^{\top}\mathbf{L}\mathbf{U}), \;\;\mathrm{s.t.}\;\mathbf{U}^{\top}\mathbf{U} = \mathbf{I}, 
    \label{eq-prelim-6}
\end{align}
\end{shrinkfix}
where $\mathbf{U}\in\mathbb{R}^{D\times K}$ is a relaxed continuous clustering label matrix, and $\mathbf{U}^{\top}\mathbf{U} = \mathbf{I}$ is adopted to avoid trivial solutions. 
The process of solving \eqref{eq-prelim-6} is called spectral embedding.  
After obtaining $\mathbf{U}^*$, a common practice is to apply $k$means to the rows of $\mathbf{U}^*$ to yield final discrete clustering labels $\mathbf{Q}$, where $\mathbf{Q} \in\{0,1\}^{D\times K}$ is a binary cluster indicator matrix. The only non-zero element of the $i$-th row of $\mathbf{Q}$ indicates the cluster membership of the $i$-th node of $\mathcal{G}$.

Fair spectral clustering groups the vertices of $\mathcal{G}$ by considering fairness. If the nodes of $\mathcal{G}$ belong to $S$ sensitive groups $\mathcal{D}_1,...,\mathcal{D}_S$, where $\mathcal{D}_s$ contains the nodes of the $s$-th sensitive group, we define the $\mathrm{Balance}$ of cluster $\mathcal{C}_k$ as \cite{chierichetti2017fair}
\begin{shrinkfix}
\begin{align}
\mathrm{Balance}(\mathcal{C}_k) = \underset{s\neq s^{\prime}\in [S]}{\min}\; \frac{\left\lvert \mathcal{D}_{s} \cap \mathcal{C}_k\right\rvert}{\left\lvert \mathcal{D}_{s^{\prime}} \cap \mathcal{C}_k\right\rvert} \in [0,1],
    \label{eq-prelim-7}
\end{align}
\end{shrinkfix}
where $[S]:=\{1,...,S\}$. The higher the $\mathrm{Balance}$ of each cluster, the fairer the clustering \cite{chierichetti2017fair}. It is not difficult to check that $\underset{k\in[K]}{\min}\mathrm{Balance}(\mathcal{C}_k)\leq \underset{s\neq s^{\prime}\in [S]}{\min}\;|\mathcal{D}_s|/|\mathcal{D}_{s^{\prime}}|$. Thus, this notion of fairness is asking for a clustering where \textit{the fraction of different sensitive groups in each cluster is approximately the same as that of the entire dataset $\mathcal{V}$} \cite{kleindessner2019guarantees}, which is also called group fairness. To incorporate this fairness notion into SC, a group-membership vector $\mathbf{f}_s\in \{0,1\}^D$ of $\mathcal{D}_s$ is defined, where $(\mathbf{f}_s)_{[i]} = 1$ if $i\in \mathcal{D}_s$ and $(\mathbf{f}_s)_{[i]} = 0$ otherwise, for $s\in [S]$ and $i\in [D]$. Then, we have the following lemma.
\begin{lemma}
(Fairness constraint as linear constraint on $\widetilde{\mathbf{U}}$ \cite{kleindessner2019guarantees})
    Let $\mathcal{V} = \mathcal{C}_1 \cup ... \cup \mathcal{C}_K $ be a clustering that is
encoded as in \eqref{eq-prelim-U}. We have, for every $k\in[K]$
\begin{shrinkfix}
\begin{align}
\forall s\in[S] : \frac{|\mathcal{D}_s\cap\mathcal{C}_k|}{|\mathcal{C}_k|} = \frac{|\mathcal{D}_s|}{D} \Leftrightarrow \mathbf{F}^{\top}\widetilde{\mathbf{U}} = \mathbf{0}, 
    \label{eq-prelim-lemma-fair}
\end{align}
\end{shrinkfix}
where  $\mathbf{F} \in \mathbb{R}^{D\times (S-1)}$ is a matrix satisfying $\mathbf{F}_{[:,s]} = \mathbf{f}_s - (|\mathcal{D}_s|/D)\cdot\mathbf{1}, s\in [S-1]$. 
    \label{lemma-fair}
\end{lemma}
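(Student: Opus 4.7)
The plan is to verify the equivalence entrywise, by computing $\mathbf{F}^{\top}\widetilde{\mathbf{U}}$ explicitly using the block structure of $\widetilde{\mathbf{U}}$ given in \eqref{eq-prelim-U}, and then matching the result against the ratio condition on the left-hand side.

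First I would fix $s\in[S-1]$ and $k\in[K]$ and expand the $(s,k)$ entry of the product as
\[
(\mathbf{F}^{\top}\widetilde{\mathbf{U}})_{[s,k]} \;=\; \mathbf{f}_{s}^{\top}\widetilde{\mathbf{U}}_{[:,k]} \;-\; \frac{|\mathcal{D}_s|}{D}\,\mathbf{1}^{\top}\widetilde{\mathbf{U}}_{[:,k]}.
\]
Substituting the definition of $\widetilde{\mathbf{U}}$ from \eqref{eq-prelim-U}, only rows $i\in\mathcal{C}_k$ contribute and they all carry the value $1/\sqrt{|\mathcal{C}_k|}$. Hence $\mathbf{f}_s^{\top}\widetilde{\mathbf{U}}_{[:,k]} = |\mathcal{D}_s\cap\mathcal{C}_k|/\sqrt{|\mathcal{C}_k|}$ and $\mathbf{1}^{\top}\widetilde{\mathbf{U}}_{[:,k]} = |\mathcal{C}_k|/\sqrt{|\mathcal{C}_k|} = \sqrt{|\mathcal{C}_k|}$. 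Combining,
\[
(\mathbf{F}^{\top}\widetilde{\mathbf{U}})_{[s,k]} \;=\; \sqrt{|\mathcal{C}_k|}\left(\frac{|\mathcal{D}_s\cap\mathcal{C}_k|}{|\mathcal{C}_k|}-\frac{|\mathcal{D}_s|}{D}\right),
\]
which vanishes iff the proportional-representation identity holds for group $s$ in cluster $k$. Letting $(s,k)$ range over $[S-1]\times[K]$ gives the equivalence for the first $S-1$ groups.

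The main obstacle is the index mismatch: the left-hand side of \eqref{eq-prelim-lemma-fair} is quantified over all $s\in[S]$, while $\mathbf{F}$ has only $S-1$ columns. I would resolve this by invoking the partition property $\mathcal{V}=\bigcup_{s=1}^{S}\mathcal{D}_s$, which gives $\sum_{s=1}^{S}\mathbf{f}_s = \mathbf{1}$ and $\sum_{s=1}^{S}|\mathcal{D}_s| = D$. Summing the identity $|\mathcal{D}_s\cap\mathcal{C}_k|/|\mathcal{C}_k| = |\mathcal{D}_s|/D$ over $s\in[S]$ yields $1=1$ trivially, so the $s=S$ identity is a linear consequence of the other $S-1$ identities. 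Equivalently, the column $\mathbf{f}_S-(|\mathcal{D}_S|/D)\mathbf{1}$ is the negative of the sum of the first $S-1$ columns of $\mathbf{F}$, so appending it would leave the null-space constraint unchanged. This justifies using the compact $(S-1)$-column form of $\mathbf{F}$ and completes the proof.
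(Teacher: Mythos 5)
Your proof is correct. The paper itself states this lemma without proof (it is imported verbatim from Kleindessner et al.\ \cite{kleindessner2019guarantees}), and your entrywise computation of $(\mathbf{F}^{\top}\widetilde{\mathbf{U}})_{[s,k]} = \sqrt{|\mathcal{C}_k|}\,\bigl(|\mathcal{D}_s\cap\mathcal{C}_k|/|\mathcal{C}_k| - |\mathcal{D}_s|/D\bigr)$, together with the observation that the $s=S$ deviation is the negative sum of the first $S-1$ deviations and hence vanishes with them, is exactly the standard argument from that reference.
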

Lemma \ref{lemma-fair} states that the proportional representation of all sensitive attribute samples in each cluster can be interpreted as a linear constraint $\mathbf{F}^{\top}\widetilde{\mathbf{U}} = \mathbf{0}$. Under this fairness constraint, unnormalized SC is equivalent to the following problem  
\begin{shrinkfix}
\begin{align}
 &\underset{\widetilde{\mathbf{U}}}{\min}\; \mathrm{Tr}(\widetilde{\mathbf{U}}^{\top}\mathbf{L}\widetilde{\mathbf{U}}),\;\;\mathrm{s.t.}\; \widetilde{\mathbf{U}} \;\text{is of form \eqref{eq-prelim-U}},\;\mathbf{F}^{\top}\widetilde{\mathbf{U}} = \mathbf{0}.
    \label{eq-prelim-8-1}
\end{align}
\end{shrinkfix}
Similarly, we can relax \eqref{eq-prelim-8-1} to
\begin{shrinkfix}
\begin{align}
 &\underset{\mathbf{U}}{\min}\; \mathrm{Tr}(\mathbf{U}^{\top}\mathbf{L}\mathbf{U}), \;\; \mathrm{s.t.}\;\mathbf{U}^{\top}\mathbf{U} = \mathbf{I}, \;\mathbf{F}^{\top}\mathbf{U} = \mathbf{0}.
    \label{eq-prelim-8}
\end{align}
\end{shrinkfix}
Following traditional SC, existing FSC models perform $k$means on rows of $\mathbf{U}$ to obtain discrete cluster labels $\mathbf{Q}$.

\subsection{Spectral Rotation}
\label{sec:spectral rotation}
Spectral rotation \cite{huang2013spectral} is an alternative to $k$means for obtaining discrete clustering results from continuous labels $\mathbf{U}$, which is formulated as
\begin{shrinkfix}
\begin{align}
        &\underset{\mathbf{Q}, \mathbf{R}}{\min}\; \lVert \mathbf{Q} - \mathbf{U}\mathbf{R}\rVert_{\mathrm{F}}^2,
        &\mathrm{s.t.}\;\mathbf{R}^{\top}\mathbf{R} = \mathbf{I}, \mathbf{Q}\in \mathcal{I},
    \label{eq-form-1}
\end{align}
\end{shrinkfix}
where the set $\mathcal{I}$ contains all discrete cluster indicator matrices, and $\mathbf{R}\in \mathbb{R}^{K\times K}$ is an orthonormal matrix. According to the spectral solution invariance property \cite{huang2013spectral}, if $\mathbf{U}$ is a solution of \eqref{eq-prelim-6}, $\mathbf{U}\mathbf{R}$ is another solution. A suitable $\mathbf{R}$ could facilitate $\mathbf{U}\mathbf{R}$ as close to $\mathbf{Q}$ as possible. In contrast, $k$means is performed directly on $\mathbf{U}$ obtained from spectral embedding, which may far deviate from the real discrete results. Thus, spectral rotation usually achieves superior performance than $k$means  \cite{huang2013spectral}.

\section{Model Formulation}
\label{sec:formulation}


In this section, we first theoretically analyze the impact of the constructed graph on FSC, which justifies an accurate graph for improving FSC performance. Then, we propose a novel graph construction method to learn graphs from potentially noisy observed data. Next, we integrate graph construction, fair spectral embedding, and discretization into an end-to-end framework. Finally, we analyze the connections between our model and existing works.

\subsection{Why We Need An Accurate Graph?}
We first introduce a variant of the stochastic block model \cite{holland1983stochastic} (vSBM) to generate random graphs with cluster structures and sensitive attributes \cite{kleindessner2019guarantees}. This model assumes that there are two or more meaningful ground-truth clusterings of the observed data, and only one of them is fair. Assume that $\mathcal{V}$ comprises $S$ sensitive groups and is partitioned into $K$ clusters such that $|\mathcal{D}_s \cap \mathcal{C}_k|/|\mathcal{C}_k| = \zeta_s, s\in [S], k\in[K]$, for $\zeta_s\in(0,1)$ with $\sum_{s=1}^S \zeta_s = 1$. Based on the clusters and sensitive groups, we construct a random graph by connecting two vertices $i$ and $j$ with a probability $\mathrm{Pr}(i,j)$ that depends on the clusters and sensitive groups of $i$ and $j$. We define     
\begin{shrinkfix}
    \begin{align} 
    \mathrm{Pr}(i,j) =
\begin{cases}
a, & \pi_C(i) =  \pi_C(j), \; \pi_S(i) =  \pi_S(j)\\
b, & \pi_C(i) \neq  \pi_C(j),  \; \pi_S(i) =  \pi_S(j)\\
c, &  \pi_C(i) =  \pi_C(j),  \; \pi_S(i) \neq  \pi_S(j) \\
d, &  \pi_C(i) \neq  \pi_C(j),  \; \pi_S(i) \neq  \pi_S(j),
\end{cases}
    \label{eq-exper-1}
    \end{align}
\end{shrinkfix}
where $\pi_C: [D]\to [K]$ and $\pi_S: [D]\to [S]$ are two functions that assign a node $i\in\mathcal{V}$ to one of the clusters and sensitive groups, respectively. Let $\mathbf{L}^*$ be the real graph Laplacian matrix generated by the vSBM method and $\widehat{\mathbf{L}}$ be the Laplacian matrix estimated by any graph construction method. The matrix $\widehat{\mathbf{L}}$ is used as the input to fair spectral embedding in \eqref{eq-prelim-8}, and spectral rotation is utilized to obtain discrete cluster labels. Our goal is to derive a fair clustering error bound related to the estimation error between $\widehat{\mathbf{L}}$ and $\mathbf{L}^*$. Let us make some assumptions.
\begin{assumption}
    Let $\widehat{\mathbf{U}}$ be a continuous cluster indicator matrix estimated from $\widehat{\mathbf{L}}$ via \eqref{eq-prelim-8}. For a given constant $\epsilon>0$, the $\widehat{\mathbf{Q}}$ and $\widehat{\mathbf{R}}$ estimated by spectral rotation satisfies
\begin{shrinkfix}
    \begin{align} 
        \lVert \widehat{\mathbf{Q}} -  \widehat{\mathbf{U}}\widehat{\mathbf{R}}\rVert_{\mathrm{F}}^2 \leq (1+\epsilon) \underset{\mathbf{Q}\in\mathcal{I}, \mathbf{R}^{\top}\mathbf{R} = \mathbf{I}}{\min}\lVert {\mathbf{Q}} -  \widehat{\mathbf{U}}{\mathbf{R}}\rVert_{\mathrm{F}}^2.
    \label{eq-theo-res2-1}
    \end{align}
\end{shrinkfix}    
    \label{assump-1}
      \vspace{-1em}
\end{assumption}
\begin{assumption}
The ground-truth clustering and sensitive partitions of $\mathcal{V}$ satisfy
\begin{shrinkfix}
    \begin{align} 
    |\mathcal{D}_s| = \frac{D}{S}, \; |\mathcal{C}_k| = \frac{D}{K}, \;\frac{|\mathcal{D}_s\cap\mathcal{C}_k|}{|\mathcal{C}_k|} = \frac{1}{S}.
    \label{eq-theo-res2-2}
    \end{align}
\end{shrinkfix}   
    \label{assump-2}
      \vspace{-1em}
\end{assumption}
Assumption \ref{assump-1} is similar to the $(1+\epsilon)-$approximation of $k$means \cite{lei2015consistency}, which provides the estimation accuracy of spectral rotation. Assumption \ref{assump-2} is the same as that in Theorem 1 of \cite{kleindessner2019guarantees}, which is only made to facilitate theoretical analysis. In practice, Assumption 2 may be violated, which, however, does not affect the effectiveness of FSC algorithms \cite{kleindessner2019guarantees}. Based on the two assumptions, we have the following proposition.

\begin{proposition}  
Let $\mathbf{L}^*$ be the real Laplacian matrix of the random graph generated by the vSBM method with $a>b>c>d$ satisfying $a>r_1 \ln{D}/D$ for some $r_1 >0$, and $\widehat{\mathbf{L}}$ be the estimated Laplacian matrix from observed data. Assume that we run fair spectral embedding \eqref{eq-prelim-8} on  $\widehat{\mathbf{L}}$ and perform $(1+\epsilon)$ spectral rotation \eqref{eq-form-1} to obtain discrete cluster labels. Besides, let $\widehat{\pi}_C(i)$ be the assigned cluster label (after proper permutation) of node $i$, and define $\mathcal{M}_k:= \left\{i\in\mathcal{C}_k:\widehat{\pi}_C(i) \neq k  \right\}$ as the set of misclassified vertices of cluster $k$. Under Assumptions \ref{assump-1}-\ref{assump-2}, for every $r_2>0$, there exist constants $\widehat{C} = \widehat{C}(r_1,r_2)$ and $\widetilde{C} = \widetilde{C}(r_1,r_2)$ such that if
 \begin{shrinkfix}
    \begin{align} 
    \frac{aK^3\ln{D}}{D(c-d)^2} \leq\frac{\widehat{C}}{1+\epsilon},
    \label{eq-theo-res2-3-0}
    \end{align}
    \end{shrinkfix}
then with probability at least $1-D^{-r_2}$, the number of misclassified vertices, $\sum_{k=1}^K {|\mathcal{M}_k|}$, is at most 
 \begin{shrinkfix}
    \begin{align} 
  \underbrace{\frac{\widetilde{C} (1+\epsilon) aK^2\ln{D}}{(c-d)^2 }}_{ \mathrm{related\;to\;the\;vSBM\;}} + \underbrace
{\frac{512(4+2\epsilon)K^2}{D(c-d)^2}\lVert \mathbf{Z}^{\top}\mathbf{L}^*\mathbf{Z} - \mathbf{Z}^{\top}\widehat{\mathbf{L}}\mathbf{Z} 
    \rVert_{\mathrm{F}}^2}_{\mathrm{ related\;to\;graph\; estimation}},
    \label{eq-theo-res2-3}
    \end{align}
    \end{shrinkfix}  
where $\mathbf{Z}\in\mathbb{R}^{D\times (D-S+1)}$ is a matrix whose columns form the orthonormal basis of the nullspace of $\mathbf{F}^{\top}$.
\label{theo-clustering performance}
\end{proposition}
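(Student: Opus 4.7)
My strategy is to reduce the constrained fair eigenproblem to an unconstrained one on the lower-dimensional space $\mathrm{range}(\mathbf{Z})$, apply a Davis-Kahan perturbation bound, and then convert the resulting eigenvector error into a misclassification count via the $(1{+}\epsilon)$-approximation of spectral rotation. Since the columns of $\mathbf{Z}$ are an orthonormal basis of $\mathrm{null}(\mathbf{F}^{\top})$, setting $\mathbf{U}=\mathbf{Z}\mathbf{Y}$ turns \eqref{eq-prelim-8} into the unconstrained bottom-$K$ eigenproblem on $\mathbf{Z}^{\top}\mathbf{L}\mathbf{Z}\in\mathbb{R}^{(D-S+1)\times(D-S+1)}$. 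Accordingly, define $\widehat{\mathbf{U}}=\mathbf{Z}\widehat{\mathbf{Y}}$ from $\mathbf{Z}^{\top}\widehat{\mathbf{L}}\mathbf{Z}$ and the population version $\bar{\mathbf{U}}=\mathbf{Z}\bar{\mathbf{Y}}$ from $\mathbf{Z}^{\top}\bar{\mathbf{L}}\mathbf{Z}$, where $\bar{\mathbf{L}}$ is the expected Laplacian of the vSBM. Because $\mathbf{Z}$ has orthonormal columns, Frobenius distances in $\mathbf{Y}$-space coincide with those in $\mathbf{U}$-space, so all subsequent perturbation analysis is carried out on the $\mathbf{Y}$-side.

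Next, I would invoke a Davis-Kahan bound to produce an orthogonal $\mathbf{O}$ with $\lVert\widehat{\mathbf{Y}}-\bar{\mathbf{Y}}\mathbf{O}\rVert_{\mathrm{F}}^{2}\le C_{1}\lVert\mathbf{Z}^{\top}(\widehat{\mathbf{L}}-\bar{\mathbf{L}})\mathbf{Z}\rVert_{\mathrm{F}}^{2}/\gamma_{K}^{2}$, where $\gamma_{K}$ is the gap between the $K$-th and $(K{+}1)$-th smallest eigenvalues of $\mathbf{Z}^{\top}\bar{\mathbf{L}}\mathbf{Z}$. Split the perturbation as $\widehat{\mathbf{L}}-\bar{\mathbf{L}}=(\widehat{\mathbf{L}}-\mathbf{L}^{*})+(\mathbf{L}^{*}-\bar{\mathbf{L}})$. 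The \emph{sampling} part $\mathbf{Z}^{\top}(\mathbf{L}^{*}-\bar{\mathbf{L}})\mathbf{Z}$ is controlled by the matrix-Bernstein argument of \cite{lei2015consistency,kleindessner2019guarantees}: under $a>r_{1}\ln D/D$ its squared Frobenius norm is of order $aDK\ln D$ with probability at least $1-D^{-r_{2}}$, which is where the $\ln D$ factor and the dependence of $\widehat{C},\widetilde{C}$ on $r_{1},r_{2}$ originate. The \emph{graph-estimation} part $\mathbf{Z}^{\top}(\widehat{\mathbf{L}}-\mathbf{L}^{*})\mathbf{Z}$ is retained as the explicit Frobenius quantity appearing in \eqref{eq-theo-res2-3}. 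For the denominator, Assumption \ref{assump-2} forces $\bar{\mathbf{L}}$ to be a highly symmetric block matrix in $(a,b,c,d)$, and a direct diagonalization on $\mathrm{range}(\mathbf{Z})$ (the population argument behind Theorem 1 of \cite{kleindessner2019guarantees}) gives $\gamma_{K}\ge C_{2}D(c-d)/K$.

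The last step converts the eigenvector bound into an assignment error. Under Assumption \ref{assump-2}, the rows of $\bar{\mathbf{U}}$ take only $K$ distinct values, pairwise separated by $\sqrt{2K/D}$, and this separation is preserved by any right rotation. Letting $\mathbf{R}_{\star}$ be the rotation that aligns $\bar{\mathbf{U}}\mathbf{R}_{\star}$ with the true discrete indicator, plugging the resulting reference pair into Assumption \ref{assump-1} and then invoking the standard misclassification-via-perturbation lemma (e.g., Lemma 5.3 of \cite{lei2015consistency}) gives $\sum_{k=1}^{K}|\mathcal{M}_{k}|\le (4{+}2\epsilon)\lVert\widehat{\mathbf{U}}-\bar{\mathbf{U}}\mathbf{O}\rVert_{\mathrm{F}}^{2}/\delta^{2}$ with $\delta^{2}=2K/D$. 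Substituting the Davis-Kahan bound together with $\gamma_{K}^{2}\gtrsim D^{2}(c-d)^{2}/K^{2}$, and then using the triangle-inequality split $(a+b)^2 \le 2a^2 + 2b^2$ on $\widehat{\mathbf{L}}-\bar{\mathbf{L}}$, yields exactly the two summands of \eqref{eq-theo-res2-3} with the stated numerical constants (the $512(4{+}2\epsilon)$ tracking the product of Davis-Kahan, row-separation, and triangle-inequality factors). Condition \eqref{eq-theo-res2-3-0} is precisely what keeps the first summand strictly below a cluster size, so the misclassification-from-perturbation argument is non-vacuous.

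The main technical obstacle is the population eigengap estimate: proving that projection onto $\mathrm{range}(\mathbf{Z})$ preserves a spectral gap of order $D(c-d)/K$ requires an explicit diagonalization of the block-structured $\bar{\mathbf{L}}$ on the subspace orthogonal to the group-indicator vectors, and is what forces $c-d$ rather than the larger $a-b$ to appear in the denominator of \eqref{eq-theo-res2-3}. Everything else --- Davis-Kahan, the matrix-Bernstein concentration of $\mathbf{L}^{*}$ around $\bar{\mathbf{L}}$, the $(1{+}\epsilon)$-approximation of spectral rotation, and the row-distinctness of $\bar{\mathbf{U}}$ --- is essentially bookkeeping once the correct reduction and eigengap are in place.
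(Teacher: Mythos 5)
Your proposal is correct and follows essentially the same route as the paper's proof: reduce to the unconstrained bottom-$K$ eigenproblem on $\mathbf{Z}^{\top}\mathbf{L}\mathbf{Z}$ using the orthonormality of $\mathbf{Z}$, apply the Davis--Kahan/eigengap bound of order $D(c-d)/K$ (which the paper imports directly from \cite{kleindessner2019guarantees} rather than rederiving), split the perturbation into the vSBM sampling error and the graph-estimation error via $(x+y)^2\le 2x^2+2y^2$, and convert the subspace error into a misclassification count through the $(1+\epsilon)$ spectral-rotation approximation combined with Lemma 5.3 of \cite{lei2015consistency} and the row separation $\sqrt{2K/D}$. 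The only differences are cosmetic (you split the perturbation before rather than after the Davis--Kahan step, and your intermediate constants differ slightly in bookkeeping), so no substantive gap remains.
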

\begin{proof}
The proof is inspired by \cite{kleindessner2019guarantees}, but has two main differences. First, spectral rotation instead of $k$means is used to obtain discrete labels. Second, fair spectral embedding is based on an estimated graph rather than a known graph generated by the vSBM method. See Appendix \ref{appendix2} for details.
    \label{proof-theo-clustering performance}
\end{proof}

According to \cite{kleindessner2019guarantees}, the meaning of ``the number of misclassified vertices is at most $D_m$'' is that there exists a permutation of cluster indices such that the clustering results up to this permutation successfully predict all cluster labels but $D_{m}$ many vertices. Note that the error bound consists of two parts. The first one is caused by the difference between the expected and real graph produced by the vSBM method, which is similar to \cite{kleindessner2019guarantees}. 
The second part is related to the estimation error of graph construction methods. The fair constraint affects clustering performance via $\mathbf{Z}$, which is a matrix determined by sensitive group-membership matrix $\mathbf{F}$. For convenience,  $\mathbf{Z}^{\top}\mathbf{L}\mathbf{Z}$ is dubbed fair graph. Generally, the error bound in \eqref{eq-theo-res2-3} depends on $K$, $D$ and $\epsilon$. If we divide \eqref{eq-theo-res2-3} by $D$, we obtain the bound for the misclassification rate. The first part of the misclassification rate bound tends to zero as $D$ goes to infinity, meaning that if $\mathbf{L}^*$ is exactly estimated (the second part equals to zero), performing FSC via \eqref{eq-prelim-8} and spectral rotation is weakly consistent \cite{kleindessner2019guarantees}. However,  $\mathbf{L}^*$ usually cannot be estimated exactly, causing an additional error for subsequent fair clustering results. If the fair graph estimation error does not increase quadratically as $D$, the second part of the misclassification rate bound will also decay to zero. Proposition \ref{theo-clustering performance} illustrates that a well-estimated graph $\widehat{\mathbf{L}}$, which is close to $\mathbf{L}^*$, brings a small misclassification error bound. Thus, it motivates us to seek a more effective method to construct accurate graphs from observed data.

\subsection{The Proposed Graph Construction Method}
Given $N$ observed data $\mathbf{X}_{o}\in \mathbb{R}^{D\times N}$, we need to infer the underlying similarity graph topology as the input to FSC algorithms. However, contaminated data may lead to poor graph estimation performance,  as indicated in Proposition \ref{theo-clustering performance}, which degrades subsequent fair clustering performance. Therefore, we propose a method to learn graphs from potentially noisy data $\mathbf{X}_o$, which is formulated as 
\begin{shrinkfix}
    \begin{align}
    \underset{\mathbf{L}\in \mathcal{L},\mathbf{X},\bm{\upsilon}>0}{\mathrm{min}}\,\, &\frac{1} {N}\lVert 
    {\mathbf{\Upsilon}} (\mathbf{X}_o- \mathbf{X})\rVert_{\mathrm{F}}^2 + \frac{\xi} {N}\mathrm{Tr}(\mathbf{X}^{\top}\mathbf{L}\mathbf{X}) + \sum_{i=1}^D\frac{1}{\bm{\upsilon}_{[i]}} \notag\\
    &\underbrace{- \mathbf{1}^{\top}\log\left(\mathrm{diag}(\mathbf{L})\right) + {\beta} \lVert \mathrm{diag_0}(\mathbf{L})\rVert^2_{\mathrm{F}}}_{Reg(\mathbf{L})},
    \label{eq-prelim-1}
    \end{align}
\end{shrinkfix}
where  $ \mathcal{L}:= \left\{\mathbf{L}: \mathbf{L}\in \mathbb{S}^{D\times D},\, \mathbf{L}\mathbf{1} = \mathbf{0},\, \mathbf{L}_{[ij]}\leq 0,\,\, i \neq j \right\}$ contains all Laplacian matrices. Moreover, $\xi$ and $\beta$ are parameters, and $\bm{\upsilon}\in \mathbb{R}^{D}$ is a vector of adaptive weights. We let $\mathbf{\Upsilon} := \mathrm{diag}(\sqrt{\bm{\upsilon}})$, where $\sqrt{\bm{\upsilon}} = (\sqrt{\bm{\upsilon}_{[1]}},...,\sqrt{\bm{\upsilon}_{[D]}})^{\top}$. Eq.\eqref{eq-prelim-1} is a joint model of denoising and smoothness-based GL \cite{kalofolias2016learn}.

\textit{1) Denoising: } If $\mathbf{L}$ is fixed, the problem \eqref{eq-prelim-1} becomes
\begin{shrinkfix}
    \begin{align}
    \underset{\mathbf{X},\bm{\upsilon}}{\mathrm{min}}\,\, \frac{1} {N}\lVert {\mathbf{\Upsilon}}(\mathbf{X}_o- \mathbf{X})\rVert_{\mathrm{F}}^2 + \frac{\xi} {N}\mathrm{Tr}(\mathbf{X}^{\top}\mathbf{L}\mathbf{X})+ \sum_{i=1}^D\frac{1}{\bm{\upsilon}_{[i]}}
    \label{eq-prelim-1-1}.
    \end{align}
\end{shrinkfix}
The model is a node-adaptive graph filter, and $\bm{\upsilon}$ represents node weights. Specifically, given node weights $\bm{\upsilon}$, we have
\begin{shrinkfix}
    \begin{align}
    \underset{\mathbf{X}}{\mathrm{min}}\,\, \lVert {\mathbf{\Upsilon}}(\mathbf{X}_o- \mathbf{X})\rVert_{\mathrm{F}}^2 + {\xi}\mathrm{Tr}(\mathbf{X}^{\top}\mathbf{L}\mathbf{X})
    \label{eq-prelim-1-2}.
    \end{align}
\end{shrinkfix}
Taking the derivative
of \eqref{eq-prelim-1-2} and setting it to zero, we obtain
\begin{shrinkfix}
    \begin{align}
     \mathbf{X} = \left(\mathbf{\Upsilon}^{\top}\mathbf{\Upsilon} + {\xi}\mathbf{L}\right)^{-1}\mathbf{\Upsilon}^{\top}\mathbf{\Upsilon}\mathbf{X}_o 
     = {\left(\mathbf{I}+ {\xi}(\mathbf{\Upsilon}^{\top}\mathbf{\Upsilon})^{-1}\mathbf{L}\right)^{-1}}\mathbf{X}_o.  
     \label{eq-prelim-1-3}
    \end{align}
\end{shrinkfix}
We let $\mathbf{K} := \left(\mathbf{I}+ {\xi}(\mathbf{\Upsilon}^{\top}\mathbf{\Upsilon})^{-1}\mathbf{L}\right)^{-1}$, which is positive definite and has eigen-decomposition $\mathbf{K} = \mathbf{\Theta}\mathbf{\Lambda}\mathbf{\Theta}^{\top}$ with eigenvalues matrix $\mathbf{\Lambda}$ and eigenvectors matrix $\mathbf{\Theta}$. Moreover, $\mathbf{\Lambda} = \mathrm{diag}\left(\frac{1}{1+\xi \lambda_1}, ...., \frac{1}{1+\xi \lambda_D}\right)$, where $0 = \lambda_1\leq,...., \leq\lambda_D$ are the eigenvalues of $(\mathbf{\Upsilon}^{\top}\mathbf{\Upsilon})^{-1}\mathbf{L}$. From the perspective of graph spectral filtering (GFT) \cite{shuman2013emerging}, $\mathbf{K}\mathbf{X}_o = \mathbf{\Theta}\mathbf{\Lambda}\mathbf{\Theta}^{\top} \mathbf{X}_o$ can be interpreted as that the observed graph signals (columns of $\mathbf{X}_o$) are first transformed to the graph frequency domain via $\mathbf{\Theta}^{\top}$, attenuated GFT coefficients according to $\mathbf{\Lambda}$, and transformed back to the nodal domain via $\mathbf{\Theta}$. It is observed from $\mathbf{\Lambda}$ that the graph filter $\mathbf{K}$ is low-pass since the attenuation is stronger for larger eigenvalues. Thus, the graph filter can suppress the high-frequency component of raw data $\mathbf{X}_o$, alleviating the noises on the graph to output the ``noiseless" signal $\mathbf{X}$.

Our graph filter $\mathbf{K}$ differs from the  Auto-Regressive graph filter $\left(\mathbf{I}+ {\xi}\mathbf{L}\right)^{-1} $ \cite{li2019label} in that we assign each node an individual weight $\bm{\upsilon}_{[i]}, i=1,..., D$. The reason for using $\bm{\upsilon}$ is that the measurement noise of different nodes may be heterogeneous. If the $i$-th node signal (the  $i$-th row of $\mathbf{X}_o$) has a small noise scale, a large $\bm{\upsilon}_{[i]}$ should be assigned to the fidelity term of node $i$ in \eqref{eq-prelim-1-1} to ensure $\mathbf{X}_{[i,:]}$ is are close to the corresponding observation $(\mathbf{X}_o)_{[i,:]}$ \cite{pilavci2021graph}. When we cannot know the noise scale a priori, we can adaptively learn $\bm{\upsilon}$ from the data. Specifically, given $\mathbf{X}$, the problem \eqref{eq-prelim-1-1} becomes  
\begin{shrinkfix}
    \begin{align}
    \underset{\bm{\upsilon}>0}{\mathrm{min}}\,\, \frac{1} {N} \sum_{i=1}^D \bm{\upsilon}_{[i]} \lVert(\mathbf{X}_o)_{[i,:]} - \mathbf{X}_{[i,:]} \rVert_2^2 + \frac{1}{\bm{\upsilon}_{[i]}}
    \label{eq-prelim-1-4}.
    \end{align}
\end{shrinkfix}
Intuitively, solving \eqref{eq-prelim-1-4} will assign a large $\bm{\upsilon}_{[i]}$ to node $i$ if $\mathbf{X}_{[i,:]}$ is close to $(\mathbf{X}_o)_{[i,:]}$, as expected.

\textit{2) Graph learning: } If we have obtained the ``noiseless" signals $\mathbf{X}$ via the graph filter $\mathbf{K}$, the problem \eqref{eq-prelim-1} becomes
\begin{shrinkfix}
    \begin{align}
    \underset{\mathbf{L}\in \mathcal{L}}{\mathrm{min}}\,\, \frac{\xi} {N}\mathrm{Tr}(\mathbf{X}^{\top}\mathbf{L}\mathbf{X})+ {Reg(\mathbf{L})}.
    \label{eq-prelim-1-5}
    \end{align}
\end{shrinkfix}
The first Laplacian quadratic term of \eqref{eq-prelim-1-5} is equivalent to
\begin{shrinkfix}
    \begin{align}
    \frac{1}{N}\mathrm{Tr}(\mathbf{X}^{\top}\mathbf{L}\mathbf{X}) = \frac{1}{N} \sum_{n=1}^N \sum_{i,j=1}^D \mathbf{W}_{[ij]}\left(\mathbf{X}_{[in]}- \mathbf{X}_{[jn]}\right)^2,
    \label{eq-prelim-1-5-1}
    \end{align}
\end{shrinkfix}
which measures the average smoothness of data $\mathbf{X}$ over the graph $\mathbf{L}$ \cite{kalofolias2016learn}. The second term of \eqref{eq-prelim-1-5} contains regularizers that endow the learned graphs with desired properties. The  $\log$-degree term is to control node degree, and the Frobenius norm term is to control graph sparsity. Our model \eqref{eq-prelim-1-5} can learn a graph suitable for graph-based clustering tasks for the following reasons. (\romannumeral1) It is observed from \eqref{eq-prelim-1-5-1} that minimizing the smoothness is to seek a graph whose similar vertices (node signals) are closely connected, which is consistent with the fundamental principle of SC. (\romannumeral2) The $\log$-degree term can avoid isolated nodes, which is crucial for SC, especially for normalized SC \cite{von2007tutorial}. (\romannumeral3) The Frobenius norm term can lead to a sparse graph, which may remove redundant and noisy edges.

The model \eqref{eq-prelim-1-5} is similar to the ANGL method \cite{nie2014clustering} since both construct graphs by minimizing the smoothness. The main differences lie in three aspects. (\romannumeral1) Our model removes the sum-to-one constraint in the ANGL method\textemdash the degree of each node is forced to be one\textemdash since the constraint makes the output graphs sensitive to noisy points \cite{gao2022possibilistic}. Removing this constraint allows our model to capture more complex similarity relationships. (\romannumeral2) We add a $\log$-degree term to ensure the learned graph has no isolated nodes. (\romannumeral3) The input data of \eqref{eq-prelim-1-5} are those produced by a low-pass graph filter.

\textit{3) Discussion: }
We try to explain why our method is effective in constructing graphs from observed data. If data $\mathbf{X}_o$ have a clustering structure, they should follow the assumption of cluster and manifold, i.e., the data in the same cluster are close to each other. According to  \cite{pan2021multi}, smooth signals containing low-frequency parts tend to follow the cluster and manifold assumption. Thus, if $\mathbf{L}$ accurately represents the graph behind observed data, the denoising part of our model has two functions. First, it filters out the high-frequency components of the observed graph signals that correspond to noises. Second, it produces smooth signals that have a clearer clustering structure, which could facilitate subsequent clustering. To better illustrate the effectiveness of the node-adaptive filter, Fig.\ref{Fig-tnse} depicts the t-SNE \cite{van2008visualizing} results of our methods on the MNIST dataset, where four clusters correspond to four randomly selected digits. We can see that raw data are entangled together. In contrast, the denoised data $\mathbf{X}$ by the graph filter are clearly separated, meaning that the denoising part of our model can produce cluster-friendly signals. From the perspective of GL, our model \eqref{eq-prelim-1-5} learns a graph minimizing the smoothness of data, i.e., the nodes corresponding to similar signals are closely connected. Thus, the learned graph can effectively capture similarity relationships between data and preserve clustering structures. Consequently, the denoising operation and the smoothness-based GL can enhance each other collaboratively to bring a high-quality graph for subsequent fair clustering tasks.

\begin{figure}[t] 
    \centering
       \includegraphics[width=0.99\linewidth]{./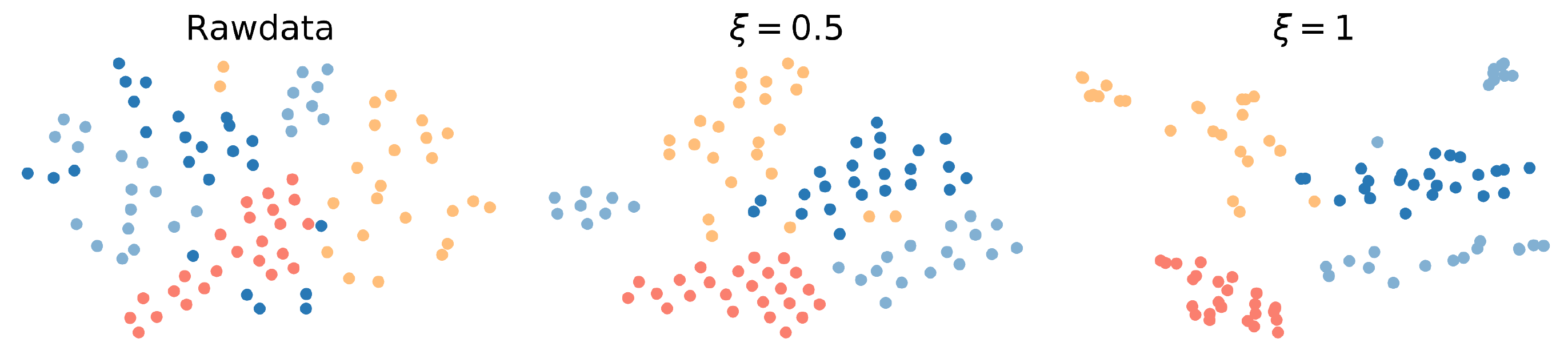}
    	\caption{The t-SNE results of MNIST with different $\xi$ values. 
    	}
    	\label{Fig-tnse}
\end{figure}

\begin{figure*}[t] 
    \centering
       \includegraphics[width=0.99\linewidth]{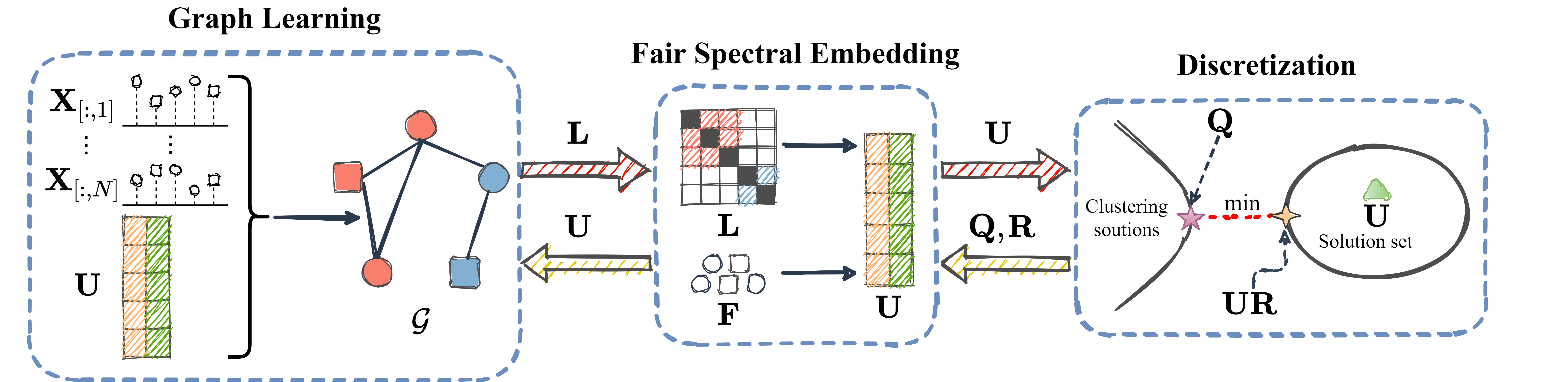}
    	\caption{The illustration of the proposed framework. 
    	}
    	\label{Fig-flowchart}
\end{figure*}

\subsection{The Unified FSC Model}
In this subsection, we build an end-to-end FSC framework that inputs observed data $\mathbf{X}_o$ and node attributes $\mathbf{F}$ and directly outputs discrete cluster labels.  As shown in Fig.\ref{Fig-flowchart}, our model consists of four modules, i.e., denoising, graph learning, fair spectral embedding, and discretization. First, we construct graphs from the observed data by the proposed method \eqref{eq-prelim-1}. Once we obtain $\mathbf{L}$, the Laplacian matrix together with $\mathbf{F}$ can be directly used to perform fair spectral embedding \eqref{eq-prelim-8} to obtain continuous clustering label matrix $\mathbf{U}$. Finally, we leverage spectral rotation \eqref{eq-form-1} instead of $k$means to obtain discrete cluster labels. In addition to the reason for superior performance as stated in Section \ref{sec:spectral rotation}, we utilize spectral rotation since it can be flexibly integrated into an end-to-end framework. Integrating all the above subtasks into a  single objective function, we obtain
\begin{shrinkfix}
\begin{align}
       & \underset{\mathbf{X},\mathbf{L}, \bm{\upsilon},\mathbf{Y},\mathbf{R},\mathbf{Q}}{\mathrm{min}}\,\, \frac{1} {N}\lVert 
    {\mathbf{\Upsilon}} (\mathbf{X}_o- \mathbf{X})\rVert_{\mathrm{F}}^2 + \frac{\xi}{N}\mathrm{Tr}(\mathbf{X}^{\top}\mathbf{L}\mathbf{X}) + Reg(\mathbf{L}) \notag\\
       &\;\;\;\;\;\;\;\;\;\;\;\;\;\;\;\;+ \sum_{i=1}^D\frac{1}{\bm{\upsilon}_{[i]}} + \mu \mathrm{Tr}(\mathbf{U}^{\top}\mathbf{L}\mathbf{U})+ \gamma \lVert \mathbf{Q} - \mathbf{U}\mathbf{R}\rVert_{\mathrm{F}}^2\notag\\
    & \mathrm{s.t.}\; \mathbf{L}\in \mathcal{L},\bm{\upsilon}>0, \mathbf{U}^{\top}\mathbf{U} = \mathbf{I}, \mathbf{F}^{\top}\mathbf{{U}} = \mathbf{0}, \mathbf{R}^{\top}\mathbf{R} = \mathbf{I}, \mathbf{Q} \in \mathcal{I},
    \label{eq-formulation-final}
\end{align}
\end{shrinkfix}
where $\mu$ and $\gamma$ are two parameters. All modules are not simply put together, but are bridged by two Laplacian quadratic terms, i.e., $\frac{\xi}{N}\mathrm{Tr}(\mathbf{X}^{\top}\mathbf{L}\mathbf{X})$ and $\mu \mathrm{Tr}(\mathbf{U}^{\top}\mathbf{L}\mathbf{U})$. First, $\frac{\xi}{N}\mathrm{Tr}(\mathbf{X}^{\top}\mathbf{L}\mathbf{X})$ can be viewed as the graph Tikhonov regularizer of the denoising task \eqref{eq-prelim-1-2} to output smooth signals \cite{li2019label}. On the other hand, it measures smoothness in the GL task to capture the similarity relationships between data. Second,  $\mu \mathrm{Tr}(\mathbf{U}^{\top}\mathbf{L}\mathbf{U})$ together with $\mathbf{F}^{\top}\mathbf{U} = \mathbf{0}$ performs fair spectral embedding as input to the discretization. It is also used to impose structural constraints on the constructed graph, which is discussed in the next subsection. The four modules are coupled with each other to achieve the overall optimal results for all subtasks.

To better understand how the fairness constraint works, we introduce a new variable matrix $\mathbf{Y}\in\mathbb{R}^{(D-S+1)\times K}$ and let $\mathbf{U} = \mathbf{Z}\mathbf{Y}$, where $\mathbf{Z}$ is the matrix defined in Proposition \ref{theo-clustering performance}. The matrix $\mathbf{F}$ encodes sensitive information, as does $\mathbf{Z}$. Then, problem \eqref{eq-formulation-final} can be rephrased in term of $\mathbf{Y}$ as 
\begin{shrinkfix}
\begin{align}
       & \underset{\mathbf{X},\mathbf{L}, \bm{\upsilon},\mathbf{Y},\mathbf{R},\mathbf{Q}}{\mathrm{min}}\,\, \frac{1} {N}\lVert 
    {\mathbf{\Upsilon}} (\mathbf{X}_o- \mathbf{X})\rVert_{\mathrm{F}}^2 + \frac{\xi}{N}\mathrm{Tr}(\mathbf{X}^{\top}\mathbf{L}\mathbf{X}) + Reg(\mathbf{L}) \notag\\
       &\;\;\;\;\;\;\;\;\;\;\;\;\;\;\;\;+ \sum_{i=1}^D\frac{1}{\bm{\upsilon}_{[i]}}+ \mu \mathrm{Tr}(\mathbf{Y}^{\top}\mathbf{Z}^{\top}\mathbf{L}\mathbf{Z}\mathbf{Y})+ \gamma \lVert \mathbf{Q} - \mathbf{Z}\mathbf{Y}\mathbf{R}\rVert_{\mathrm{F}}^2\notag\\
    & \mathrm{s.t.}\; \mathbf{L}\in \mathcal{L},\bm{\upsilon}>0, \mathbf{Y}^{\top}\mathbf{Y} = \mathbf{I}, \mathbf{R}^{\top}\mathbf{R} = \mathbf{I}, \mathbf{Q} \in \mathcal{I}.
    \label{eq-formulation-final-Y}
\end{align}
\end{shrinkfix}
In \eqref{eq-formulation-final-Y}, the fairness constraint $ \mathbf{F}^{\top}\mathbf{{U}} = \mathbf{0}$ is removed. We conduct spectral embedding on the fair graph $\mathbf{Z}^{\top}\mathbf{L}\mathbf{Z}$, which encodes graph topology and sensitive information simultaneously, instead of $\mathbf{L}$ to ensure fair clustering. The impact of fairness constraints is discussed in the next subsection.

The basic formulation \eqref{eq-formulation-final} is flexible and has many possible extensions. Here are some examples. (\romannumeral1) We can replace spectral rotation with improved spectral rotation \cite{zhong2023self} to further improve discretization performance. (\romannumeral2)  We can introduce self-weighted features into \eqref{eq-formulation-final} to determine the importance of different features in assigning
cluster labels \cite{nie2019semi}. (\romannumeral3) We can extend \eqref{eq-formulation-final} from unnormalized SC \eqref{eq-prelim-6} to normalized SC \cite{shi2000normalized}. (\romannumeral4) We can also incorporate individual fairness \cite{gupta2021protecting} into our unified model. We place the details of these extensions in the supplementary material for completeness. 

\begin{remark}
The above extensions may improve fair clustering performance. However, we focus on the basic formulation \eqref{eq-formulation-final} since our primary goal is to demonstrate the advantages of the proposed graph construction method and the unified framework rather than to propose a complex FSC model.
    \label{remark-1}
\end{remark}

\subsection{Connections to Existing Works}

\textit{1) Connections to community-based GL models: }If we only focus on GL and fair spectral embedding, Eq.\eqref{eq-formulation-final-Y} becomes
\begin{shrinkfix}
\begin{align}
       & \underset{\mathbf{L}, \mathbf{Y}}{\min}\; \frac{\xi}{N}\mathrm{Tr}(\mathbf{X}^{\top}\mathbf{L}\mathbf{X}) + Reg(\mathbf{L}) +   \mu \mathrm{Tr}(\mathbf{Y}^{\top}\mathbf{Z}^{\top}\mathbf{L}\mathbf{Z}\mathbf{Y})\notag\\
    & \mathrm{s.t.}\; \mathbf{L}\in \mathcal{L}, \mathbf{Y}^{\top}\mathbf{Y} = \mathbf{I},
 \label{eq-formulation-4}
\end{align}
\end{shrinkfix}
where $\mathbf{X}$ is regarded as the ``noiseless" data here. According to Ky Fan’s theorem \cite{fan1949theorem}, we have $
\underset{\mathbf{Y}^{\top}\mathbf{Y} = \mathbf{I}}{\min}\; \mathrm{Tr}(\mathbf{Y}^{\top}\mathbf{Z}^{\top}\mathbf{L}\mathbf{Z}\mathbf{Y})=\sum_{k=1}^K \widetilde{\lambda}_k$, where $\widetilde{\lambda}_k$ is the $k$ smallest eigenvalue of $\mathbf{Z}^{\top}\mathbf{L}\mathbf{Z}$. Thus, the problem \eqref{eq-formulation-4} can be rephrased as 
\begin{shrinkfix}
\begin{align}
       & \underset{\mathbf{L}\in \mathcal{L}}{\min}\; \frac{\xi}{N}\mathrm{Tr}(\mathbf{X}^{\top}\mathbf{L}\mathbf{X}) + Reg(\mathbf{L}) +   \mu \sum_{k=1}^K\widetilde{\lambda}_k.
 \label{eq-formulation-4-1}
\end{align}
\end{shrinkfix}
Note that $\mathbf{Z}^{\top}\mathbf{L}\mathbf{Z}$ is a semi-positive definite matrix, i.e.,  $\widetilde{\lambda}_k \geq 0$. Minimizing \eqref{eq-formulation-4-1} is equivalent to forcing $\sum_{k=1}^K\widetilde{\lambda}_k \to 0$ if $\mu$ is large enough. That is, \eqref{eq-formulation-4-1} encourages the fair graph to have $K$ connected components. Therefore, \eqref{eq-formulation-4-1} can be viewed as the community-based GL model, which has been widely studied. For example, \cite{kumar2020unified} lets the first $K$ smallest eigenvalues of Laplacian matrices be zero to obtain the community structures, which can be relaxed to the last term in \eqref{eq-formulation-4-1}. The works \cite{wu2022effective, nie2014clustering} force the rank of Laplacian matrices to $D-K$, which can also be interpreted as minimizing the sum of the first $K$ eigenvalues. Furthermore, \cite{pircalabelu2020community} adds a term $\mathrm{Tr}(\mathbf{\Xi}^{\top}\mathbf{L}\mathbf{\Xi})$ to impose community constraints, where $\mathbf{\Xi}\mathbf{\Xi}^{\top}$  contains the value 1 for within-community edges only and 0 everywhere else. Although closely related, \eqref{eq-formulation-4} differs from existing community-based GL models in two key aspects.  First, the basic GL models are different.  Our model is based on the smoothness-based GL, while \cite{kumar2020unified, pircalabelu2020community} are based on statistical GL models like Graphical Lasso. Besides, \cite{wu2022effective, nie2014clustering} are based on the ANGL method. Second, our model imposes the community constraint on the fair graph $\mathbf{Z}^{\top}\mathbf{L}\mathbf{Z}$ instead of on $\mathbf{L}$ like existing works.  Thus, the fairness constraint may affect the topology of the learned graph to obtain fair clustering. We will test the impact of the fairness constraint in the experimental section.

\textit{2) Connections to unified SC models: }
If we remove the denoising module and fairness constraint, our model becomes
\begin{shrinkfix}
\begin{align}
       & \underset{\mathbf{L},\mathbf{U},\mathbf{R},\mathbf{Q}}{\min}\; \frac{\xi}{N}\mathrm{Tr}(\mathbf{X}^{\top}\mathbf{L}\mathbf{X}) + Reg(\mathbf{L}) + \mu \mathrm{Tr}(\mathbf{U}^{\top}\mathbf{L}\mathbf{U})\notag\\
       &\;\;\;\;\;\;\;\;\;\;\;+ \gamma \lVert \mathbf{Q} - \mathbf{U}\mathbf{R}\rVert_{\mathrm{F}}^2\notag\\
    & \mathrm{s.t.}\; \mathbf{L}\in \mathcal{L}, \mathbf{U}^{\top}\mathbf{U} = \mathbf{I}, \mathbf{R}^{\top}\mathbf{R} = \mathbf{I}, \mathbf{Q} \in \mathcal{I}.
    \label{eq-ours-nofair}
\end{align}
\end{shrinkfix}
Again, $\mathbf{X}$ is treated as the observed data. The model \eqref{eq-ours-nofair} is an end-to-end SC model. Here, we discuss the connections between our model and those unified SC models integrating graph construction, spectral embedding, and discretization. As stated in Remark \ref{remark-1}, we focus on basic formulations without additional extensions. The first model we compare is \cite{kang2018unified}
\begin{shrinkfix}
\begin{align}
&\underset{\mathbf{W},\mathbf{U},\mathbf{Q}, \mathbf{R}} {\min} \lVert \mathbf{X} - \mathbf{W}^{\top} \mathbf{X} \rVert_{\mathrm{F}}^2 + \alpha_{U}  \lVert \mathbf{W} \rVert_{1,1}  + {\mu_U} \mathrm{Tr}(\mathbf{U}^{\top}\mathbf{L}\mathbf{U}) \notag\\ 
&\;\;\;\;\;\;\;\;\;\;\;\;\;+ \gamma_U \lVert \mathbf{Q} -  \mathbf{U} \mathbf{R} \rVert_{\mathrm{F}}^2\notag\\
&\mathrm{s.t.}\; \mathbf{W} \in \mathcal{W},  \mathbf{U}^{\top}\mathbf{U} = \mathbf{I}, \mathbf{R}^{\top}\mathbf{R} = \mathbf{I}, \mathbf{Q} \in \mathcal{I},
\label{eq-USPC-1}
\end{align}
\end{shrinkfix}
where $\alpha_{U}, \mu_{U}$, and $\gamma_{U}$ are constant parameters. Moreover, $ \mathcal{W} = \left\{ \mathbf{W} : \mathbf{W}\in \mathbb{S}^{D\times D}, \mathbf{W}\geq 0,  \mathrm{diag}(\mathbf{W}) = \mathbf{0}\right\}$ is the set containing all adjacency matrices. This is a unified SC model that leverages the sparse representation method \cite{elhamifar2013sparse} to construct graphs, which is different from our GL method.

Another unified SC model \cite{peng2023jgsed, han2018discrete} is concluded as  
\begin{shrinkfix}
\begin{align}
&\underset{\mathbf{W},\mathbf{U},\mathbf{Q}, \mathbf{R}}{\min}\sum_{i,j = 1}^D \lVert \mathbf{X}_{[i,:]} -\mathbf{X}_{[j,:]}\rVert_{2}^2 \mathbf{W}_{[i,j]} + \beta_{J} \mathbf{W}_{[i,j]}^2 \notag\\ 
&\;\;\;\;\;\;\;\;\;\;\;\;+ {\mu_J} \mathrm{Tr}(\mathbf{U}^{\top}\mathbf{L}\mathbf{U}) + \gamma_J \lVert \mathbf{Q} -  \mathbf{U} \mathbf{R} \rVert_{\mathrm{F}}^2\notag\\
&\mathrm{s.t.}\; \mathbf{W}\mathbf{1} = \mathbf{1}, \mathbf{W} \geq 0, \mathbf{L} = \mathbf{D} - \mathbf{W}, \mathbf{U}^{\top}\mathbf{U} = \mathbf{I}, \notag\\
&\;\;\;\;\; \;\;\mathbf{R}^{\top}\mathbf{R} = \mathbf{I}, \mathbf{Q} \in \mathcal{I},
\label{eq-J-1}
\end{align}
\end{shrinkfix}
where $\beta_{J}, \mu_{J}$, and $\gamma_{J}$ are constant parameters. The graph construction method in \eqref{eq-J-1} is the ANGL method \cite{nie2014clustering}. We have discussed the difference between our graph construction method and the ANGL in the previous subsection.

In summary, our model \eqref{eq-ours-nofair} differs from the existing unified SC models \eqref{eq-USPC-1}-\eqref{eq-J-1} mainly in the graph construction method. As Proposition \ref{theo-clustering performance} states, an accurate GL method can boost fair clustering performance. In the experimental section, we develop fair versions of \eqref{eq-USPC-1}-\eqref{eq-J-1} and compare them with our model \eqref{eq-formulation-final} to illustrate the superiority of our model.

\section{Model Optimization}
\label{sec:algorithm}
In this section, we first propose an algorithm for solving \eqref{eq-formulation-final}, followed by convergence and complexity analyses. 
\subsection{Optimization Algorithm}
Our algorithm alternately updates $\mathbf{L}, \mathbf{U}, \mathbf{R}$, $\mathbf{Q}$,  $\mathbf{X}$, and $\bm{\upsilon}$ in \eqref{eq-formulation-final}, i.e., updating one with the others fixed. For clarity, we omit the iteration index here. The following derivations are the updates in one iteration.

\textit{1) Update $\mathbf{L}$: } The sub-problem of updating  $\mathbf{L}$ is 
\begin{shrinkfix}
\begin{align}
       & \underset{\mathbf{L}\in \mathcal{L}}{\min}\; \frac{\xi}{N}\mathrm{Tr}(\mathbf{X}^{\top}\mathbf{L}\mathbf{X}) + Reg(\mathbf{L}) + \mu \mathrm{Tr}(\mathbf{U}^{\top}\mathbf{L}\mathbf{U}).
    \label{eq-opt-1}
\end{align}
\end{shrinkfix}
The problem can be rewritten in terms of $\mathbf{W}$ 
\begin{shrinkfix}
\begin{align}
&\underset{\mathbf{W}\in \mathcal{W}}{\min}\; \frac{1}{2}\lVert \mathbf{W}\circ\mathbf{P}\rVert_{1,1} + Reg_W(\mathbf{W}),
    \label{eq-opt-2}
\end{align}
\end{shrinkfix}
where 
\begin{shrinkfix}
\begin{align}
\mathbf{P}_{[ij]} = \frac{\xi}{N}\lVert \mathbf{X}_{[i,:]} - \mathbf{X}_{[j,:]}\rVert_2^2 + {\mu}\lVert \mathbf{U}_{[i,:]} - \mathbf{U}_{[j,:]}\rVert_2^2,
    \label{eq-opt-2-0}
\end{align}
\end{shrinkfix}
and $Reg_W(\mathbf{W}) = - \mathbf{1}^{\top}\mathrm{log}(\mathbf{W}\mathbf{1}) + {\beta}\lVert \mathbf{W}\rVert_{\mathrm{F}}^2$. By the definition of $\mathcal{W}$, the free variables of $\mathbf{W}$ are the upper triangle elements. Thus, we define a vector $\mathbf{w}\in\mathbb{R}^{P}, P:=\frac{ D(D-1)}{2}$,  satisfying that  $\mathbf{w} = \mathrm{Triu}(\mathbf{W})$, where $\mathrm{Triu}(\cdot): \mathbb{R}^{D\times D}\to\mathbb{R}^P$ is a function that converts the upper triangular elements of a matrix into a vector. Then, the problem \eqref{eq-opt-2} is equivalent to 
\begin{shrinkfix}
\begin{align}
&\underset{\mathbf{w}\geq 0 }{\min}\; \mathbf{p}^{\top}\mathbf{w} -  \mathbf{1}^{\top}\log(\mathbf{S}\mathbf{w}) + 2\beta \lVert \mathbf{w}\rVert_2^2,
    \label{eq-opt-2-1}
\end{align}
\end{shrinkfix}
where $\mathbf{p} = \mathrm{Triu}(\mathbf{P}$),  $\mathbf{S}\in \mathbb{R}^{D\times P}$ is a linear operator satisfying $\mathbf{S}\mathbf{w} = \mathbf{W}\mathbf{1}$ \cite{kalofolias2016learn}. The problem \eqref{eq-opt-2-1} is convex, and we employ the algorithm in \cite{saboksayr2021accelerated} to solve the problem. The complete algorithm flow is presented in the supplementary materials. After obtaining the estimated ${\mathbf{w}}$, we let ${\mathbf{W}} = \mathrm{iTriu}({\mathbf{w}})$, where $\mathrm{iTriu}(\cdot):\mathbb{R}^P\to \mathbb{R}^{D\times D}$ is the inverse $\mathrm{Triu}$ operation. The operation $\mathrm{iTriu}({\mathbf{w}})$ converts ${\mathbf{w}}$ into an adjacency matrix, where ${\mathbf{w}}$ corresponds to the upper triangle elements of ${\mathbf{W}}$. Finally, we calculate the Laplacian matrix from ${\mathbf{W}}$ and feed it into subsequent updates of other variables.

\textit{2) Update $\mathbf{U}$: } The sub-problem of updating $\mathbf{U}$ is 
\begin{shrinkfix}
\begin{align}
&\underset{\mathbf{U}}{\min}\;\mu \mathrm{Tr}\left(\mathbf{U}^{\top}\mathbf{L}\mathbf{U}\right) + \gamma \lVert \mathbf{Q} - \mathbf{U}\mathbf{R}\rVert_{\mathrm{F}}^2 \notag\\
    & \mathrm{s.t.}\; \mathbf{U}^{\top}\mathbf{U} = \mathbf{I}, \mathbf{F}^{\top}\mathbf{U}= \mathbf{0}. 
    \label{eq-opt-3}
\end{align}
\end{shrinkfix}
Like \eqref{eq-formulation-final-Y}, \eqref{eq-opt-3} can be cast into a problem of variable $\mathbf{Y}$
\begin{shrinkfix}
\begin{align}
&
\underset{\mathbf{Y}^{\top}\mathbf{Y} = \mathbf{I}}{\min}\;\mu\mathrm{Tr}\left(\mathbf{Y}^{\top}\mathbf{Z}^{\top}\mathbf{L}\mathbf{Z}\mathbf{Y}\right) + \gamma \lVert \mathbf{Q} - \mathbf{Z}\mathbf{Y}\mathbf{R}\rVert_{\mathrm{F}}^2 \notag \\
\Leftrightarrow & \underset{\mathbf{Y}^{\top}\mathbf{Y} = \mathbf{I}}{\min}\;\mu\mathrm{Tr}\left(\mathbf{Y}^{\top}\mathbf{Z}^{\top}\mathbf{L}\mathbf{Z}\mathbf{Y}\right) - 2\gamma\mathrm{Tr}(\mathbf{R}\mathbf{Q}^{\top}\mathbf{Z}\mathbf{Y}).
    \label{eq-opt-4}
\end{align}
\end{shrinkfix}
This is a typical quadratic optimization problem with orthogonal constraints. Let $\phi(\mathbf{Y})$ be the objective function of \eqref{eq-opt-4}. We have that $\phi(\mathbf{Y})$ is differential, and $ 
\nabla_{\mathbf{Y}}\,\phi(\mathbf{Y}) = 2\mu \mathbf{Z}^{\top}\mathbf{L}\mathbf{Z}\mathbf{Y} - 2\gamma \mathbf{Z}^{\top}\mathbf{Q}\mathbf{R}^{\top}$. Thus, the problem can be efficiently solved via the algorithm in \cite{wen2013feasible}. After obtaining $\mathbf{Y}$, we let $\textbf{U} = \textbf{Z}\textbf{Y}$.

\textit{3) Update $\mathbf{R}$: } The sub-problem of updating $\mathbf{R}$ is 
\begin{shrinkfix}
\begin{align}
&\underset{\mathbf{R}^{\top}\mathbf{R} = \mathbf{I}}{\min}\; \gamma \lVert \mathbf{Q} - \mathbf{U}\mathbf{R}\rVert_{\mathrm{F}}^2 \notag\\
\Leftrightarrow&\underset{\mathbf{R}^{\top}\mathbf{R} = \mathbf{I}}{\max}\; \mathrm{Tr}(\mathbf{Q}^{\top}\mathbf{U}\mathbf{R}).
    \label{eq-opt-5}
\end{align}
\end{shrinkfix}
It is the orthogonal Procrustes problem with a closed-form solution \cite{schonemann1966generalized}. Assuming that $\mathbf{\Theta}_{L}$ and $\mathbf{\Theta}_{R}$ are the left and right matrices of SVD of $\mathbf{Q}^{\top}\mathbf{U}$, the solution to \eqref{eq-opt-5} is \cite{schonemann1966generalized} 
\begin{shrinkfix}
\begin{align}
\mathbf{R} = \mathbf{\Theta}_{R}\mathbf{\Theta}_{L}^{\top}.
    \label{eq-opt-5-1}
\end{align}
\end{shrinkfix}

\textit{4) Update $\mathbf{Q}$: } The sub-problem of updating $\mathbf{Q}$ is 
\begin{shrinkfix}
\begin{align}
&\underset{\mathbf{Q}\in \mathcal{I}}{\min}\; \gamma \lVert \mathbf{Q} - \mathbf{U}\mathbf{R}\rVert_{\mathrm{F}}^2\notag\\
\Leftrightarrow&\underset{\mathbf{Q}\in \mathcal{I}}{\min}\; \gamma\mathrm{Tr}(\mathbf{Q}^{\top}\mathbf{Q}) -2\gamma \mathrm{Tr}(\mathbf{Q}^{\top}\mathbf{U}\mathbf{R}) \notag\\
\Leftrightarrow&\underset{\mathbf{Q}\in \mathcal{I}}{\max}\;  \mathrm{Tr}(\mathbf{Q}^{\top}\mathbf{U}\mathbf{R}).
    \label{eq-opt-6}
\end{align}
\end{shrinkfix}
The optimal solution to \eqref{eq-opt-6} is as follows:
\begin{shrinkfix}
\begin{align}
\mathbf{Q}_{[ik]} = 
\begin{cases}
    1 & k = \mathrm{argmax}_{j\in[K]} \;(\mathbf{U}\mathbf{R})_{[ij]},\\
    0 & \mathrm{others}.
\end{cases}
    \label{eq-opt-8}
\end{align}
\end{shrinkfix}

\textit{5) Update $\mathbf{X}$: } The sub-problem of updating $\mathbf{X}$ is \eqref{eq-prelim-1-1}, which has a closed solution \eqref{eq-prelim-1-3}. However, matrix inversion is computationally expensive with complexity $\mathcal{O}(D^3)$. Fortunately, $\left(\mathbf{\Upsilon}^{\top}\mathbf{\Upsilon} + {\xi}\mathbf{L}\right)^{-1}$ is symmetric, sparse, and positive definite. We can hence solve \eqref{eq-prelim-1-1} efficiently using conjugate gradient (CG) algorithm without matrix inverse \cite{axelsson1986rate}.

\textit{6) Update $\bm{\upsilon}$: } The sub-problem of updating  $\bm{\upsilon}$ is \eqref{eq-prelim-1-4}. Taking the derivative of \eqref{eq-prelim-1-4} and setting it to zero, we have
\begin{shrinkfix}
\begin{align}
\bm{\upsilon}_{[i]} = \frac{\sqrt{N}}{\lVert (\mathbf{X}_o)_{[i,:]} - \mathbf{X}_{[i,:]}\rVert_2}, \;\;i=1,...,D.
    \label{eq-opt-9}
\end{align}
\end{shrinkfix}

It is observed that the updates of $\mathbf{L}$, $\mathbf{U}$, $\mathbf{R}$, $\mathbf{Q}$, $\mathbf{X}$, and $\bm{\upsilon}$ are coupled with each other. Updating one variable depends on the other variables, leading to an overall optimal solution. The complete procedure is presented in Algorithm \ref{alg:1}.

\begin{algorithm}[t] 
\caption{The algorithm for problem \eqref{eq-formulation-final}} 
\begin{algorithmic}[1] 
\REQUIRE  Data matrix $\mathbf{X}_o \in \mathbb{R}^{D\times N}$, sensitive attributes related matrix $\mathbf{F}$ or $\mathbf{Z}$, the number of clusters $K$, model parameters $\xi, \beta, \mu$, and $\gamma$\\
\ENSURE  The learned $\mathbf{L}$ and discrete cluster labels  $\mathbf{Q}$
\STATE Initialize $\mathbf{L}$, $\mathbf{U}$, $\mathbf{Q} $, and $\mathbf{R}$ randomly, $\mathbf{X} = \mathbf{X}_o$, and $\bm{\upsilon} = \mathbf{1}$\\

\WHILE{not converged}
\STATE Calculate $\mathbf{P}$ by \eqref{eq-opt-2-0} and let $\mathbf{p} = \mathrm{Triu(\mathbf{P})}$

\STATE Update $\mathbf{w}$ by solving \eqref{eq-opt-2-1} 

\STATE Convert $\mathbf{W} = \mathrm{iTriu}(\mathbf{w})$ and calculate $\mathbf{L} = \mathbf{D} -\mathbf{W}$

\STATE Update $\mathbf{Y}$ by solving problem
\eqref{eq-opt-4} using the algorithm in \cite{wen2013feasible}, and let $\mathbf{U} = \mathbf{Z}\mathbf{Y}$

\STATE Update $\mathbf{R} = \mathbf{\Theta}_{R}\mathbf{\Theta}_{L}^{\top}$, where  $ \mathbf{\Theta}_{L}$ and  $\mathbf{\Theta}_{R}$ are the left and right matrices of SVD of $\mathbf{Q}^{\top}\mathbf{U}$

\STATE Update $\mathbf{Q}$ via \eqref{eq-opt-8} 

\STATE Update $\mathbf{X}$ by solving \eqref{eq-prelim-1-1}

\STATE Update $\bm{\upsilon}$ using \eqref{eq-opt-9}

\ENDWHILE
\end{algorithmic}
\label{alg:1} 
\end{algorithm}

\subsection{Convergence and Complexity Analysis}
\textit{1) Convergence analysis: } It is challenging to obtain a globally optimal solution to \eqref{eq-formulation-final} since it is not jointly convex for all variables. However, our algorithm for solving each sub-problem can reach its optimal solution. Specifically, when we update $\mathbf{L}$, the problem \eqref{eq-opt-2-0} is convex, and the corresponding algorithm is guaranteed to converge to the global optimum \cite{saboksayr2021accelerated}.  When updating $\mathbf{U}$, we use the algorithm in \cite{wen2013feasible} to solve the problem \eqref{eq-opt-4}, which can converge to the global optimum  \cite{wen2013feasible}. The updates of $\mathbf{Q}$, $\mathbf{R}$ and $\bm{\upsilon}$ have closed-form solutions. Despite updating $\mathbf{X}$ via \eqref{eq-prelim-1-2} has a closed solution, we update $\mathbf{X}$ using CG, which is guaranteed to converge \cite{axelsson1986rate}. In summary, the update of each variable converges in our algorithm. In reality, the whole algorithm converges well, which is verified experimentally in Section \ref{sec:Experiments}.

\textit{2) Complexity analysis: } In one iteration, our algorithm consists of six parts, which we analyze one by one below. As stated in \cite{saboksayr2021accelerated}, the update of $\mathbf{L}$ requires $\mathcal{O}(T_1D^2)$ costs, where $T_1$ is the average number of iterations of updating $\mathbf{w}$. The computational cost can be further reduced if the average number of neighbors per node is fixed; see \cite{kalofolias2018large} and analysis therein. The computational complexity of our algorithm for updating $\mathbf{U}$ is $\mathcal{O}(T_2(DK^2 + K^3))$  according to \cite{wen2013feasible}, where $T_2$ is the average number of iterations of the algorithm in \cite{wen2013feasible}. When updating $\mathbf{R}$, we perform SVD on $\mathbf{Q}^{\top}\mathbf{U}\in \mathbb{R}^{K\times K}$, which costs $\mathcal{O}(K^3)$.  {\color{black} The  updates of $\mathbf{Q}$ and $\bm{\upsilon}$ require $\mathcal{O}(DK^2)$ and $\mathcal{O}(DN)$, respectively.} Finally, the complexity of using CG to update $\mathbf{X}$ is $\mathcal{O}(T_3DN)$, where $T_3$ is the average number of iterations of the CG algorithm.



\section{Experiments}
\label{sec:Experiments}
In this section, we test our proposed model using synthetic, benchmark, and real-world data. First, some experimental setups are introduced.

\subsection{Experimental Setups}

\textit{1) Graph generation: } For synthetic data, we leverage the vSBM method to generate random graphs with sensitive attributes. Specifically, we let $ \zeta_s = \frac{1}{S}, a = 0.8, b = 0.2, c = 0.15$, and $d = 0.05$. After obtaining connections among nodes, we assign each edge a random weight between $[0.1,2]$. Finally, we normalize the edge weights to satisfy $\mathrm{Tr}(\mathbf{L}^*) = D$.

\textit{2) Signal generation: }We generate $N$ observed signals from the following Gaussian distribution  \cite{dong2016learning}
\begin{shrinkfix}
    \begin{align}
        (\mathbf{X}_o)_{[:,n]} \sim \mathcal{N}\left(\mathbf{0}, (\mathbf{L}^*)^{\dag} + \mathbf{\Sigma}_e\right),\;\;n =1,...,N,
    \label{signal-genegration}
    \end{align}
\end{shrinkfix}
where $\mathbf{\Sigma}_e = \mathrm{diag}(\sigma_1^2,...,\sigma_D^2)$ and $\sigma_i$ is the noise scale of the $i$-th node. As stated in \cite{dong2016learning}, signals generated in this way are smooth over the corresponding graph.

\textit{3) Evaluation metrics: } In topology inference, determining whether two vertices are connected can be regarded as a binary classification problem. Thus, we employ the F1-score ($\mathrm{FS}$)  to evaluate classification results
\begin{shrinkfix}
    \begin{align}
   \mathrm{FS} = \frac{2\mathrm{TP}}{2\mathrm{TP} + \mathrm{FN} + \mathrm{FP}},
    \label{FS}
    \end{align}
\end{shrinkfix}
where $\mathrm{TP}$ is true positive rate, $\mathrm{TN}$ is true negative rate, $\mathrm{FP}$ is false positive rate, and $\mathrm{FN}$ is false negative rate. We also use the estimation error (${\mathrm{EE}}$) to evaluate the learned graph 
\begin{shrinkfix}
    \begin{align}
   \mathrm{EE} = \lVert\mathbf{Z}^{\top}\widehat{\mathbf{L}}\mathbf{Z} - \mathbf{Z}^{\top}\mathbf{L}^*\mathbf{Z}\lVert_{\mathrm{F}}.
    \label{RE}
    \end{align}
\end{shrinkfix}
For a fair comparison of $\mathrm{EE}$, we normalize the learned graphs to $\mathrm{Tr}(\widehat{\mathbf{L}}) = D$. For fair clustering, we use the same two metrics as in \cite{kleindessner2019guarantees}: clustering error ($\mathrm{CE}$) and Balance ($\mathrm{Bal}$)
\begin{shrinkfix}
    \begin{align}
      &\mathrm{CE} = \frac{1}{D} \left|\{i: \widehat{\pi}_C(i) \neq {\pi}_C(i), i=1....,D\}\right|,\notag\\
      &\mathrm{Balance}\;(\mathrm{Bal}) = \frac{1}{K} \sum_{k=1}^K \mathrm{Balance}(C_k),
    \label{CE-Bal}
    \end{align}
\end{shrinkfix}
where $ \widehat{\pi}_C(i)$ is the estimated cluster label of node $i$ (after proper permutation), and ${\pi}_C(i)$ is the ground-truth. The metric $\mathrm{Balance}$ measures the average balance of all clusters.

\textit{4) Baselines: }The comparison baselines are list in Table.\uppercase\expandafter{\romannumeral1}. The model Fairlets is the fair version of $k$median \cite{chierichetti2017fair}. Models 3-5 are the implementations of \cite{kleindessner2019guarantees} using different graph construction methods. FGLASSO \cite{tarzanagh2021fair} is the only model that jointly performs graph construction and fair spectral embedding. FSRSC and  FJGSED are the fair versions of unified SC models \eqref{eq-USPC-1} and \eqref{eq-J-1}. The formulations and algorithms for the two models are placed in the supplementary material.

\begin{table}[t]
\renewcommand{\arraystretch}{1}
	\centering
  
	\begin{threeparttable}
        \tabcolsep = 0.6em
	\caption{Comparison baselines}
	{\footnotesize 
	\begin{tabular}{c|c|c|c|c|c}
	\Xhline{1.05pt}
	
 Index  &  Models & Graph-based & Fair  & End-to-End  & GL method  \\
     
    \hline

    1 &$k$means
     &\XSolidBrush  &\XSolidBrush  &\textemdash &\textemdash\\  	
   2 &Fairlets
     &\XSolidBrush  &\Checkmark &\XSolidBrush &\textemdash\\  	
  3   &CorrFSC
     &\Checkmark &\Checkmark &\XSolidBrush &PC\\  	
  4  & KNNFSC
     &\Checkmark &\Checkmark &\XSolidBrush &$k$-NN\\  		
  5   &$\varepsilon$NNFSC
     &\Checkmark &\Checkmark &\XSolidBrush &$\varepsilon$-NN\\  
  6  & FGLASSO
      &\Checkmark &\Checkmark &\XSolidBrush &GLASSO\\ 
 7   & FJGSED
    &\Checkmark &\Checkmark &\Checkmark &ANGL\\ 
 8  &  FSRSC
    &\Checkmark &\Checkmark &\Checkmark &SR\\ 
	\Xhline{1.2pt}

	\end{tabular} 
	}

 \end{threeparttable}
 
 \label{table-baselines-models}
     	\vspace{-1em}
\end{table}

\textit{5) Determination of parameters: } For our model, we first grid-search $\xi$ and $\beta$ corresponding to the best $\mathrm{FS}$ in the range $[0.001,0.1]$ for the graph learning task. Then, parameters $\mu$ and $\gamma$ are selected as those achieving the best $\mathrm{CE}$ in the range $[0.001, 1]$. All parameters of baselines are also selected as those achieving the best $\mathrm{CE}$ values.

  


     



        
        
 

\begin{table*}[t]
\renewcommand{\arraystretch}{1.2}
	\centering
 
	\begin{threeparttable}
        \tabcolsep = 0.5em
	\caption{The results of our model and the compared baselines under different cases.}
	{\small 
	\begin{tabular}{c|c|c|c|c|c|c|c|c|c|c|c|c|c|c|c|c}
	\Xhline{1.05pt}
	

	 & \multicolumn{4}{c|}{$\sigma_i\sim \mathcal{U}(0,0.2), N = 1000$} & \multicolumn{4}{c|}{$\sigma_i\sim \mathcal{U}(0.4,0.6), N = 1000$}& \multicolumn{4}{c|}{$\sigma_i\sim \mathcal{U}(0,0.2), N = 5000$} & \multicolumn{4}{c}{$\sigma_i\sim \mathcal{U}(0.4,0.6), N = 5000$}\\

	\cline{2-17}
  
     & $\mathrm{FS} \uparrow$ & $\mathrm{EE} \downarrow$  & $\mathrm{CE}\downarrow$  & $\mathrm{Bal} \uparrow$ 
     & $\mathrm{FS} \uparrow$ & $\mathrm{EE} \downarrow$ & $\mathrm{CE}\downarrow$ & $\mathrm{Bal} \uparrow$ 
     & $\mathrm{FS} \uparrow$  & $\mathrm{EE} \downarrow$ & $\mathrm{CE}\downarrow$  & $\mathrm{Bal} \uparrow$ 
     & $\mathrm{FS} \uparrow$ & $\mathrm{EE}\downarrow $  & $\mathrm{CE}\downarrow$ & $\mathrm{Bal} \uparrow$  \\
     
    \hline

    $k$means 
    &\textemdash&\textemdash &0.671 &0.191 
    &\textemdash &\textemdash&0.687  &0.149
    &\textemdash&\textemdash &0.635 &0.161 
    &\textemdash&\textemdash &0.667  &0.145	
     \\

    Fairlets  
    &\textemdash&\textemdash &0.658 &0.485 
    &\textemdash&\textemdash &0.665  &0.457 
    &\textemdash&\textemdash &0.611 &0.355     
    &\textemdash&\textemdash &0.623  &0.348\\

     CorrFSC 
     &0.472 &2.858 &0.567 &0.482
     &0.441 &3.016&0.578  &0.705
     &0.630 &2.529 &0.104 &0.874 
     &0.596 & 2.511 &0.156 &0.859\\

     KNNFSC 
     &0.105&\textemdash &0.687 &0.829 
     &0.103&\textemdash &0.703  &0.626
     &0.113 &\textemdash&0.729 &0.628 
     &0.098&\textemdash&0.682 &0.731\\

     EpsNNFSC 
     &0.086&\textemdash &0.729 &0.380 
     &0.094 &\textemdash&0.739  &0.333
     &0.091&\textemdash &0.718 &0.652 
     &0.065&\textemdash &0.724 &0.369\\

    FGLASSO 
    &0.482&3.902 &0.411 &0.616 
    &0.450&3.724 &0.406  &0.646
    &0.587&3.971 &0.291 &0.657 
    &0.574&3.533 &0.271  &\textbf{0.908}\\ 

    FJGSED 
    &0.271&28.159 &0.724 &0.359
    &0.263&22.626 &0.734  &0.240
    &0.325&23.576 &0.604 &0.579 
    &0.293&31.552 &0.734  &0.247\\

    FSRSC 
    &0.374&5.222 &0.724 &0.619
    &0.355&9.671 &0.733  &0.607
    &0.345&5.049 &0.729 &0.766 
    &0.512&10.024 &0.739  &0.663\\

    Ours  
    &\textbf{0.501}&\textbf{2.375} &\textbf{0.286} &\textbf{0.845}
    
    &\textbf{0.474}&\textbf{2.414} &\textbf{0.390} &\textbf{0.801}
    
    &\textbf{0.715}&\textbf{1.691} &\textbf{0.052} &\textbf{0.960} 
    
    &\textbf{0.674}&\textbf{2.174} &\textbf{0.142}  &{0.870}\\

	\Xhline{1.2pt}

	\end{tabular} 
	}

		\begin{tablenotes}
			\footnotesize
			\item $\uparrow$ means that higher value is better and $\downarrow$  means that lower value is better.
           \item $\sigma_i\sim \mathcal{U}(a1,a2), i =1,...,D$, means that the noise scale of the $i$-th node is from the uniform distribution $\mathcal{U}(a1,a2).$
        \end{tablenotes}

 \end{threeparttable}
 
 \label{table-baselines}
     	\vspace{-1em}
\end{table*}

\begin{figure*}[t] 
    \centering
	  \subfloat[Ground-truth]{
       \includegraphics[width=0.2\linewidth]{./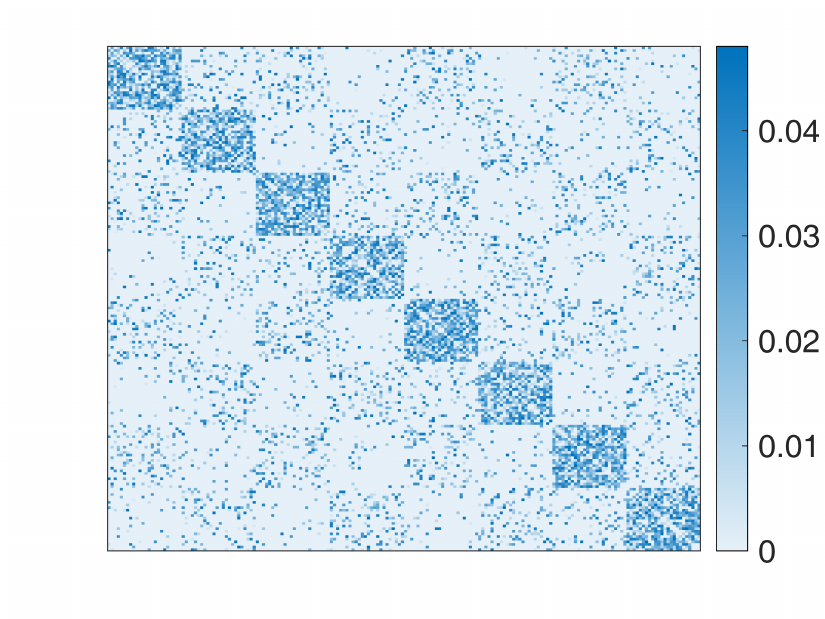}}
       \hspace{2em}
       \subfloat[CorrFSC]{  \includegraphics[width=0.2\linewidth]{./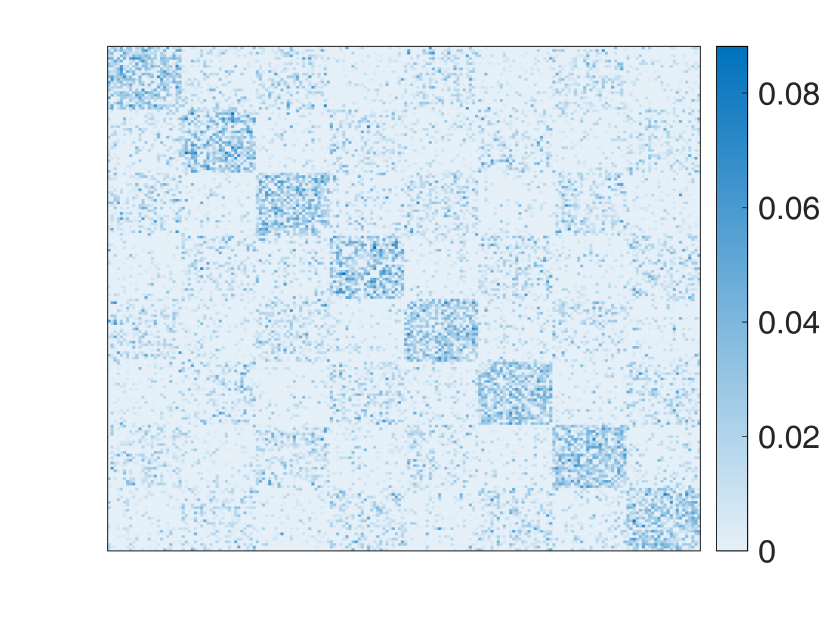}}
        \hspace{2em}
      \subfloat[KNNFSC]{\includegraphics[width=0.2\linewidth]{./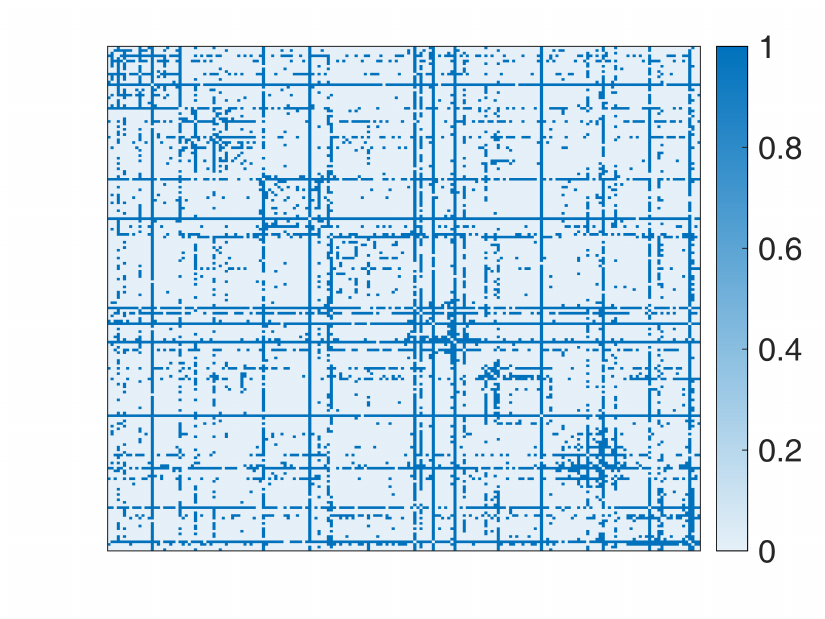}}
       \hspace{2em}
      \subfloat[EpsNNFSC]{\includegraphics[width=0.2\linewidth]{./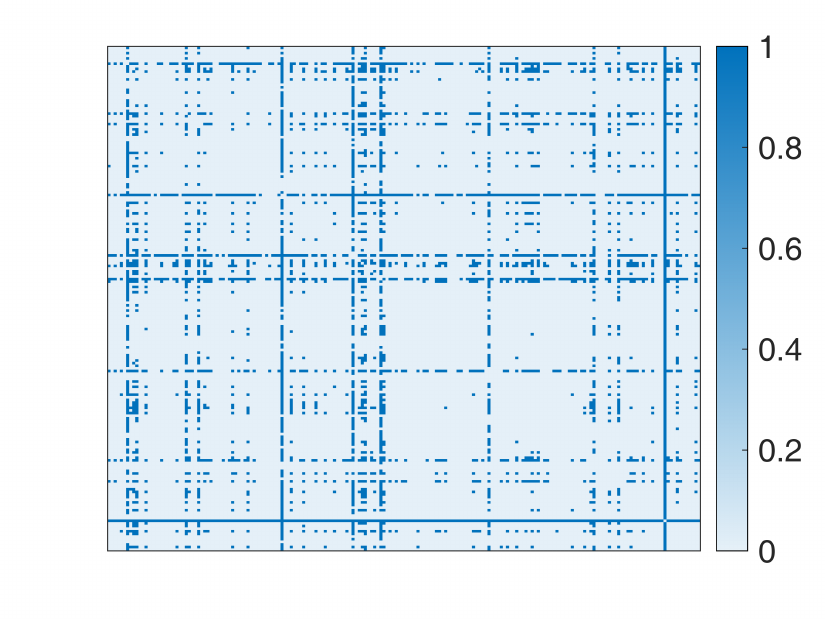}}

    	\vspace{-0.5em}
     \subfloat[FGLASSO]{\includegraphics[width=0.2\linewidth]{./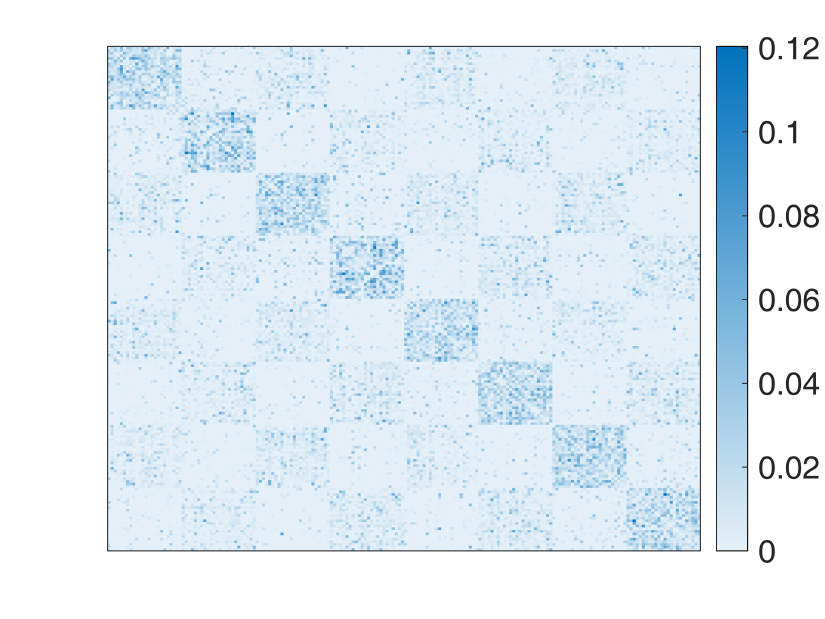}}
     \hspace{2em}
    \subfloat[FJGSED]{\includegraphics[width=0.2\linewidth]{./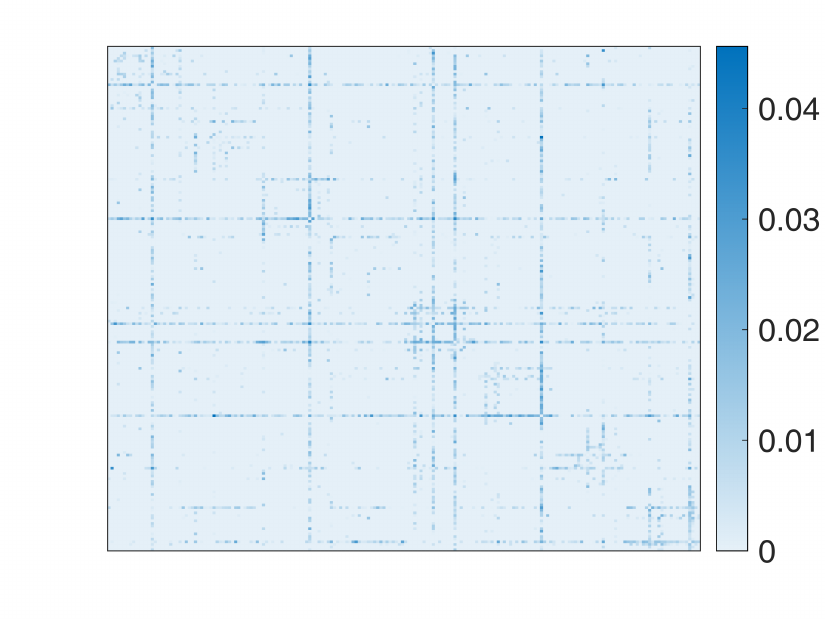}}
    \hspace{2em}
    \subfloat[FSRSC]{\includegraphics[width=0.2\linewidth]{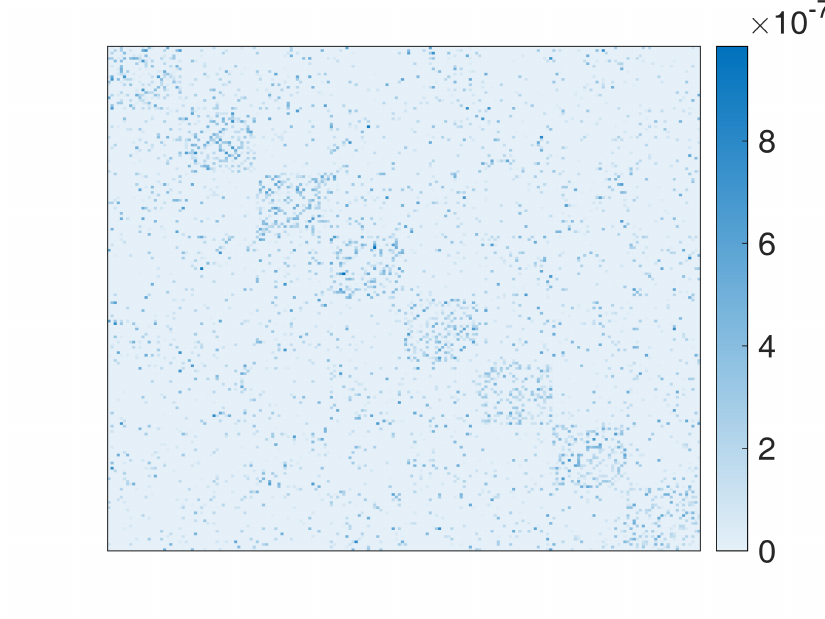}}
    \hspace{2em}
    \subfloat[Ours]{\includegraphics[width=0.2\linewidth]{./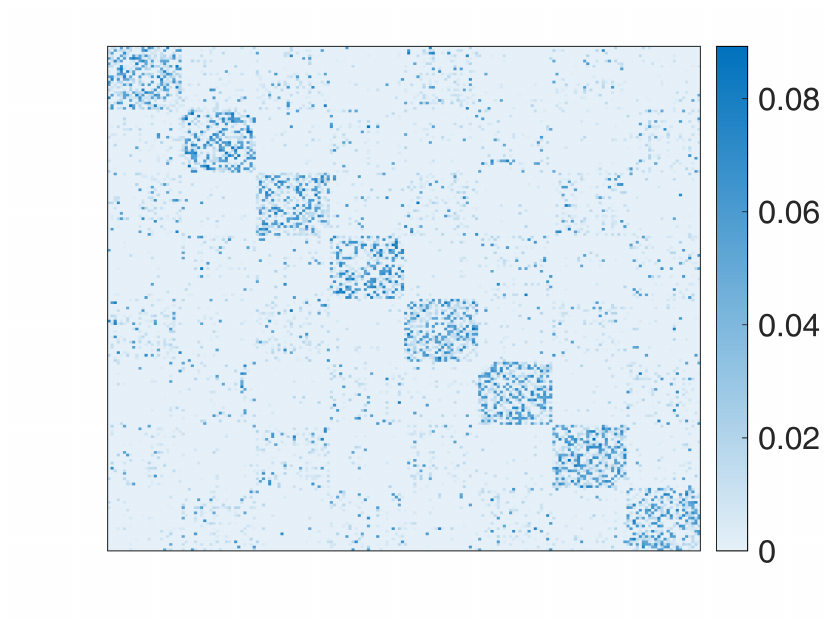}}
    	\caption{The visualization of the learned graphs (unnormalized weights) when $N =5000$ and $\sigma_i \sim \mathcal{U}(0.4,0.6)$.}
    	\label{Fig-visualization}
\end{figure*}

\subsection{Synthetic Data}

\textit{1) Model performance: }
We first compare our model with all baselines in four cases. We let $D= 192, K=4$, and $S = 2$. As listed in Table \uppercase\expandafter{\romannumeral2}, our model outperforms $k$means and Fairlets on clustering metrics because it exploits structured information behind raw data. The graphs established by KNNFSC and EpsNNFSC methods are not evaluated by $\mathrm{EE}$ since no edge weights are assigned. Among Models 3-5, CorrFSC achieves the best GL performance ($\mathrm{FS}$) as well as the best $\mathrm{CE}$ clustering performance ($\mathrm{CE}$). However, the graph construction performance of the three methods is inferior to our model, leading to unsatisfactory fair clustering results. Furthermore, compared with the three methods, our model unifies all seprate stages into a single optimization objective, avoiding suboptimality caused by separate optimization. The reason why our model outperforms FGLASSO could be that FGLASSO separately uses $k$means to obtain final cluster labels. Besides, our method could learn better graphs than FGLASSO. Although FJGSED and FSRSC also perform fair clustering in an end-to-end manner, our model obtains superior fair clustering performance due to more accurate graphs constructed by our method. Finally, our model has a node-adaptive graph filter to denoise observed signals. Thus, our model obtains the best graph construction performance under different levels of noise contamination.

We visualize the learned graphs of different methods in Fig.\ref{Fig-visualization}. We see that EpsNNFSC fails to capture the clustering structure, resulting in the worst fair clustering performance. The graph of KNNFSC tends to have imbalanced node degrees, and the graph of FSRSC has small edge weights. Compared with CorrFSC, FGLASSO, and FJGSED, the graph of our model has fewer noisy edges and clearer cluster structures.

\textit{2) The effect of $K$ and $S$: } We set $d = 192, N = 5000,  \sigma_i \sim\mathcal{U}(0.4,0.6)$. In the first case, we fix $S = 2$ and vary $K$ from 2 to 6. In the second case, we fix $K = 2$ and vary $S$ from 2 to 6. Fig.\ref{Fig-impact} displays that the fair clustering performance degrades with the increase of $K$ ($\mathrm{CE}$ increases and $\mathrm{Balance}$ decreases), which is consistent with Proposition 1. On the other hand, the fair clustering performance is less affected by $S$.

\begin{figure}[t] 
    \centering
	  \subfloat[]{
       \includegraphics[width=0.485\linewidth]{./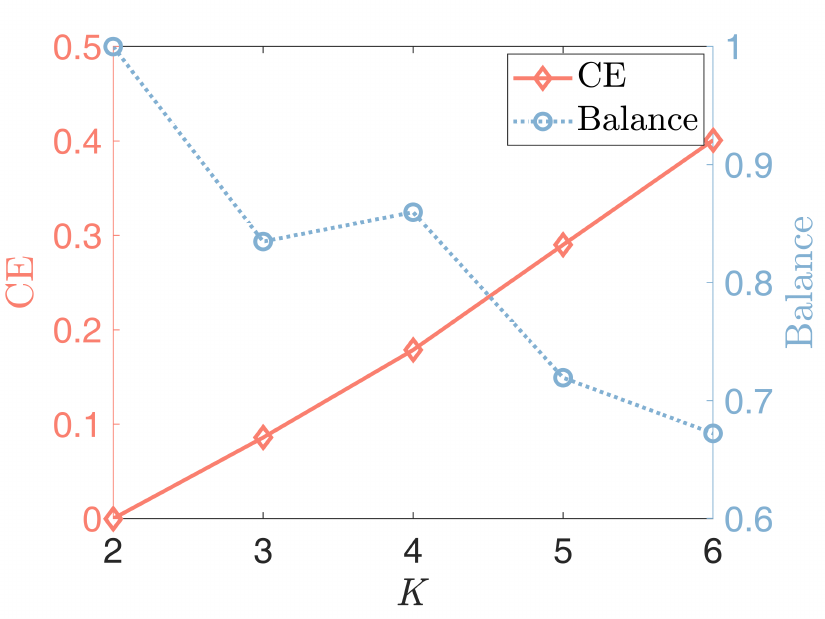}}
       \hspace{-1mm}
       \subfloat[]{
       \includegraphics[width=0.485\linewidth]{./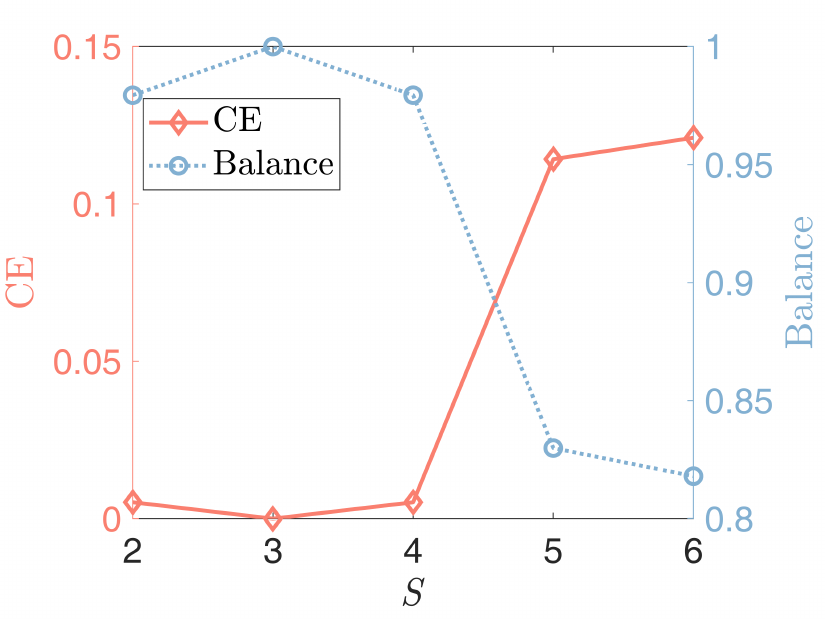}}
    	\caption{The effect of (a) $K$ and (b) $S$ on clustering results.}
    	\label{Fig-impact}
\end{figure}

\textit{3) The effect of $D$: } We let $N = 10^4,  \sigma_i \sim\mathcal{U}(0.4,0.6), K=4$ and $S = 2$. As depicted in Fig.\ref{Fig-node_num}, for a fixed data size,  $\mathrm{CE}$ first decreases and then increases as $D$. The reason may be that, as stated in Proposition \ref{theo-clustering performance}, the misclassification rate of FSC algorithms on the graph generated by the vSBM method decreases as $D$ if the underlying graph is exactly estimated. However, the quality of the estimated graph declines for large $D$ if $N$ is fixed. Thus, the second part of the error bound in Proposition \ref{theo-clustering performance} worsens. If the performance improvement brought by increasing $D$ is smaller than the degradation caused by the graph estimation error, fair clustering performance decreases when $D$ is large.  
\begin{figure}[t] 
    \centering
	  \subfloat[]{
    \includegraphics[width=0.475\linewidth]{./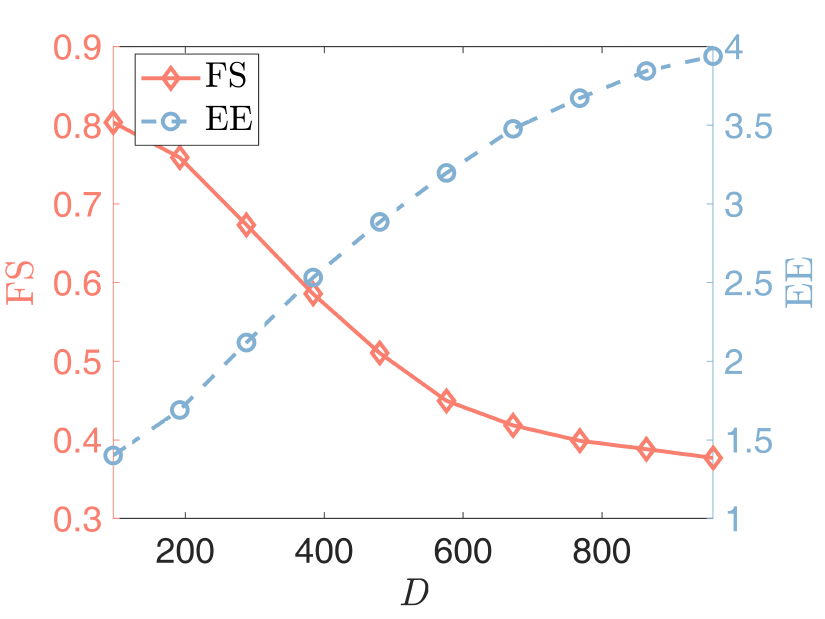}}
       \hspace{-1mm}
       \subfloat[]{ \includegraphics[width=0.475\linewidth]{./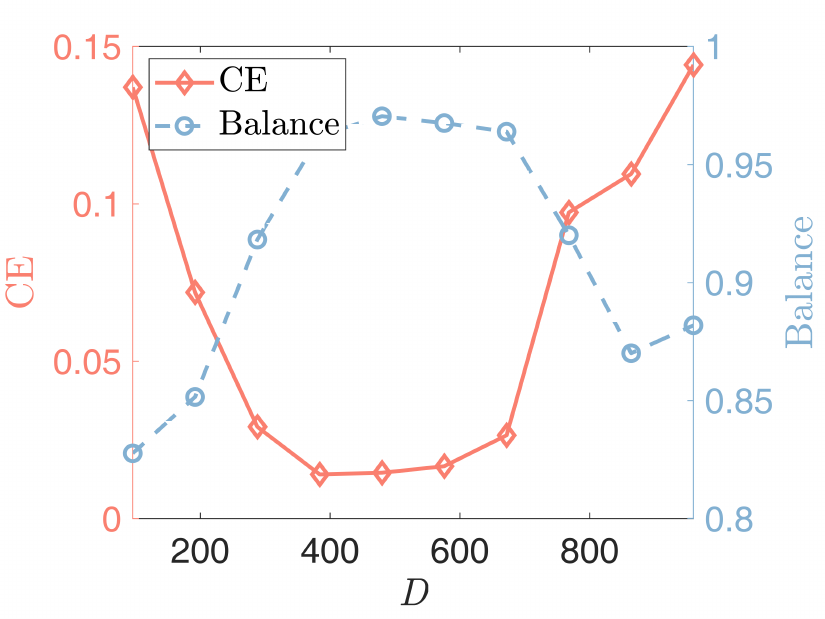} }
    	\caption{The effect of $D$ on (a) graph learning and (b) clustering.}
    	\label{Fig-node_num}
\end{figure}

\textit{4) The sensitivity of parameters: } We let $D = 196, K=4, S=2, N=5000$, and $,  \sigma_i \sim\mathcal{U}(0.4,0.6)$. First, we fix $\mu = 0.01$ and $\gamma = 0.01$ and vary $\beta$ and $\xi$ from $0.001$ to $0.1$. We then fix $\beta = 0.01$ and $\xi =0.05$ and vary $\mu$ and $\gamma$ from $0.001$ to $1$. As shown in Fig.\ref{Fig-sensitivity}, our model can achieve consistent GL and fair clustering performance except when $\beta$ is too small and $\xi$ is too large. Moreover, GL performance is more sensitive to $\mu$ than $\gamma$. There exist combinations of $\mu$ and $\gamma$ that achieve satisfactory $\mathrm{CE}$ and $\mathrm{Balance}$ simultaneously.

\textit{5) The effect of the fairness constraint:} We consider a special case where the real graph contains two clusters, each consisting of samples from the same sensitive group. In this case, the $\mathrm{Balance}$ of real clustering is zero. We then perform clustering using our FSC model and a variant where the fairness constraint is removed. As shown in Fig.\ref{Fig-fairness}, if we remove the fairness constraint, our model can exactly group all samples. However, some samples are misclassified to improve fairness in our model due to the effect of the fairness constraint. We list the corresponding model performance in Table \uppercase\expandafter{\romannumeral3}. Our model achieves a significantly higher $\mathrm{Balance}$ value at the cost of reduced clustering accuracy. Furthermore, the GL performance of our model is also degraded due to the fairness constraint. Thus, if the underlying graph has only one meaningful cluster that is highly unbalanced, fairness constraints may degrade GL and clustering performance.

\begin{table}[t]
\renewcommand{\arraystretch}{1}
	\centering
  
	\begin{threeparttable}
        \tabcolsep = 1.2em
	\caption{The results of removing fairness.}
	{\footnotesize 
	\begin{tabular}{c|c|c|c|c}
	\Xhline{1.05pt}


     & $\mathrm{FS}$ & $\mathrm{EE} $  & $\mathrm{CE}$  & $\mathrm{Bal}$  \\
     
    \hline

    w/o fairness 
     &0.734 &1.211 &0 &0\\  	
     Ours 
     &0.719 &1.353 &0.484 &0.867\\  	

	\Xhline{1.2pt}

	\end{tabular} 
	}

 \end{threeparttable}
 
 \label{table-fairness}
     	\vspace{-2em}
\end{table}

\begin{figure}[t] 
    \centering
	  \subfloat[The ground-truth]{
       \includegraphics[width=0.32\linewidth]{./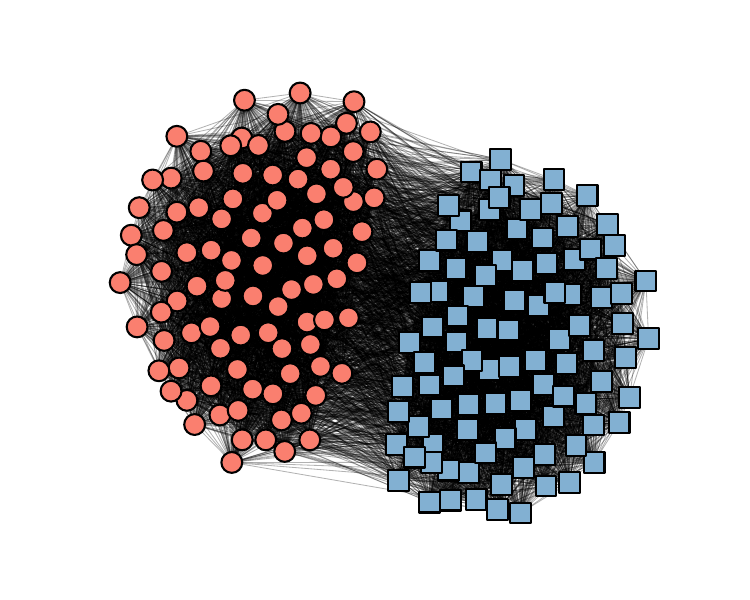}}
       \hspace{-2mm}
       \subfloat[w/o fairness]{
       \includegraphics[width=0.32\linewidth]{./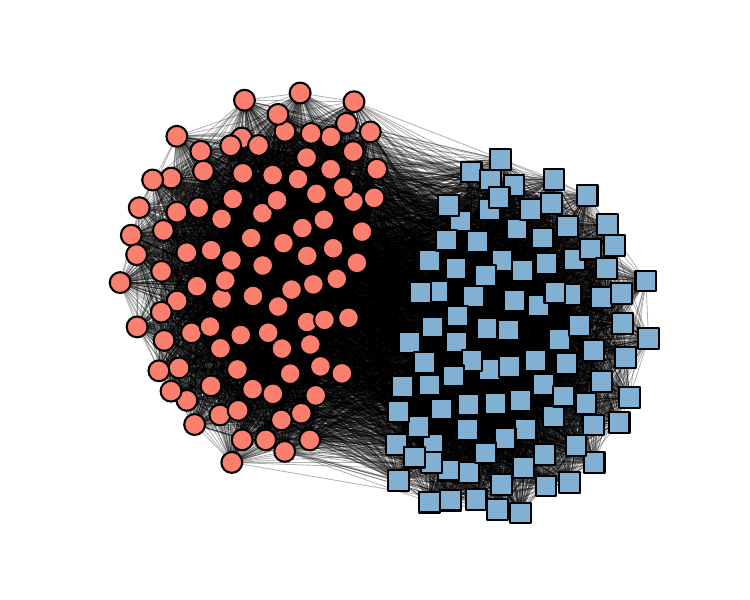}}
       \subfloat[Ours]{
       \includegraphics[width=0.32\linewidth]{./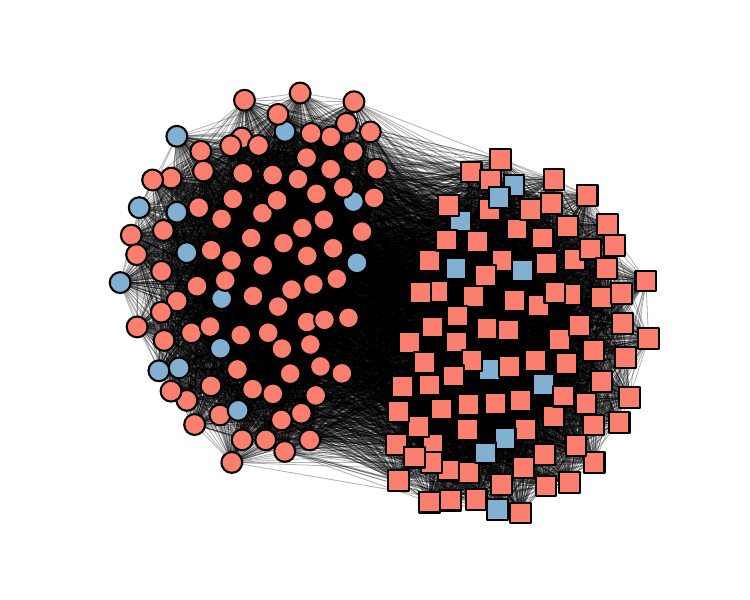}}
    	\caption{The effect of the fairness constraint. Colors represent clusters, while mark shapes represent sensitive groups.}
    	\label{Fig-fairness}
\end{figure}

\textit{6) Ablation study:}  Three cases are taken into consideration. (\romannumeral1)  We construct graphs using \eqref{eq-prelim-1}, conduct fair spectral embedding, and discretize using spectral rotation separately to test the benefit of a unified model (Ours-Sep). (\romannumeral2) We construct graphs using \eqref{eq-prelim-1-1} and conduct fair spectral embedding jointly. After obtaining continuous results, we exploit $k$means as the discretization step to test the benefit of spectral rotation (Ours-$k$means). (\romannumeral3) We remove the denoising module in our model to test the benefit of the node-adaptive graph filter (Ours-noDN). We let $D = 196, K=4, S=2, N=5000$, and $\sigma_i \sim\mathcal{U}(0.4,0.6)$, and the results are listed in Table \uppercase\expandafter{\romannumeral4}. 
Our model outperforms Ours-Sep, demonstrating the benefit of a unified model. Although the graph of Ours-$k$means is well estimated, it obtains the worst $\mathrm{CE}$ due to the poor performance of $k$means. Our model outperforms Ours-noDN because it has a low-pass filter to enhance graph construction.

\begin{table}[t]
\renewcommand{\arraystretch}{1}
	\centering
  
	\begin{threeparttable}
        \tabcolsep = 1.2em
	\caption{The results of ablation studies.}
	{\footnotesize 
	\begin{tabular}{c|c|c|c|c}
	\Xhline{1.05pt}


     & $\mathrm{FS}$ & $\mathrm{EE} $  & $\mathrm{CE}$  & $\mathrm{Bal}$  \\
     
    \hline

    Ours-Sep 
     &0.637 &2.333 &0.250 &0.704\\  	
     Ours-$k$means 
     &0.635 &2.320 &0.549 &0.353\\  	
     Ours-noDN
     &0.623 &2.354 &0.276 &0.694\\  	
     Ours 
     &0.658 &2.203 &0.167 &0.782\\

	\Xhline{1.2pt}

	\end{tabular} 
	}

 \end{threeparttable}
 
 \label{table-ablation}
     	\vspace{-1em}
\end{table}

\begin{figure*}[t] 
    \centering
	  \subfloat[$\mathrm{FS}$]{
       \includegraphics[width=0.23\linewidth]{./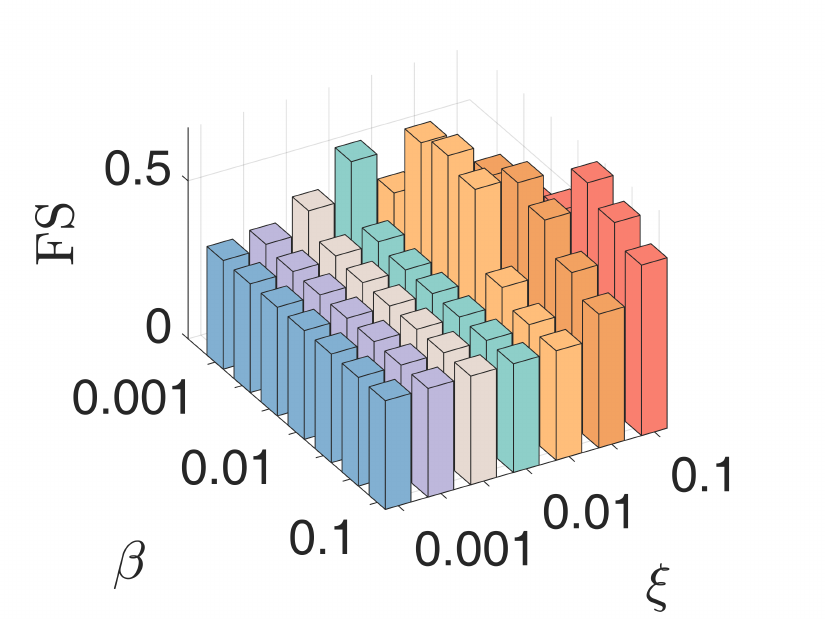}}
       \hspace{-1em}
	  \subfloat[$\mathrm{EE}$]{
       \includegraphics[width=0.23\linewidth]{./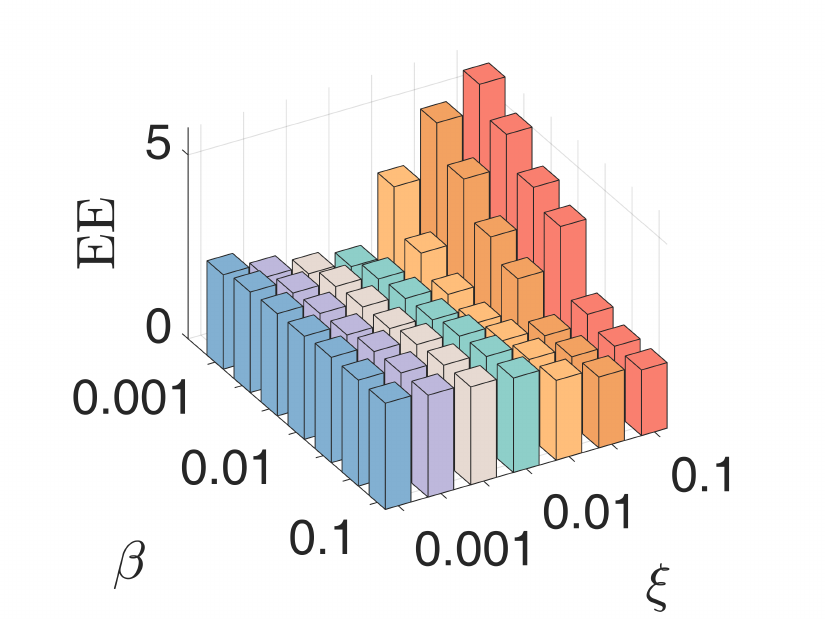}}
       \hspace{-1em}
	  \subfloat[$\mathrm{CE}$]{
       \includegraphics[width=00.23\linewidth]{./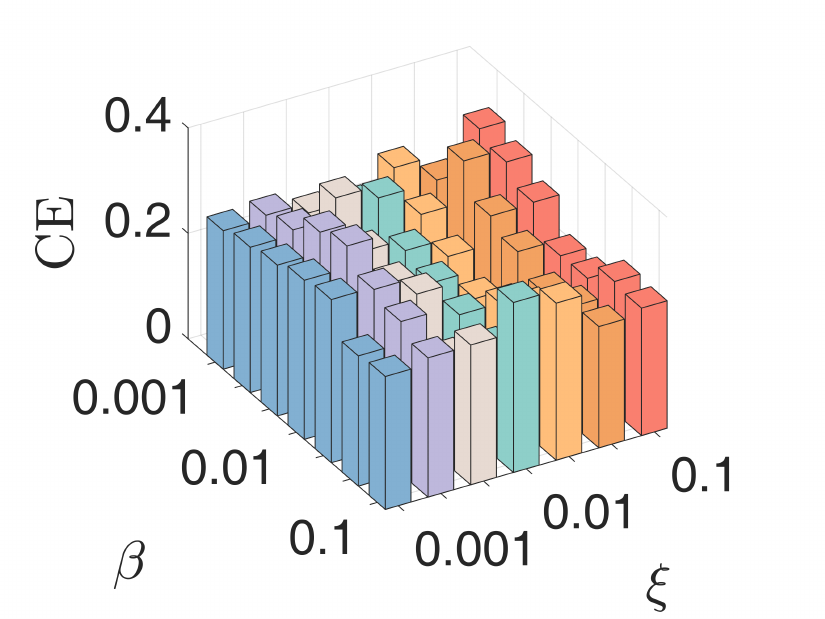}}
       \hspace{-1em}
	  \subfloat[$\mathrm{Balance}$]{
       \includegraphics[width=0.23\linewidth]{./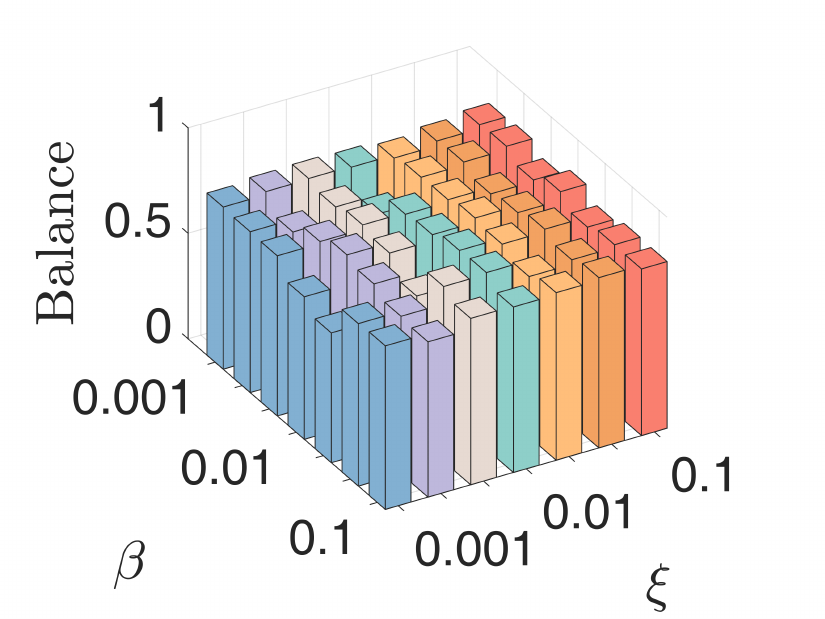}}

       \subfloat[$\mathrm{FS}$]{
       \includegraphics[width=0.23\linewidth]{./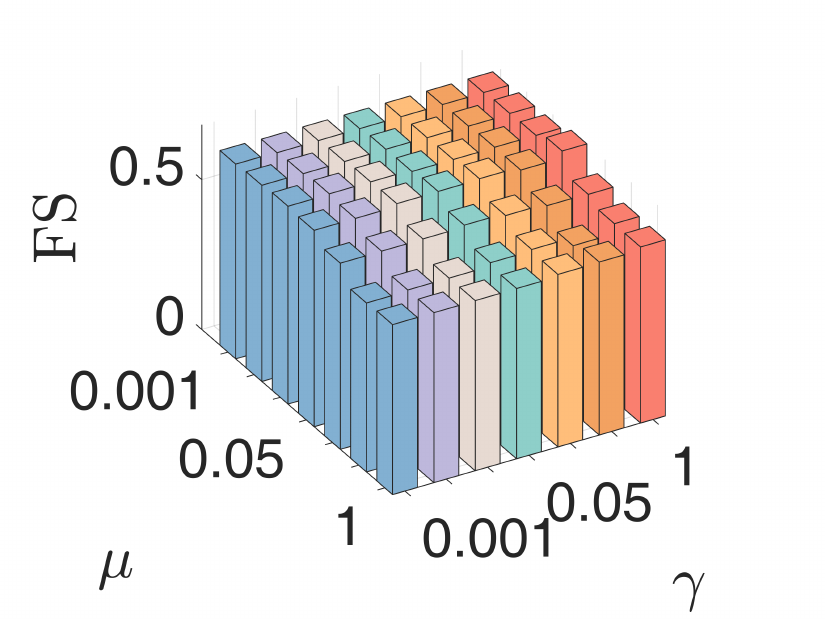}}
       \hspace{-1em}
	  \subfloat[$\mathrm{EE}$]{
       \includegraphics[width=0.23\linewidth]{./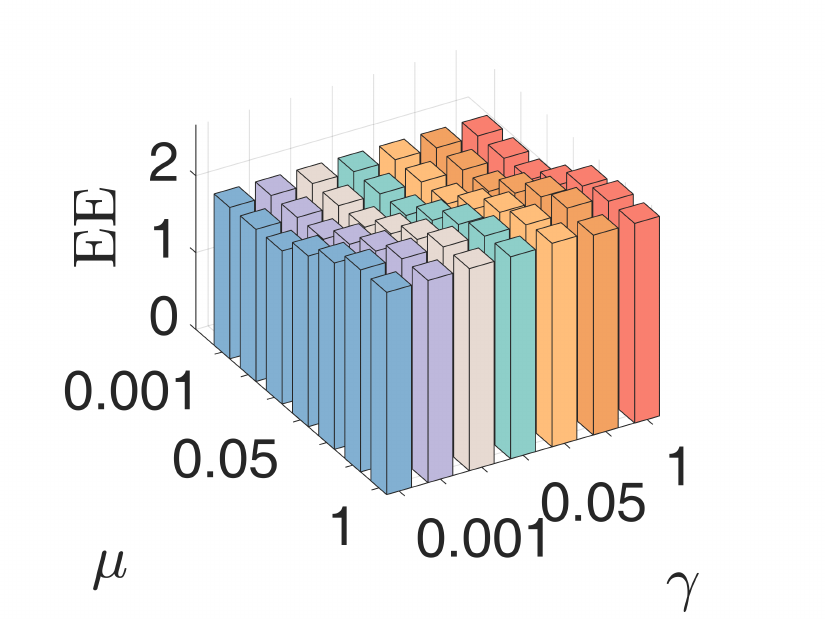}}
         \hspace{-1em}
	  \subfloat[$\mathrm{CE}$]{
       \includegraphics[width=0.23\linewidth]{./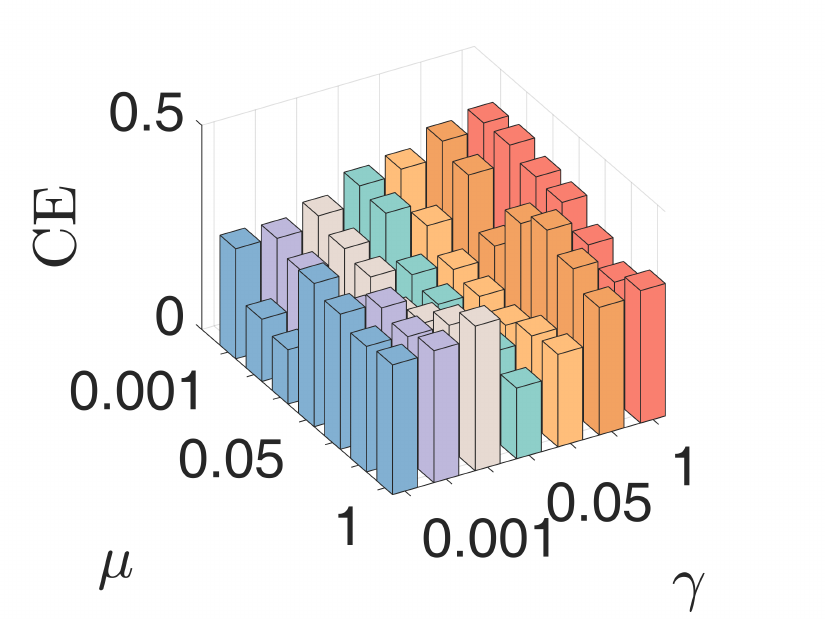}}
       \hspace{-1em}
	  \subfloat[$\mathrm{Balance}$]{
       \includegraphics[width=0.23\linewidth]{./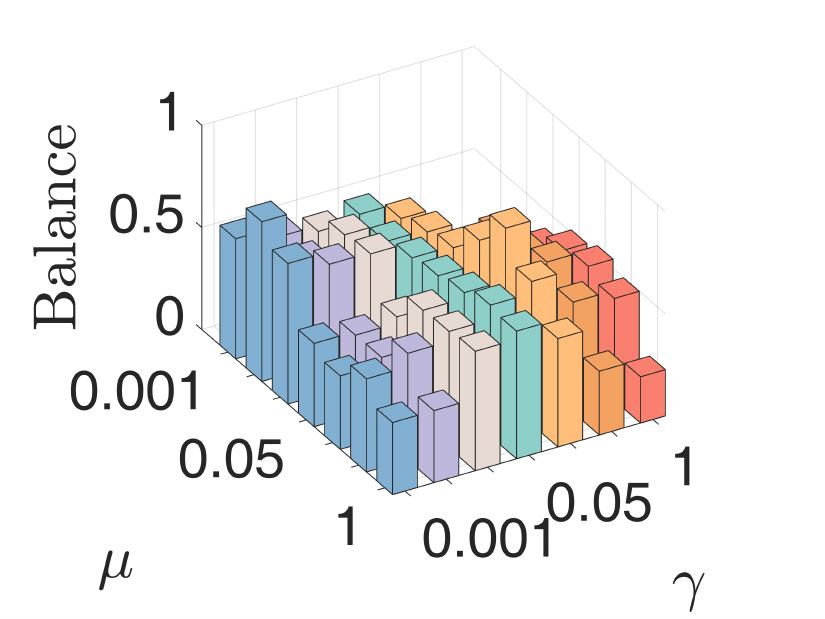}}
    	\caption{The effect of parameter sensitivity. (a)-(d) The results of varying $\xi$ and $\beta$. (e)-(h) The results of varying $\mu$ and $\gamma$. }
    	\label{Fig-sensitivity}
\end{figure*}

\begin{figure}[t] 
    \centering
	  \subfloat[$\sigma_i \sim\mathcal{U}(0,0.2)$]{
       \includegraphics[width=0.485\linewidth]{./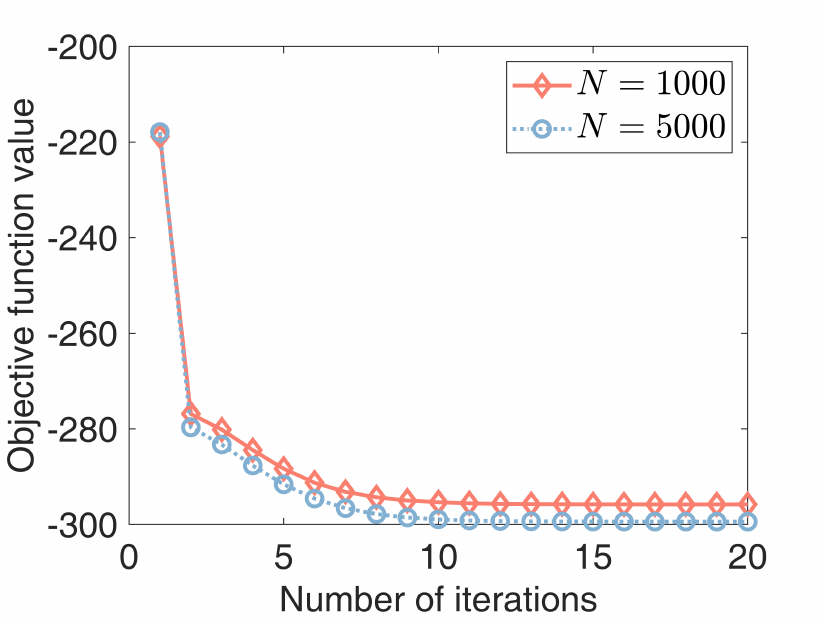}}
       \hspace{-2mm}
       \subfloat[$\sigma_i \sim\mathcal{U}(0.4,0.6)$]{
       \includegraphics[width=0.485\linewidth]{./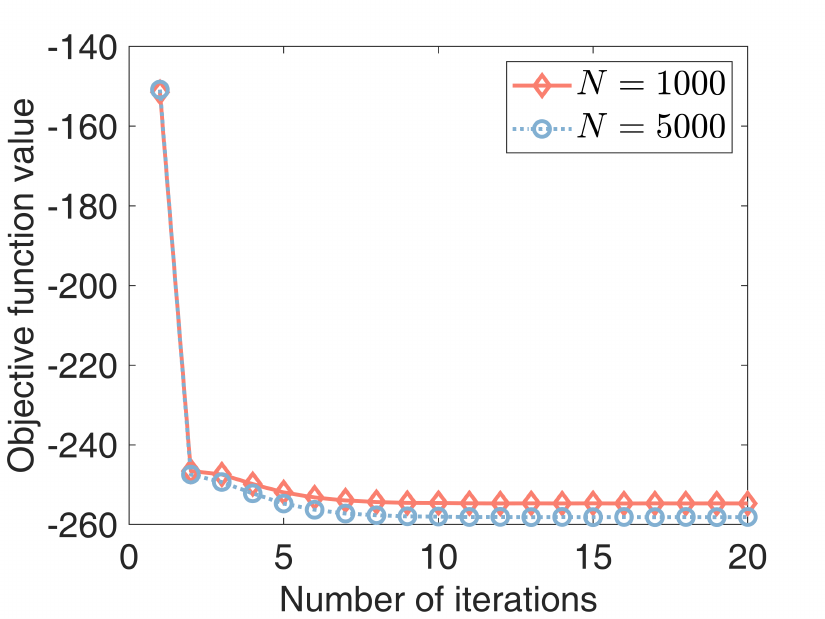}}
    	\caption{The convergence of our algorithm.}
    	\label{Fig-comvergence}
\end{figure}

\begin{figure*}[t] 
    \centering
	 \subfloat[]{\includegraphics[width=0.25\linewidth,height=0.19\linewidth]{./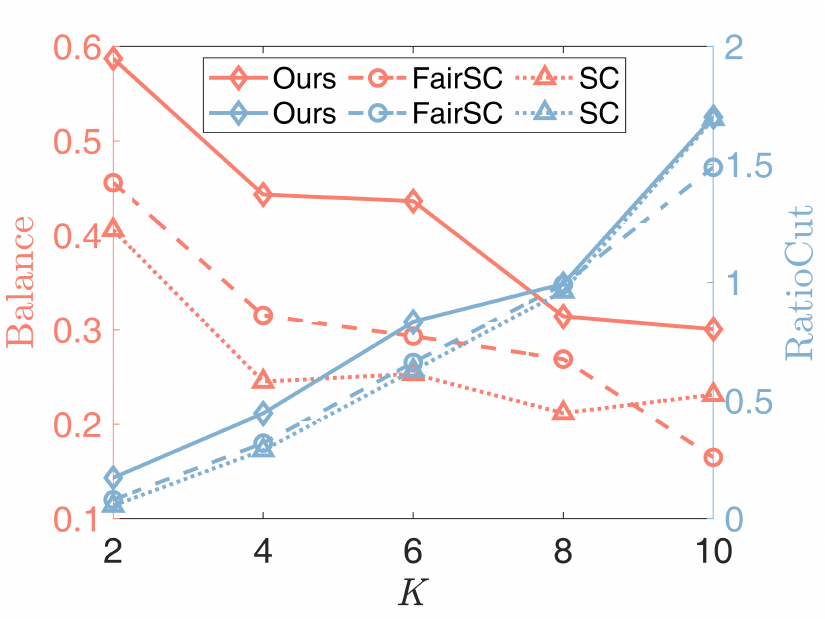}}
	 \subfloat[]{
  \includegraphics[width=0.25\linewidth,height=0.19\linewidth]{./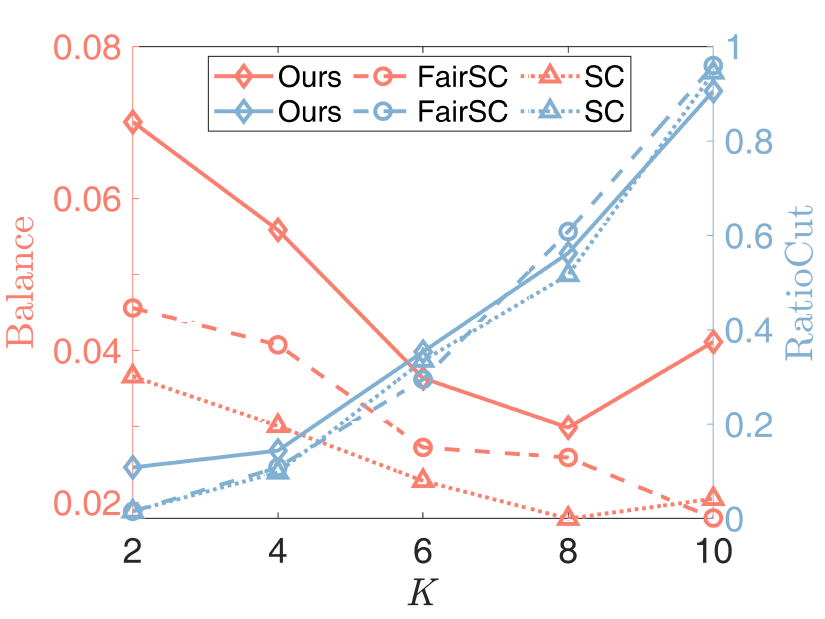}}
  \hspace{-1.5ex}
	   \subfloat[]{ \includegraphics[width=0.25\linewidth,height=0.19\linewidth]{./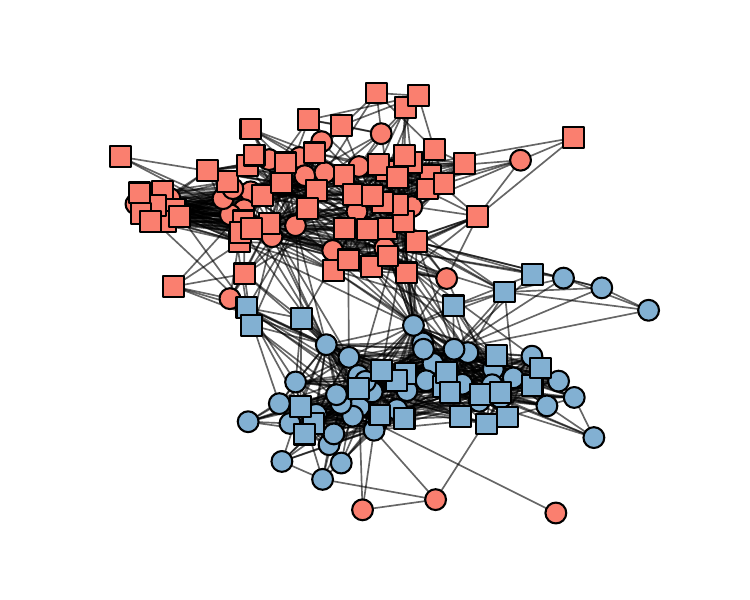}}
           \hspace{-5ex}
    	 \subfloat[]{       \includegraphics[width=0.25\linewidth,height=0.19\linewidth]{./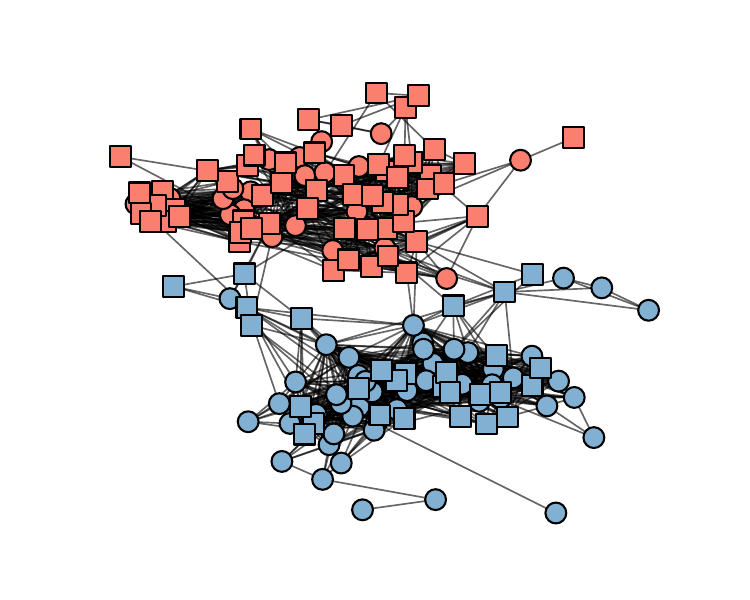}}   
    	\caption{The results of the benchmark datasets. (a)-(b) The fair clustering results of the FACEBOOK and DRUGNET datasets. (c)-(d) The real and the learned FACEBOOK network. Colors represent clusters, while mark shapes represent sensitive groups. }
    	\label{Fig-facebook}
    	\vspace{-1.5em}
\end{figure*}
\textit{7) Convergence: } Finally, we test the convergence of our algorithm. We let $D = 196, K=4, S=2,D = 192$.  As shown in Fig.\ref{Fig-comvergence}, the objective function values monotonically decrease as the number of iterations. Besides, our algorithm converges within a few iterations, indicating its fast convergence.

\subsection{Benchmark Data}
In this section, we test the performance of our model on the commonly used benchmark datasets of FSC \cite{kleindessner2019guarantees}.  The first dataset is a high school friendship network named FACEBOOKNET\footnote{http://www:sociopatterns:org/datasets/high-school-contact-and-friendshipnetworks/}. The dataset contains a graph with vertices representing high school students and edges representing connections between students on Facebook. After data preprocessing, we obtain 155 students split into male and female groups. In this dataset, gender is considered a sensitive attribute. All vertices are divided into two groups, i.e., male and female. The second dataset, DRUGNET, is a network encoding acquaintanceship between drug users in Hartford \footnote{ https://sites:google:com/site/ucinetsoftware/datasets/covert-networks/drugnet}. After data preprocessing, we obtain 193 vertices.  We use ethnicity as a sensitive attribute and split the vertices into three groups: African Americans, Latinos, and others. Note that previous FSC work \cite{kleindessner2019guarantees} is based on a given graph, and the two datasets only contain ground-truth graphs and no observed signals. However, one of the primary advantages of our model is that we can group observed data without the real graph structures.  Thus, we generate data via \eqref{signal-genegration} based on the ground-truth networks. We then use our model to group vertices via the observed data. For comparison, we apply the FSC algorithm in \cite{kleindessner2019guarantees}  (FairSC) and unnormalized spectral clustering (SC) to the real networks to cluster vertices. We aim to demonstrate that our model can achieve competitive fair clustering performance even without real graphs. Referring to \cite{kleindessner2019guarantees}, we use $\mathrm{Balance}$ and $\mathrm{RatioCut}$ as evaluation metrics since we have no real labels.  We let $N = 1000$ and $\sigma_i \sim\mathcal{U}(0,0.2)$. As displayed in \ref{Fig-facebook} (a)-(b), for the two datasets, our model achieves almost the same $\mathrm{RatioCut}$ as FairSC and SC\textemdash which are based on the ground-truth networks\textemdash even though we do not know the underlying graphs. However, compared with FSC and SC, our model can improve $\mathrm{Balance}$ at a moderate sacrifice of  $\mathrm{RatioCut}$. Figures \ref{Fig-facebook} (c)-(d) depict the real FACEBOOKNET graph and the graph learned by our model when $K =2$. Fewer edges are learned between two clusters, suggesting that our model may tend to learn a graph that is more suitable for clustering. Furthermore, two clusters are observed from our learned graph, meaning our model can fairly partition the nodes from the observed data even if we have no real graphs.

\begin{figure*}[t] 
    \centering
	\subfloat[]{\includegraphics[width=0.25\linewidth]{./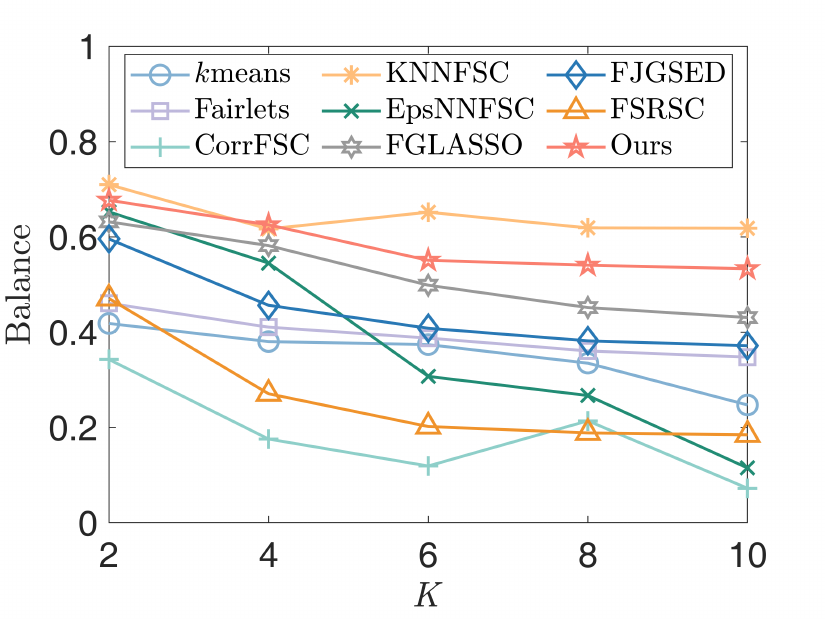}}
     \hspace{1em}
    \subfloat[]{\includegraphics[width=0.25\linewidth]{./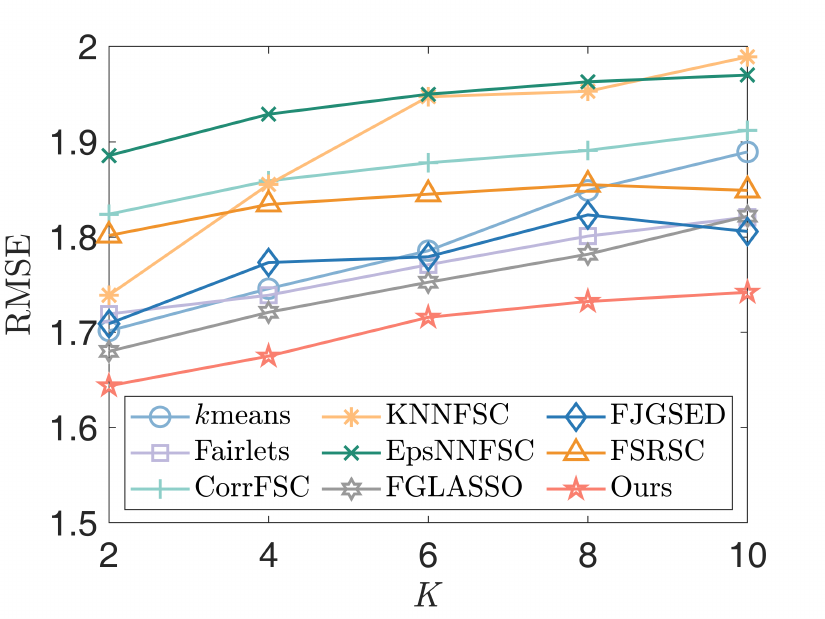}}
     \hspace{1em}
	  \subfloat[]{\includegraphics[width=0.18\linewidth]{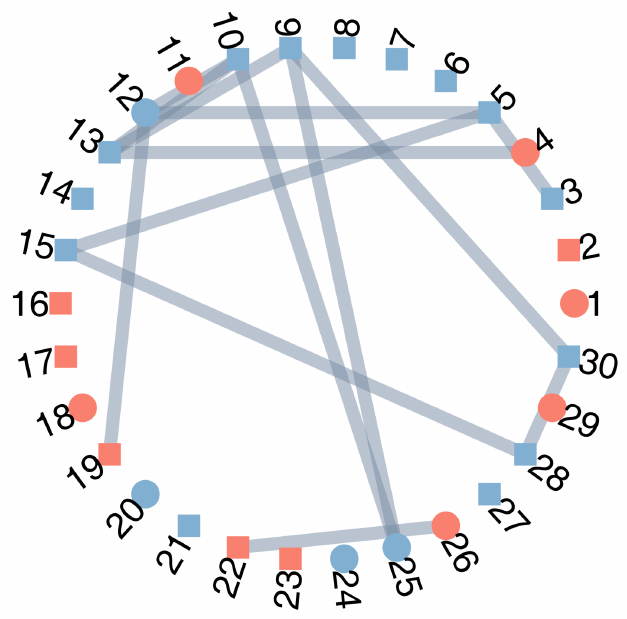}}
     \hspace{1em}
       \subfloat[]{\includegraphics[width=0.18\linewidth]{./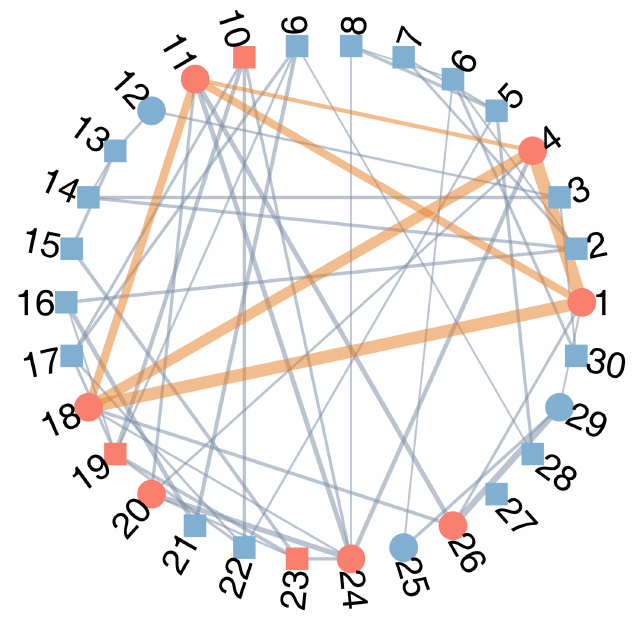}}
    	\caption{The results of the MovieLens dataset. (a)-(b) The fair clustering results of different models.  (c)-(d) The learned sub-graphs of KNNFSC and our model when $K=2$. Colors represent clusters, while mark shapes represent sensitive attributes. }
    	\label{Fig-movie-graph}
\end{figure*}

\subsection{Real Data}

\textit{1) MovieLens 100K dataset: } 
We employ MovieLens 100K dataset\footnote{http://www.grouplens.org} to group movies by their ratings. This dataset contains ratings of 1682 movies by 943 users in the range $[1,5]$, which is sparse as many movies have few ratings. To alleviate the impact of sparsity, we select the top 200 most-rated movies from 1682 movies. Therefore, we have a who-rated-what matrix $\mathbf{X}\in \mathbb{R}^{200\times 943}$. The matrix can be used to construct a movie-movie similarity graph strongly correlated with how users explicitly rate items \cite{wang2015collaborative}. Therefore, we can perform clustering on the similarity graph to group movies with similar attributes. However, as stated in \cite{ tarzanagh2021fair}, old movies tend to obtain higher ratings because only masterpieces have survived. To obtain fair results unbiased by production time, we consider movie year as a sensitive attribute. Movies made before 1991 are considered old, while others are considered new.  To evaluate clustering results, we conduct traditional item-based collaborative filtering (CF) on each cluster, termed cluster CF, to predict user ratings of movies. As claimed in \cite{tarzanagh2021fair}, if the obtained clusters accurately contain a set of similar items,  cluster CF can better predict user ratings of movies. Therefore, we follow \cite{tarzanagh2021fair} and use root mean square error ($\mathrm{RMSE}$) between the predicted and true ratings as an evaluation metric in addition to $\mathrm{Balance}$ \cite{tarzanagh2021fair, wang2015collaborative}. Figures \ref{Fig-movie-graph} (a)-(b) depict fair clustering results of different models. Our model obtains the highest $\mathrm{Balance}$ except KNNFSC.  However,  KNNFSC performs poorly on $\mathrm{RMSE}$, indicating unsatisfactory clustering results. This may be caused by the fact that the graph constructed by KNNFSC hardly characterizes the similarity relationships between movies. In contrast, our model achieves the best $\mathrm{RMSE}$ since it better reveals similarity relationships behind observed data. In Fig.\ref{Fig-movie-graph}(c)-(d), we provide the learned sub-graphs and clustering results of the top 30 rated movies when $K = 2$. The graph learned by KNNFSC has isolated nodes since they are connected to the movies outside the top 30 rated movies. In our graph, nodes 1, 4, 11, and 18 are closely connected because they belong to the Star Wars series. However, in Fig.\ref{Fig-movie-graph}(c), they are not connected. Moreover, our model successfully groups the four nodes into the same cluster.

\textit{2) MNIST-USPS dataset: }The second dataset we employ is  MNIST-USPS dataset \footnote{http://yann.lecun.com/exdb/mnist, https://www.kaggle.com/bistaumanga/usps-dataset}, which contains two sub-datasets, i.e.,  MNIST and USPS. The two sub-datasets contain images of handwritten digits from 0 to 9. We cluster these images and use digits as the ground-truth cluster labels. Specifically, we randomly select 48 images from each sub-dataset, which contains four digits and twelve pictures for each digit. We finally obtain 96 images and resize each image to a $28\times28$ matrix. We take each image as a node in a graph and flatten the corresponding matrix as the node signals. Therefore, the observed data are $\mathbf{X}\in \mathbb{R}^{96\times784}$. We take the domain source of images\textemdash images from MNIST or USPS\textemdash as a sensitive attribute. Thus, we have $S = 2$ and $K = 4$. We use $\mathrm{CE}$ and $\mathrm{Balance}$ as evaluation metrics since we have real labels but no ground-truth graphs. As shown in Fig.\ref{Fig-mnist-usps}, our model achieves the best fair clustering performance for both $\mathrm{CE}$ and $\mathrm{Balance}$, indicating its superiority. The reason for CorrFSC, KNNFSC, EpsFSC, and FSRSC achieving unsatisfactory $\mathrm{CE}$ may be that the corresponding graphs for this dataset cannot reflect the real topological similarity.

\begin{figure}[t] 
    \centering
	  {
       \includegraphics[width=0.9\linewidth]{./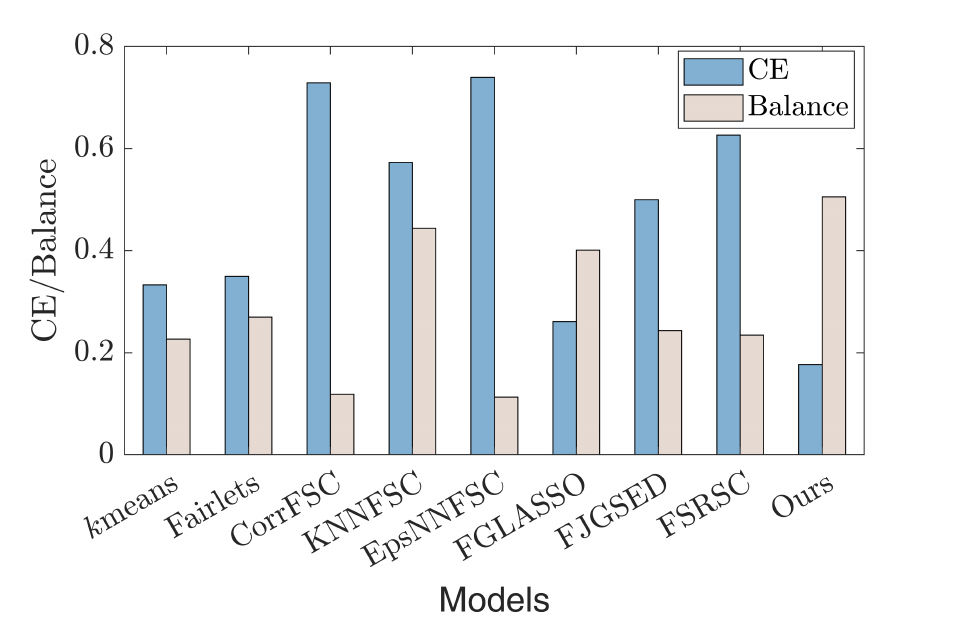}}
    	\caption{The clustering results of the MNIST-USPS dataset.}
    	\label{Fig-mnist-usps}
\end{figure}

\section{Conclusion}
\label{sec:Conclusion}
In this paper,  we theoretically analyzed the impact of similarity graphs on FSC performance. Motivated by the analysis, we proposed a graph construction method for FSC tasks as well as an end-to-end FSC framework. Then, we designed an efficient algorithm to alternately update the variables corresponding to each submodule in our model. Extensive experiments showed that our approach is superior to state-of-the-art (fair) SC models. Future research directions may include developing more scalable FSC algorithms.

\appendices


\section{Proof of Proposition \ref{theo-clustering performance}}
\label{appendix2}
We first provide the following lemma.
\begin{lemma}
For any $\epsilon>0$ and any two matrices $\mathbf{U}, \widehat{\mathbf{U}}\in\mathbb{R}^{D\times K}$ such that $\mathbf{U} =\mathbf{Q}\mathbf{R}$ with $\mathbf{Q} \in\mathcal{I}$ and $\mathbf{R}^{\top}\mathbf{R} = \mathbf{I}$, let $(\widehat{\mathbf{Q}}, \widehat{\mathbf{R}})$ be a $(1+\epsilon)$ approximation of $\widehat{\mathbf{U}}$ using spectral rotation as Assumption \ref{assump-1}, and $\breve{\mathbf{U}} = \widehat{\mathbf{Q}}\widehat{\mathbf{R}}$. Then, for any $\delta_k\geq 0 $, define $\widetilde{\mathcal{M}}_k = \left\{i\in\mathcal{C}_k: \lVert \mathbf{U}_{[i,:]} - \breve{\mathbf{U}}_{[i,:]}\rVert_2\geq \delta_k/2 \right\}$, $k=1,...,K$, and we have
\begin{shrinkfix}
\begin{align}
   \sum_{k=1}^K |\widetilde{\mathcal{M}}_k| \delta_k^2 \leq 4(4+2\epsilon)\lVert  {\mathbf{U}} - \widehat{\mathbf{U}}\rVert_{\mathrm{F}}^2,
    \label{eq-lemma-1}
    \end{align}
\end{shrinkfix}
\label{lemma-1}
   \vspace{-1em}
\end{lemma}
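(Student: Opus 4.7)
The plan is to combine a rowwise lower bound on $\lVert \mathbf{U}_{[i,:]} - \breve{\mathbf{U}}_{[i,:]}\rVert_2$ for indices in $\widetilde{\mathcal{M}}_k$ with a global upper bound on $\lVert \widehat{\mathbf{U}} - \breve{\mathbf{U}}\rVert_{\mathrm{F}}$ that follows from the $(1+\epsilon)$-approximation guarantee of spectral rotation in Assumption~\ref{assump-1}. The bridge between the two bounds is the continuous embedding $\widehat{\mathbf{U}}$, inserted through the triangle inequality so that each piece of the split can be controlled separately.

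First I would insert $\widehat{\mathbf{U}}_{[i,:]}$ into the rowwise difference. For each $i \in \widetilde{\mathcal{M}}_k$ the triangle inequality gives $\lVert \mathbf{U}_{[i,:]} - \breve{\mathbf{U}}_{[i,:]}\rVert_2 \leq \lVert \mathbf{U}_{[i,:]} - \widehat{\mathbf{U}}_{[i,:]}\rVert_2 + \lVert \widehat{\mathbf{U}}_{[i,:]} - \breve{\mathbf{U}}_{[i,:]}\rVert_2$, and squaring with $(a+b)^2 \leq 2a^2 + 2b^2$ together with the defining inequality $\lVert \mathbf{U}_{[i,:]} - \breve{\mathbf{U}}_{[i,:]}\rVert_2^2 \geq \delta_k^2/4$ yields
\begin{equation*}
\frac{\delta_k^2}{4} \leq 2\lVert \mathbf{U}_{[i,:]} - \widehat{\mathbf{U}}_{[i,:]}\rVert_2^2 + 2\lVert \widehat{\mathbf{U}}_{[i,:]} - \breve{\mathbf{U}}_{[i,:]}\rVert_2^2.
\end{equation*}
Summing over $i \in \widetilde{\mathcal{M}}_k$ and then over $k$, and using the disjointness $\widetilde{\mathcal{M}}_k \subseteq \mathcal{C}_k$ so that the double sum is dominated by the full Frobenius norm, I obtain
\begin{equation*}
\sum_{k=1}^K |\widetilde{\mathcal{M}}_k|\frac{\delta_k^2}{4} \leq 2\lVert \mathbf{U} - \widehat{\mathbf{U}}\rVert_{\mathrm{F}}^2 + 2\lVert \widehat{\mathbf{U}} - \breve{\mathbf{U}}\rVert_{\mathrm{F}}^2.
\end{equation*}

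Next I would bound the second Frobenius term using Assumption~\ref{assump-1}. Because $\mathbf{U} = \mathbf{Q}\mathbf{R}$ with $\mathbf{Q}\in\mathcal{I}$ and $\mathbf{R}^{\top}\mathbf{R} = \mathbf{I}$, the pair $(\mathbf{Q}, \mathbf{R})$ is a feasible competitor in the spectral-rotation minimization applied to $\widehat{\mathbf{U}}$. Combined with the fact that right multiplication by an orthogonal $K\times K$ matrix is an isometry for the Frobenius norm, this lets me rewrite the assumption in the equivalent form $\lVert \widehat{\mathbf{U}} - \breve{\mathbf{U}}\rVert_{\mathrm{F}}^2 \leq (1+\epsilon)\lVert \widehat{\mathbf{U}} - \mathbf{U}\rVert_{\mathrm{F}}^2$. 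Substituting and multiplying through by $4$ then gives $\sum_k |\widetilde{\mathcal{M}}_k|\delta_k^2 \leq 4(4+2\epsilon)\lVert \mathbf{U} - \widehat{\mathbf{U}}\rVert_{\mathrm{F}}^2$, as claimed.

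The main obstacle I anticipate is the bookkeeping in this last step. Assumption~\ref{assump-1} controls $\lVert \widehat{\mathbf{Q}} - \widehat{\mathbf{U}}\widehat{\mathbf{R}}\rVert_{\mathrm{F}}^2$, whereas the quantity produced by the triangle inequality is $\lVert \widehat{\mathbf{U}} - \widehat{\mathbf{Q}}\widehat{\mathbf{R}}\rVert_{\mathrm{F}}^2$. Turning one into the other requires the orthogonality of $\widehat{\mathbf{R}}$ (so that $\widehat{\mathbf{R}}\widehat{\mathbf{R}}^{\top} = \mathbf{I}$) and the observation that the minimum over orthogonal $\mathbf{R}$ is invariant under $\mathbf{R}\mapsto\mathbf{R}^{\top}$, which allows $(\mathbf{Q},\mathbf{R})$ with $\mathbf{Q}\mathbf{R} = \mathbf{U}$ to be treated as an admissible candidate for the minimum being bounded; once this identification is recorded cleanly, the remaining algebra is immediate.
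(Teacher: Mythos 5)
Your proposal is correct and follows essentially the same route as the paper's proof: both use the $(1+\epsilon)$ guarantee (after rewriting $\lVert \mathbf{Q}-\widehat{\mathbf{U}}\mathbf{R}\rVert_{\mathrm{F}}$ as $\lVert \widehat{\mathbf{U}}-\mathbf{Q}\mathbf{R}\rVert_{\mathrm{F}}$ via orthogonal invariance) to get $\lVert \widehat{\mathbf{U}}-\breve{\mathbf{U}}\rVert_{\mathrm{F}}^2\leq(1+\epsilon)\lVert\widehat{\mathbf{U}}-\mathbf{U}\rVert_{\mathrm{F}}^2$, then the $2a^2+2b^2$ split to reach the constant $4+2\epsilon$, and finally the defining inequality of $\widetilde{\mathcal{M}}_k$; the only cosmetic difference is that you apply the triangle inequality row by row before summing, while the paper bounds $\lVert\breve{\mathbf{U}}-\mathbf{U}\rVert_{\mathrm{F}}^2$ globally first and then restricts to the rows in $\widetilde{\mathcal{M}}_k$.
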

\begin{proof}
First, by the procedure of spectral rotation, we have  
\begin{shrinkfix}
\begin{align}
& \widehat{\mathbf{Q}},  \widehat{\mathbf{R}} = \underset{\mathbf{Q}, \mathbf{R}}{\min}\; \lVert \mathbf{Q} -  \widehat{\mathbf{U}}  {\mathbf{R}}\rVert_{\mathrm{F}}^2\;\;\; \mathrm{s.t.}\;\; \mathbf{R}^{\top} \mathbf{R} = \mathbf{I}, \; \mathbf{Q}\in \mathcal{I} \notag \\
\Leftrightarrow& \widehat{\mathbf{Q}},  \widehat{\mathbf{R}} = \underset{\mathbf{Q}, \mathbf{R}}{\min}\; \lVert \widehat{\mathbf{U}} - \mathbf{Q}   {\mathbf{R}}\rVert_{\mathrm{F}}^2\;\;\; \mathrm{s.t.}\;\; \mathbf{R}^{\top} \mathbf{R} = \mathbf{I}, \; \mathbf{Q}\in \mathcal{I}.
    \label{eq-theory-6}
    \end{align}
\end{shrinkfix}
Then, based on Assumption \ref{assump-1}, we can obtain that 
\begin{shrinkfix}
\begin{align}
    &\lVert \widehat{\mathbf{U}} - \widehat{\mathbf{Q}}   \widehat{\mathbf{R}}\rVert_{\mathrm{F}}^2 \leq (1+\epsilon)  \underset{\mathbf{Q}\in\mathcal{I}, \mathbf{R}^{\top}\mathbf{R} = \mathbf{I}}{\min}\lVert \widehat{\mathbf{U}} - {\mathbf{Q}}  {\mathbf{R}}\rVert_{\mathrm{F}}^2 \notag\\
    \Rightarrow&\lVert \widehat{\mathbf{U}} - \breve{\mathbf{U}} \rVert_{\mathrm{F}}^2 \leq (1+\epsilon) \lVert \widehat{\mathbf{U}} - \mathbf{U}\rVert_{\mathrm{F}}^2. 
 \label{eq-lemma-proof-1}
    \end{align}
\end{shrinkfix}
It is not difficult to obtain the following inequalities
\begin{shrinkfix}
\begin{align}
&\lVert \breve{\mathbf{U}} - \mathbf{U}\rVert_{\mathrm{F}}^2 \leq 2 
 \lVert \breve{\mathbf{U}} - \widehat{\mathbf{U}}\rVert_{\mathrm{F}}^2 + 2\lVert \widehat{\mathbf{U}} - \mathbf{U}\rVert_{\mathrm{F}}^2  \notag\\
 &\leq (4 +2\epsilon)  \lVert \widehat{\mathbf{U}} -{\mathbf{U}}\rVert_{\mathrm{F}}^2.
 \label{eq-lemma-proof-2}
    \end{align}
\end{shrinkfix}
The first inequality holds due to the basic inequality, and the second one holds due to \eqref{eq-lemma-proof-1}. Finally, according to the definition of $\delta_k$, we can obtain the conclusion \eqref{eq-lemma-1}.
    \label{proof-lemma}
       \vspace{-0.5em}
\end{proof}

We start our proof of Proposition \ref{theo-clustering performance}, which is inspired by \cite{kleindessner2019guarantees}. To incorporate the fairness constraint into the objective function of  \eqref{eq-prelim-8}, we let $\widehat{\mathbf{U}} = \mathbf{Z}\widehat{\mathbf{Y}}$, where $\widehat{\mathbf{Y}}$ contains the eigenvectors of $\mathbf{Z}^{\top}\widehat{\mathbf{L}}\mathbf{Z}$ corresponding to the $K$ smallest eigenvalues. Suppose that  $\bar{\mathbf{Y}}$ contains the eigenvectors of $\mathbf{Z}^{\top}\bar{\mathbf{L}}\mathbf{Z}$ corresponding to the $K$ smallest eigenvalues, where $\bar{\mathbf{L}}$ is the expected Laplacian matrix of the graphs generated by the vSBM method. We apply spectral rotation on $\widehat{\mathbf{U}}$ estimated from $\widehat{\mathbf{L}}$ by solving \eqref{eq-prelim-8}. For any $\mathbf{V}\in\mathbb{R}^{K\times K}$ satisfying $\mathbf{V}^{\top} \mathbf{V} = \mathbf{I},  \mathbf{V}\mathbf{V}^{\top} = \mathbf{I}$, it is not difficult to obtain
\begin{shrinkfix}
\begin{align}
\lVert 
&\mathbf{Z}\bar{\mathbf{Y}} - \mathbf{Z}\widehat{\mathbf{Y}} \mathbf{V}
\rVert_{\mathrm{F}}^2 = \mathrm{Tr}\left( (\bar{\mathbf{Y}} -\widehat{\mathbf{Y}} \mathbf{V})^{\top}\mathbf{Z}^{\top}\mathbf{Z}(\bar{\mathbf{Y}} -\widehat{\mathbf{Y}} \mathbf{V}) \right) \notag\\
= &\lVert 
\bar{\mathbf{Y}} - \widehat{\mathbf{Y}} \mathbf{V}
\rVert_{\mathrm{F}}^2.
    \label{eq-theory-9}
    \end{align}
\end{shrinkfix}
Therefore, we have 
\begin{shrinkfix}
\begin{align}
 &\underset{\mathbf{V}^{\top} \mathbf{V} = \mathbf{I},  \mathbf{V}\mathbf{V}^{\top} = \mathbf{I}}{\min}\; \lVert 
\mathbf{Z}\bar{\mathbf{Y}} - \mathbf{Z}\widehat{\mathbf{Y}} \mathbf{V}
\rVert_{\mathrm{F}}  
= \underset{\mathbf{V}^{\top} \mathbf{V} = \mathbf{I},  \mathbf{V}\mathbf{V}^{\top} = \mathbf{I}}{\min}\; \lVert 
\bar{\mathbf{Y}} - \widehat{\mathbf{Y}} \mathbf{V}
\rVert_{\mathrm{F}} \notag\\
\leq &\frac{8\sqrt{2K^3}}{D(c-d)}\lVert 
\mathbf{Z}^{\top}\bar{\mathbf{L}}\mathbf{Z} - \mathbf{Z}^{\top}\widehat{\mathbf{L}}\mathbf{Z} 
\rVert_2  
\leq\frac{8\sqrt{2K^3}}{D(c-d)}\lVert 
\mathbf{Z}^{\top}\bar{\mathbf{L}}\mathbf{Z} - \mathbf{Z}^{\top}\widehat{\mathbf{L}}\mathbf{Z} 
\rVert_{\mathrm{F}}.
    \label{eq-theory-8}
    \end{align}
\end{shrinkfix}
The first inequality holds due to \cite{kleindessner2019guarantees} and how we generate the ground-truth graph, and the second inequality holds due to norm inequality. On the other hand, we have 
\begin{shrinkfix}
\begin{align}
\lVert 
\mathbf{Z}\bar{\mathbf{Y}} - \mathbf{Z}\widehat{\mathbf{Y}} \mathbf{V}
\rVert_{\mathrm{F}} = \lVert 
\mathbf{Z}\bar{\mathbf{Y}}\mathbf{V}^{\top} - \mathbf{Z}\widehat{\mathbf{Y}} 
\rVert_{\mathrm{F}}.
    \label{eq-theory-10}
    \end{align}
\end{shrinkfix}
As in Lemma 6 of \cite{kleindessner2019guarantees}, we can choose $\bar{\mathbf{Y}}$ in such a way that $\mathbf{Z}\bar{\mathbf{Y}} = \mathbf{E}$, where $\mathbf{E}_{[i,:]} = \mathbf{E}_{[j,:]}$ if the vertices $i$ and $j$ are in the same cluster and $\lVert \mathbf{E}_{[i,:]} -\mathbf{E}_{[j,:]}\rVert_2 = \sqrt{2K/D}$ if the
vertices $i$ and $j$ are not in the same cluster. Futhermore,  multiplying $\mathbf{E}$ by $\mathbf{V}^{\top}$ will not change the properties of $\mathbf{E}$ since $\mathbf{V}^{\top}$ is a orthogonal matrix. Finally, according to Lemma \ref{lemma-1}, if we let $\delta_k = \sqrt{2K/D}$, then $\widetilde{\mathcal{M}}_k$ in Lemma \ref{lemma-1} is equivalent to $\mathcal{M}_k$. Furthermore, according to Lemma 5.3 in \cite{lei2015consistency}, if $\frac{4(4+2\epsilon)}{ \delta_k^2 }\lVert  {\mathbf{E}} \mathbf{V}^{\top} - {\mathbf{Z}}\widehat{\mathbf{Y}}\rVert_{\mathrm{F}}^2 \leq \frac{D}{K}$, we have
\begin{shrinkfix}
\begin{align}
    \sum_{k=1}^K |\mathcal{M}_k|&\leq \frac{4(4+2\epsilon)}{ \delta_k^2 }\lVert  {\mathbf{E}} \mathbf{V}^{\top} - {\mathbf{Z}}\widehat{\mathbf{Y}}\rVert_{\mathrm{F}}^2 \notag\\
    &\leq \frac{256(4+2\epsilon)K^2}{D(c-d)^2}\lVert \mathbf{Z}^{\top}\bar{\mathbf{L}}\mathbf{Z} - \mathbf{Z}^{\top}\widehat{\mathbf{L}}\mathbf{Z} 
    \rVert_{\mathrm{F}}^2.
\label{eq-theory-11}
    \end{align}
\end{shrinkfix}
Let $C_1 = \frac{256(4+2\epsilon)K^2}{D(c-d)^2}$, we have 
\begin{shrinkfix}
\begin{align}
 \sum_{k=1}^K |\mathcal{M}_k|&\leq  2C_1\underbrace{ \lVert \mathbf{Z}^{\top}\bar{\mathbf{L}}\mathbf{Z} - \mathbf{Z}^{\top}\mathbf{L}^*\mathbf{Z} 
    \rVert_{\mathrm{F}}^2}_{\mathcal{T}_1} + 2 C_1\underbrace{ \lVert \mathbf{Z}^{\top}\mathbf{L}^*\mathbf{Z} - \mathbf{Z}^{\top}\widehat{\mathbf{L}}\mathbf{Z}
    \rVert_{\mathrm{F}}^2}_{\mathcal{T}_2}.
\label{eq-theory-12}
    \end{align}
\end{shrinkfix}
The first term is the difference between the expected Laplacian matrix and the real matrix, which has been derived in \cite{kleindessner2019guarantees}. Specifically, for any $r_2>0$ and some $r_1>0$ satisfying $a \geq r_1 \ln{D}/D$, with probability at least $1-D^{-r_2}$, we have that there exist a constant $C_2(r_1, r_2)$ such that 
\begin{shrinkfix}
\begin{align}
   \mathcal{T}_1 \leq C_2(r_1, r_2) a D\ln{D}.
\label{eq-theory-13}
    \end{align}
\end{shrinkfix}
The second term $\mathcal{T}_2$ of \eqref{eq-theory-12} is the error between the Laplacian estimated by our model and the real one. Bringing \eqref{eq-theory-13} to \eqref{eq-theory-12}, we finally complete the proof.

\bibliographystyle{IEEEtran}
\bibliography{IEEEabrv, references}

\clearpage
\centering{\large{\textbf{Supplementary Materials}}}
\appendices
\justifying

\subsection{Several Extensions to The Proposed Model}

\textbf{1) Improved spectral rotation: } Improved spectral rotation is an improved version of \eqref{eq-form-1}, which is formulated as \cite{zhong2023self}:
\begin{shrinkfix}
\begin{align}
        &\underset{\mathbf{Q}, \mathbf{R}}{\min}\; \lVert \mathbf{Q}(\mathbf{Q}^{\top}\mathbf{Q})^{-\frac{1}{2}} - \mathbf{U}\mathbf{R}\rVert_{\mathrm{F}}^2\notag\\
        &\mathrm{s.t.}\;\mathbf{R}^{\top}\mathbf{R} = \mathbf{I}, \mathbf{Q}\in \mathcal{I}.
    \label{eq-exten-1}
\end{align}
\end{shrinkfix}
The improved spectral rotation can output a discrete label matrix $\mathbf{Q}$ that are closer to $\mathbf{U}\mathbf{R}$  since $\left(\mathbf{Q}(\mathbf{Q}^{\top}\mathbf{Q})^{-\frac{1}{2}}\right)^{\top}\mathbf{Q}(\mathbf{Q}^{\top}\mathbf{Q})^{-\frac{1}{2}} = (\mathbf{U}\mathbf{R})^{\top}\mathbf{U}\mathbf{R} = \mathbf{I}$, i.e., $\mathbf{Q}(\mathbf{Q}^{\top}\mathbf{Q})^{-\frac{1}{2}}$ and $\mathbf{U}\mathbf{R}$ are in the same space \cite{zhong2023self}. If we  employ the improved spectral rotation in \eqref{eq-formulation-final}, the model becomes 
\begin{shrinkfix}
\begin{align}
       & \underset{\mathbf{X},\mathbf{L}, \bm{\upsilon},\mathbf{Y},\mathbf{R},\mathbf{Q}}{\mathrm{min}}\,\, \frac{1} {N}\lVert 
    {\mathbf{\Upsilon}} (\mathbf{X}_o- \mathbf{X})\rVert_{\mathrm{F}}^2 + \frac{\xi}{N}\mathrm{Tr}(\mathbf{X}^{\top}\mathbf{L}\mathbf{X}) + Reg(\mathbf{L}) \notag\\
       &\;\;\;\;\;\;\;\;\;\;\;\;\;\;\;\;+ \mu \mathrm{Tr}(\mathbf{U}^{\top}\mathbf{L}\mathbf{U})+ \gamma \lVert\mathbf{Q}(\mathbf{Q}^{\top}\mathbf{Q})^{-\frac{1}{2}} - \mathbf{U}\mathbf{R}\rVert_{\mathrm{F}}^2\notag\\
       &\;\;\;\;\;\;\;\;\;\;\;\;\;\;\;\;+ \sum_{i=1}^D\frac{1}{\bm{\upsilon}_{[i]}}\notag\\
    & \mathrm{s.t.}\; \mathbf{L}\in \mathcal{L},\bm{\upsilon}>0, \mathbf{U}^{\top}\mathbf{U} = \mathbf{I}, \mathbf{F}^{\top}\mathbf{{U}} = \mathbf{0}, \mathbf{R}^{\top}\mathbf{R} = \mathbf{I}, \mathbf{Q} \in \mathcal{I}.
    \label{eq-exten-2}
\end{align}
\end{shrinkfix}


\textbf{2) Self-weighted feature importance: } 
To improve clustering performance, some works define feature weights to determine the importance of different features in assigning cluster labels \cite{nie2019semi, chen2018semi}. Specifically, given data matrix $\mathbf{X}$, we define weight matrix $\mathbf{\Psi} = \mathrm{diag}(\bm{\psi})\in \mathbb{R}^{N\times N}$, where $\bm{\psi}\in \mathbb{R}^{N}\geq 0$ and $\bm{\psi}^{\top}\mathbf{1} = 1$. The weighted $i$-th feature is $\mathbf{\Psi} \mathbf{X}_{[i,:]}^{\top}$. The weights $\mathbf{\Psi} $ can be directly learned from data. Thus, our model \eqref{eq-formulation-final} with self-weighted feature importance is formulated as 
\begin{shrinkfix}
\begin{align}
       & \underset{\mathbf{W},\bm{\upsilon},\mathbf{U},\mathbf{R},\mathbf{Q},\mathbf{\Psi},\mathbf{X}}{\mathrm{min}}\,\, \frac{1} {N}\lVert 
    {\mathbf{\Upsilon}} (\mathbf{X}_o- \mathbf{X})\rVert_{\mathrm{F}}^2  + \frac{\xi}{2N}\lVert \mathbf{W} \circ \mathbf{P}_{\psi} \rVert_{1,1} \notag\\
       &\;\;\;\;\;\;\;\;\;\;\;\;\;\;\;\;\;\;\;\;\;+ Reg_W(\mathbf{W}) + \mu \mathrm{Tr}(\mathbf{U}^{\top}\mathbf{L}\mathbf{U})+ \gamma \lVert \mathbf{Q} - \mathbf{U}\mathbf{R}\rVert_{\mathrm{F}}^2 \notag\\
       &\;\;\;\;\;\;\;\;\;\;\;\;\;\;\;\;\;\;\;\;\;+ \sum_{i=1}^D\frac{1}{\bm{\upsilon}_{[i]}}\notag\\
    & \mathrm{s.t.}\; \mathbf{W}\in \mathcal{W}, \mathbf{L} = \mathbf{D} - \mathbf{W},  \mathbf{U}^{\top}\mathbf{U} = \mathbf{I}, \mathbf{F}^{\top}\mathbf{{U}} = \mathbf{0}, \mathbf{R}^{\top}\mathbf{R} = \mathbf{I},  \notag\\
    &\;\;\;\;\;\;\mathbf{Q} \in \mathcal{I},(\mathbf{P}_{\psi})_{[ij]} = \left\lVert \mathbf{\Psi} \mathbf{X}_{[i,:]}^{\top} - \mathbf{\Psi} \mathbf{X}_{[j,:]}^{\top}\right\rVert_2^2,\bm{\upsilon}>0,\notag\\ 
     &\;\;\;\;\;\;\bm{\psi}^{\top}\mathbf{1} = 1, \mathbf{\Psi} = \mathrm{diag}(\bm{\psi}).
    \label{eq-exten-3}
\end{align}
\end{shrinkfix}

\textbf{3) Normalized spectral clustering:} The model \eqref{eq-formulation-final} is a unified framework based on unnormalized SC \cite{hagen1992new}. Here, we extend \eqref{eq-formulation-final} to normalized SC \cite{shi2000normalized}. The standard normalized spectral embedding is
\begin{shrinkfix}
\begin{align}
 \underset{\mathbf{U}}{\min}\; \mathrm{Tr}(\mathbf{U}^{\top}\mathbf{L}\mathbf{U}),\;\;\mathrm{s.t.} \mathbf{U}^{\top}\mathbf{D}\mathbf{U} = \mathbf{I}.  
 \label{eq-formulation-2}
\end{align}
\end{shrinkfix}
The fair constraint $\mathbf{F}^{\top}\mathbf{U} = \mathbf{0}$ also holds for normalized SC \cite{kleindessner2019guarantees}. Thus, our model based on the normalized SC is 
\begin{shrinkfix}
\begin{align}
       &  \underset{\mathbf{X},\mathbf{L}, \bm{\upsilon},\mathbf{Y},\mathbf{R},\mathbf{Q}}{\mathrm{min}}\,\, \frac{1} {N}\lVert 
    {\mathbf{\Upsilon}} (\mathbf{X}_o- \mathbf{X})\rVert_{\mathrm{F}}^2 + \frac{\xi}{N}\mathrm{Tr}(\mathbf{X}^{\top}\mathbf{L}\mathbf{X}) + Reg(\mathbf{L}) \notag\\
       &\;\;\;\;\;\;\;\;\;\;\;\;\;\;\;\;+ \mu \mathrm{Tr}(\mathbf{U}^{\top}\mathbf{L}\mathbf{U}) + \gamma \lVert \mathbf{Q} - \mathbf{U}\mathbf{R}\rVert_{\mathrm{F}}^2 + \sum_{i=1}^D\frac{1}{\bm{\upsilon}_{[i]}}\notag\\
    & \mathrm{s.t.}\; \mathbf{L}\in \mathcal{L}, \bm{\upsilon}> 0 ,\mathbf{U}^{\top}\mathbf{D}\mathbf{U} = \mathbf{I}, \mathbf{F}^{\top}\mathbf{{U}} = \mathbf{0}, \mathbf{R}^{\top}\mathbf{R} = \mathbf{I}, \mathbf{Q} \in \mathcal{I}.
 \label{eq-formulation-3}
\end{align}
\end{shrinkfix}

\textbf{4) Individual fairness:} Our model is based on group fairness, which induces the fairness constraint $\mathbf{F}^{\top}\mathbf{U} = \mathbf{0}$. The work \cite{gupta2021protecting} introduces individual fairness into SC, which induces a new fairness constraint $\mathbf{M}(\mathbf{I} - \frac{1}{D}\mathbf{1}\mathbf{1}^{\top})\mathbf{U} = \mathbf{0}$, where $\mathbf{M}\in\mathbb{R}^{D\times D}$ is a graph representing individual sensitive attributes. Our unified model based on individual fairness is 
\begin{shrinkfix}
\begin{align}
       &  \underset{\mathbf{X},\mathbf{L}, \bm{\upsilon},\mathbf{Y},\mathbf{R},\mathbf{Q}}{\mathrm{min}}\,\, \frac{1} {N}\lVert 
    {\mathbf{\Upsilon}} (\mathbf{X}_o- \mathbf{X})\rVert_{\mathrm{F}}^2 + \frac{\xi}{N}\mathrm{Tr}(\mathbf{X}^{\top}\mathbf{L}\mathbf{X}) + Reg(\mathbf{L}) \notag\\
       &\;\;\;\;\;\;\;\;\;\;\;\;\;\;\;\;+ \mu \mathrm{Tr}(\mathbf{U}^{\top}\mathbf{L}\mathbf{U}) + \gamma \lVert \mathbf{Q} - \mathbf{U}\mathbf{R}\rVert_{\mathrm{F}}^2+\sum_{i=1}^D\frac{1}{\bm{\upsilon}_{[i]}}\notag\\
    & \mathrm{s.t.}\; \mathbf{L}\in \mathcal{L}, \bm{\upsilon}> 0 ,\mathbf{U}^{\top}\mathbf{U} = \mathbf{I}, \mathbf{M}\left(\mathbf{I} - \frac{1}{D}\mathbf{1}\mathbf{1}^{\top}\right)\mathbf{U} = \mathbf{0},\notag\\ &\;\;\;\;\;\;\mathbf{R}^{\top}\mathbf{R} = \mathbf{I}, \mathbf{Q}, \in \mathcal{I}.
 \label{eq-formulation-3-1}
\end{align}
\end{shrinkfix}

\subsection{The Complete Algorithm Flow for Updating  \eqref{eq-opt-2-1}}
We use the algorithm in \cite{saboksayr2021accelerated} to solve problem \eqref{eq-opt-2-1}. The complete algorithm flow is presented in Algorithm \ref{alg:update_W}.

\begin{algorithm}[t] 
\caption{The algorithm for problem \eqref{eq-opt-2-1}} 
\begin{algorithmic}[1] 
\REQUIRE  $\beta, \mathbf{p}$, set $L = \frac{D-1}{2\beta}$\\
\ENSURE 
The learned graph ${\mathbf{w}}$\\
\STATE Initialize $\eta^{(1)} = 1$ and $\bm{\omega}^{(1)} = \mathbf{r}^{(0)} \in\mathbb{R}^D$ at random
\FOR{$t = 1, 2,..., $}

\STATE $\bar{\mathbf{w}}^{(t)} = \max\left(\frac{\mathbf{S}^{\top}\bm{\omega}^{(t)} - 2\mathbf{p}}{4\beta},0\right)$

\STATE $\mathbf{v}^{(t)} = \frac{\mathbf{S}\bar{\mathbf{w}}^{(t)} - L\bm{\omega}^{(t)} + \sqrt{(\mathbf{S}\bar{\mathbf{w}}^{(t)} - L\bm{\omega}^{(t)})^2 + 4 L\mathbf{1}} }{2}$

\STATE $\mathbf{r}^{(t)} = \bm{\omega}^{(t)} - L^{-1}\left( \mathbf{S}\bar{\mathbf{w}}^{(t)} - \mathbf{v}^{(t)}\right)$

\STATE $\eta^{(t+1)} = \frac{1+\sqrt{1+4(\eta^{(t)})^2}}{2}$

\STATE $\bm{\omega}^{(t+1)} = \mathbf{r}^{(t)} + \left(\frac{\eta^{(t)}-1}{\eta^{(t+1)}}\right) \left(\mathbf{r}^{(t)}- \mathbf{r}^{(t-1)}\right)$

\ENDFOR
\RETURN $\mathbf{w} = \max\left(\frac{\mathbf{S}^{\top}\mathbf{r}^{(t)} - 2\mathbf{p}}{4\beta},0\right)$

\end{algorithmic}
\label{alg:update_W} 
\end{algorithm}

\subsection{ The Formulation and  Algorithm for FJGSED}
The model FJGSED is formulated as 
\begin{shrinkfix}
\begin{align}
&\underset{\mathbf{W},\mathbf{U},\mathbf{Q}, \mathbf{R}}{\min}\sum_{i,j = 1}^D \lVert \mathbf{X}_{[i,:]} -\mathbf{X}_{[j,:]}\rVert_{2}^2 \mathbf{W}_{[i,j]} + \beta_{J} \mathbf{W}_{[i,j]}^2 \notag\\ 
&\;\;\;\;\;\;\;\;\;\;\;\;+ {\mu_J} \mathrm{Tr}(\mathbf{U}^{\top}\mathbf{L}\mathbf{U}) + \gamma_J \lVert \mathbf{Q} -  \mathbf{U} \mathbf{R} \rVert_{\mathrm{F}}^2\notag\\
&\mathrm{s.t.}\; \mathbf{W}\mathbf{1} = \mathbf{1}, \mathbf{W} \geq 0, \mathbf{L} = \mathbf{D} - \mathbf{W}, \mathbf{U}^{\top}\mathbf{U} = \mathbf{I}, \mathbf{F}^{\top}\mathbf{U} = \mathbf{0}\notag\\
&\;\;\;\;\; \;\;\mathbf{R}^{\top}\mathbf{R} = \mathbf{I}, \mathbf{Q} \in \mathcal{I}.
\label{eq-JGSCE}
\end{align}
\end{shrinkfix}
The framework of our algorithm for sloving \eqref{eq-JGSCE} is the same as Algorithm \ref{alg:1}, which alternately updates $\mathbf{W}, \mathbf{U}$, $\mathbf{R}$ and $\mathbf{ Q}$. The update of $\mathbf{U}$, $\mathbf{R}$, and $\mathbf{Q}$ is the same as Algorithm \ref{alg:1}. The main difference is updating $\mathbf{W}$/$\mathbf{L}$, and hence we discuss the update of $\mathbf{W}$ here. The corresponding sub-problem is
\begin{align}
\underset{\mathbf{W}}{\min}& \;\;\sum_{i,j}^D \lVert \mathbf{X}_{[i,:]} -\mathbf{X}_{[j,:]}\rVert_{2}^2 \mathbf{W}_{[i,j]} + \beta_{J} \mathbf{W}_{[i,j]}^2 + {\mu_J} \mathrm{Tr}(\mathbf{U}^{\top}\mathbf{L}\mathbf{U}) \notag\\
\mathrm{s.t.}\; & \mathbf{W}_{[i,:]}\mathbf{1} = 1, \mathbf{W}_{[i,:]} \geq 0, \mathbf{L} = \mathbf{D} - \mathbf{W}.
\label{eq-fairJ-2}
\end{align}
We can rewrite the problem as 
\begin{align}
&\underset{\mathbf{W}\mathbf{1} = \mathbf{1}, \mathbf{W} \geq 0}{\min} \sum_{i,j = 1}^D\; \lVert \mathbf{X}_{[i,:]} - \mathbf{X}_{[j,:]} \rVert_2^2 \mathbf{W}_{[ij]} \notag\\
&\;\;\;\;\;\;\;\;\;\;\;\;\;\;\;\;+  \frac{\mu_{J}}{2} \lVert \mathbf{U}_{[i,:]} - \mathbf{U}_{[j,:]} \rVert_2^2 \mathbf{W}_{[ij]}  
+ \beta_J \mathbf{W}^2_{[ij]}   
\label{eq-fairJ-3}
\end{align}
Let $ \mathbf{C}_{[ij]} = \lVert \mathbf{X}_{[i,:]} - \mathbf{X}_{[j,:]} \rVert_2^2 +  \frac{\mu_{J}}{2} \lVert \mathbf{U}_{[i,:]} - \mathbf{U}_{[j,:]} \rVert_2^2$, and the problem \eqref{eq-fairJ-3} can be optimized for each row, i.e. for $i = 1,...,D$,
\begin{align}
&\underset{\mathbf{W}_{[i,:]}}{\min} \sum_{j = 1}^D\;  \mathbf{C}_{[ij]}\mathbf{W}_{[ij]} + \beta_J \mathbf{W}^2_{[ij]}\;\;  \mathrm{s.t.}\; \mathbf{W}_{[i,:]}\mathbf{1} = 1, \mathbf{W}_{[i,:]} \geq 0\notag, \\
\Rightarrow & \underset{\mathbf{W}_{[i,:]}}{\min} \left\lVert \mathbf{W}_{[i,:]} + \frac{1}{2\beta_J} \mathbf{C}_{[i,:]}\right\rVert_2^2  \;\;\mathrm{s.t.}\; \mathbf{W}_{[i,:]}\mathbf{1} = 1, \mathbf{W}_{[i,:]} \geq 0.
\label{eq-fairJ-4}
\end{align}
It defines a squared Euclidean distance on a simplex constraint. Inspired by  \cite{peng2023jgsed}, we update $\mathbf{W}_{[i,:]}$ as 
\begin{align}
\mathbf{W}_{[i,j]} = \max\left(\frac{\mathbf{C}_{[i,l+1]}- \mathbf{C}_{[i,j]}}{l\mathbf{C}_{[i,l+1]} - \sum_{j=1}^l\mathbf{C}_{[i,j]}},0\right) ,
\label{eq-fairJ-5}
\end{align}
where $l$ is a hyper-parameter determining the number of neighbor nodes of the learned graphs. We select $l$ instead of $\beta_J$ as the model parameters.

We iteratively update  $\mathbf{W}$, $\mathbf{U}, \mathbf{R}$, and $\mathbf{Q}$ until convergence. The complete algorithm is shown in Algorithm \ref{alg:FJGSED}

\begin{algorithm}[t] 
\caption{The algorithm for FJGSED} 
\begin{algorithmic}[1] 
\REQUIRE ~~\\ 
 $\mathbf{X}$, the number of clusters $K$, parameters $l, \mu_J, \gamma_J$\\
\ENSURE ~~\\ 
The learned graph $\mathbf{W}$, the cluster indicator matrix  $\mathbf{Q}$
\STATE Initialize  $\mathbf{W}$, $\mathbf{U}$, $\mathbf{Q} $, and $\mathbf{R}$\\

\WHILE{not converged}

\STATE Update $\mathbf{W}$ via \eqref{eq-fairJ-5}

\STATE Update $\mathbf{Y}$ by solving \eqref{eq-opt-4}, and let $\mathbf{U} = \mathbf{Z}\mathbf{Y}$

\STATE Update $\mathbf{R}$ as $\mathbf{R} = \mathbf{\Theta}_{R}\mathbf{\Theta}_{L}^{\top}$

\STATE Update $\mathbf{Q}$ via \eqref{eq-opt-8}

\ENDWHILE
\end{algorithmic}
\label{alg:FJGSED} 
\end{algorithm}

\begin{algorithm}[t] 
\caption{The algorithm for FSRSC} 
\begin{algorithmic}[1] 
\REQUIRE ~~\\ 
 $\mathbf{X}$, the number of clusters $K$, parameters $\gamma_U, \mu_J, \gamma_J$\\
\ENSURE ~~\\ 
The learned graph $\mathbf{W}$, the cluster indicator matrix  $\mathbf{Q}$
\STATE Initialize  $\mathbf{W}$, $\mathbf{U}$, $\mathbf{\Gamma}$, $\mathbf{Q} $, and $\mathbf{R}$\\

\WHILE{not converged}

\STATE Update $\mathbf{A}$ via \eqref{eq-fairUSPC-6}

\STATE Update $\mathbf{W}$ via \eqref{eq-fairUSPC-11}

\STATE $\mathbf{W} = \max(\mathbf{W}, 0) $ and let $\mathrm{diag(}\mathbf{W}) = \mathbf{0}$.

\STATE $\mathbf{W} =\frac{1}{2}(\mathbf{W}^{\top} + \mathbf{W})$.

\STATE Update $\mathbf{\Gamma}$ as $\mathbf{\Gamma} = \mathbf{\Gamma} + \gamma_U(\mathbf{A}- \mathbf{W})$

\STATE Update $\mathbf{U}$ by solving \eqref{eq-opt-4}

\STATE Update $\mathbf{R}$ as $\mathbf{R} = \mathbf{\Theta}_{R}\mathbf{\Theta}_{L}^{\top}$

\STATE Update $\mathbf{Q}$ via \eqref{eq-opt-8}

\ENDWHILE
\end{algorithmic}
\label{alg:FSRSC} 
\end{algorithm}

\subsection{ The Formulation And Algorithm For FSRSC}
The model FSRSC is formulated as 
\begin{shrinkfix}
\begin{align}
&\underset{\mathbf{W},\mathbf{U},\mathbf{Q}, \mathbf{R}} {\min} \lVert \mathbf{X} - \mathbf{W}^{\top} \mathbf{X} \rVert_{\mathrm{F}}^2 + \alpha_{U}  \lVert \mathbf{W} \rVert_{1,1}  + {\mu_U} \mathrm{Tr}(\mathbf{U}^{\top}\mathbf{L}\mathbf{U}) \notag\\ 
&\;\;\;\;\;\;\;\;\;\;\;\;\;+ \gamma_U \lVert \mathbf{Q} -  \mathbf{U} \mathbf{R} \rVert_{\mathrm{F}}^2\notag\\
&\mathrm{s.t.}\; \mathbf{W} \in \mathcal{W},  \mathbf{U}^{\top}\mathbf{U} = \mathbf{I}, \mathbf{F}^{\top}\mathbf{U} = \mathbf{0}, \mathbf{R}^{\top}\mathbf{R} = \mathbf{I}, \mathbf{Q} \in \mathcal{I}.
\label{eq-FSRSC}
\end{align}
\end{shrinkfix}
The framework of our algorithm for \eqref{eq-FSRSC} is the same as Algorithm \ref{alg:1}, which alternately updates $\mathbf{W}, \mathbf{U}$, $\mathbf{R}$ and $\mathbf{ Q}$. The update of $\mathbf{U}$, $\mathbf{R}$, and $\mathbf{Q}$ is the same as Algorithm \ref{alg:1}. The main difference is updating $\mathbf{W}$, and hence we discuss the update of $\mathbf{W}$ here. The corresponding sub-problem is
\begin{align}
\underset{\mathbf{W}} {\min} &\lVert \mathbf{X} - \mathbf{W}^{\top} \mathbf{X} \rVert_{\mathrm{F}}^2 + \alpha_{U}  \lVert \mathbf{W} \rVert_{1,1}  + {\mu_U} \mathrm{Tr}(\mathbf{U}^{\top}\mathbf{L}\mathbf{U}) \notag\\ 
\mathrm{s.t.}\; &\mathrm{diag}(\mathbf{W}) = \mathbf{0}, \mathbf{W}^{\top} = \mathbf{W}, \mathbf{W} \geq 0, \mathbf{L} = \mathbf{D} - \mathbf{W}.
\label{eq-fairUSPC-2}
\end{align}
We use the augmented Lagrange multiplier (ALM) type of method to solve the problem \eqref{eq-fairUSPC-2}. Let us first introduce an auxiliary variable $ \mathbf{A}$ here
\begin{align}
\underset{\mathbf{W}} {\min} &\lVert \mathbf{X} - \mathbf{W}^{\top} \mathbf{X} \rVert_{\mathrm{F}}^2 + \alpha_{U}  \lVert \mathbf{A} \rVert_{1,1}  + {\mu_U} \mathrm{Tr}(\mathbf{U}^{\top}\mathbf{L}\mathbf{U}) \notag\\ 
\mathrm{s.t.}\; &\mathrm{diag}(\mathbf{W}) = \mathbf{0}, \mathbf{W}^{\top} = \mathbf{W}, \mathbf{W} \geq 0, \mathbf{L} = \mathbf{D} - \mathbf{W}, \mathbf{A} = \mathbf{W}.
\label{eq-fairUSPC-3}
\end{align}
The augmented Lagrangian function of the problem is 
\begin{align}
&Lag(\mathbf{W}, \mathbf{A}, \mathbf{\Gamma}) \notag \\
=& \lVert \mathbf{X} - \mathbf{W}^{\top} \mathbf{X} \rVert_{\mathrm{F}}^2 + \alpha_{U}  \lVert \mathbf{A} \rVert_{1,1}  + {\mu_U} \mathrm{Tr}(\mathbf{U}^{\top}\mathbf{L}\mathbf{U}) \notag\\
&+ \frac{\gamma_U}{2} \left\lVert  \mathbf{A} -  \mathbf{W} + {\mathbf{\Gamma}}/{\gamma_U}\right\rVert_2^2.,
\label{eq-fairUSPC-4}
\end{align}
where $\gamma_U$ is the Lagrangian constant. In the ALM algorithm, we update $\mathbf{W}, \mathbf{\Gamma}$, and $\mathbf{A}$ in an alternating manner. We first fix $\mathbf{W}, \mathbf{\Gamma}$ and update $\mathbf{A}$.  Let $\mathbf{J} = \mathbf{W} - \frac{\mathbf{\Gamma}}{\gamma_U}$, the optimization problem is 
\begin{align}
\underset{\mathbf{A}} {\min} \;\;\alpha_{U}  \lVert \mathbf{A} \rVert_{1,1} +   \frac{\gamma_U}{2} \left\lVert  \mathbf{A} -  \mathbf{J}\right\rVert_2^2,
\label{eq-fairUSPC-5}
\end{align}
which can be updated elementwise as 
\begin{align}
\mathbf{A}_{[ij]} = \max\left(|\mathbf{J}_{[ij]}| - \frac{\alpha_{U}}{\gamma_U} ,0\right) \mathrm{sign}\left(\mathbf{J}_{[ij]}\right).
\label{eq-fairUSPC-6}
\end{align}
Then, we fix  $\mathbf{A}, \mathbf{\Gamma}$ and update $\mathbf{W}$. Let $\widetilde{\mathbf{J}} =  \mathbf{A} + \frac{\mathbf{\Gamma}}{\gamma_U}$, and we have
\begin{align}
&\underset{\mathbf{W}}{\min} \; \lVert \mathbf{X} - \mathbf{W}^{\top} \mathbf{X} \rVert_{\mathrm{F}}^2 + {\mu_U} \mathrm{Tr}(\mathbf{U}^{\top}\mathbf{L}\mathbf{U}) + \frac{\gamma_U}{2} \lVert \mathbf{W} - \widetilde{\mathbf{J}} \rVert_{\mathrm{F}}^2, \notag\\
&\mathrm{s.t.}\; \mathrm{diag}(\mathbf{W}) = \mathbf{0}, \mathbf{W}^{\top} = \mathbf{W}, \mathbf{W} \geq 0,
\label{eq-fairUSPC-7}
\end{align}
which is equivalent to 
\begin{align}
&\underset{\mathbf{W}}{\min} \, g(\mathbf{W})\notag\\ 
:=&\underset{\mathbf{W}}{\min} \; 
\mathrm{Tr}\left( \mathbf{W}^{\top}\mathbf{X}\mathbf{X}^{\top}\mathbf{W} - 2 \mathbf{X}\mathbf{X}^{\top}\mathbf{W}^{\top}\right)  + \frac{\mu_{U}}{2} \lVert\mathbf{W} \circ \mathbf{P}_{U} \rVert_{1,1}\notag\\
&+ \frac{\gamma_U}{2} \mathrm{Tr}\left(\mathbf{W}^{\top}\mathbf{W} - 2 \widetilde{\mathbf{J}}^{\top}\mathbf{W} \right)  \notag\\
&\mathrm{s.t.}\; \mathrm{diag}(\mathbf{W}) = \mathbf{0},\mathbf{W}^{\top} = \mathbf{W}, \mathbf{W} \geq 0,
\label{eq-fairUSPC-8}
\end{align}
where $\mathbf{P}_{U} $ is the pair-wise distance matrix of $\mathbf{U}$. For every column of $\mathbf{W}$, we have the following problem 
\begin{align}
&\underset{\mathbf{W}_{[:,i]}}{\min} \, g(\mathbf{W}_{[:,i]})\notag\\
=& \underset{\mathbf{W}_{[:,i]}}{\min} \,\mathbf{W}_{[:,i]}^{\top} \left(\frac{\gamma_U}{2}\mathbf{I} + \mathbf{X}\mathbf{X}^{\top} \right) \mathbf{W}_{[:,i]}\notag\\
&+ \left(\frac{\mu_U }{2} (\mathbf{P}_{U})^{\top}_{[:,i]} - \gamma_U \widetilde{\mathbf{J}}^{\top}_{[:,i]} - 2 (\mathbf{X}\mathbf{X}^{\top})_{[i,:]} \right)\mathbf{W}_{[:,i]}.
\label{eq-fairUSPC-9}
\end{align}
We calculate the derivative of $g(\mathbf{W}_{[:,i]})$ and have 
\begin{align}
&\nabla_{\mathbf{W}_{[:,i]}}g(\mathbf{W}_{[:,i]}) \notag \\
= &2 \left(\frac{\gamma_U}{2}\mathbf{I} + \mathbf{X}\mathbf{X}^{\top} \right)\mathbf{W}_{[:,i]} + \frac{\mu_U}{2} (\mathbf{P}_{U})_{[:,i]} - \gamma_U \widetilde{\mathbf{J}}_{[:,i]} - 2 (\mathbf{X}\mathbf{X}^{\top})_{[:,i]}.
\label{eq-fairUSPC-10}
\end{align}
Let $\nabla_{\mathbf{W}_{[:,i]}}g(\mathbf{W}_{[:,i]}) = \mathbf{0}$, and we obtain 
\begin{align}
\mathbf{W}_{[:,i]} =\left({\gamma_U}\mathbf{I} + 2\mathbf{X}\mathbf{X}^{\top} \right)^{-1}\left( \gamma_U \widetilde{\mathbf{J}}_{[:,i]} + 2 (\mathbf{X}\mathbf{X}^{\top})_{[:,i]} - \frac{\mu_U}{2} (\mathbf{P}_{U})_{[:,i]} \right).
\label{eq-fairUSPC-11}
\end{align}
After updating all columns of $\mathbf{W}$, we project $\mathbf{W}$ into the constraints $\mathrm{diag}(\mathbf{W}) = \mathbf{0}, \mathbf{W}^{\top} = \mathbf{W}, \mathbf{W} \geq 0$. 

Finally, we fix $\mathbf{W}, \mathbf{A}$ and update $\mathbf{\Gamma}$, i.e., $\mathbf{\Gamma} = \mathbf{\Gamma} + \gamma_U(\mathbf{A}- \mathbf{W})$.

After updating  $\mathbf{W}, \mathbf{A}$, and $\mathbf{\Gamma}$, we then update $\mathbf{U}, \mathbf{R}$, and $\mathbf{Q}$ by following Algorithm \ref{alg:1}. We iteratively update  $\mathbf{W}, \mathbf{A}$, $\mathbf{\Gamma}$,  $\mathbf{U}, \mathbf{R}$, and $\mathbf{Q}$ until convergence. The complete algorithm flow is shown in Algorithm \ref{alg:FSRSC}








\end{document}